\newtheorem{thm}{Theorem}
\begin{document}

\newcommand{\nMaxOmegaMin}{24\xspace}
\newcommand{\cUpperBoundOmegaMin}{1.89\xspace} 
\newcommand{\Languages}{93\xspace} 
\newcommand{\Families}{19\xspace}
\newcommand{\IndoeuropeanLanguages}{51\xspace} 
\newcommand{\IncreaseFutrellEtAlPNAS}{56\xspace} 
\newcommand{\IncreaseFutrellEtAlLanguage}{40\xspace} 
\newcommand{\IncreaseGulordavaMerloOptimality}{71\xspace} 

\newcommand{\SentencePUD}{1000\xspace}
\newcommand{\SentencePUDAfterReparallelization}{995\xspace}

\newcommand{\TreebanksUD}{163\xspace}
\newcommand{\LanguagesUD}{92\xspace} 
\newcommand{\IndoeuropeanLanguagesUD}{51\xspace} 
\newcommand{\UralicLanguagesUD}{11\xspace}
\newcommand{\DravidianLanguagesUD}{2\xspace}
\newcommand{\AustronesianLanguagesUD}{2\xspace}

\newcommand{\TreebanksHamleDT}{30\xspace}
\newcommand{\LanguagesHamleDT}{30\xspace}
\newcommand{\IndoeuropeanLanguagesHamleDT}{21\xspace}

\newcommand{\LanguagesPUD}{20\xspace} 

\newcommand{\E}[0]{\mathbb{E}}
\newcommand{\V}[0]{\mathbb{V}} 

\newcommand{\meanD}[0]{\bar{d}}

\newcommand{\optimalityL}[0]{\left< \Omega \right>}
\newcommand{\optimalityLrla}[0]{\left< \Omega \right>_{rla}}
\newcommand{\optimalityn}[0]{\left< \Omega \right>(n)}

\newcommand{\optimalityLD}[0]{\left< D \right>} 
\newcommand{\optimalitynD}[0]{\left< D \right>(n)} 
\newcommand{\optimalityLmeanD}[0]{\left< \meanD \right>}
\newcommand{\optimalitynmeanD}[0]{\left< \meanD \right>(n)}
\newcommand{\optimalityLDelta}[0]{\left< \Delta \right>}
\newcommand{\optimalitynDelta}[0]{\left< \Delta \right>(n)}
\newcommand{\optimalityLGamma}[0]{\left< \Gamma \right>}
\newcommand{\optimalitynGamma}[0]{\left< \Gamma \right>(n)}
\newcommand{\optimalityLDz}[0]{\left< D_z \right>}
\newcommand{\optimalitynDz}[0]{\left< D_z \right>(n)}

\newtoggle{arxiv}
\togglefalse{arxiv}

\title{The optimality of syntactic dependency distances}

\author{Ramon Ferrer-i-Cancho$^1$}
 \email{rferrericancho@cs.upc.edu}
 \homepage{http://www.cs.upc.edu/~rferrericancho/} 

\author{Carlos G\'{o}mez-Rodr\'{i}guez$^2$}
  \email{carlos.gomez@udc.es}
  \homepage{http://www.grupolys.org/~cgomezr/}

\author{Juan Luis Esteban$^3$}
  \email{esteban@cs.upc.edu}
  \homepage{https://www.cs.upc.edu/~esteban/}

\author{Llu\'is Alemany-Puig$^1$}
  \email{lluis.alemany.puig@upc.edu }
  \homepage{https://cqllab.upc.edu/people/lalemany/}

 \affiliation{$^1$Complexity and Quantitative Linguistics Lab \\
  LARCA Research Group \\
  Departament de Ci\`encies de la Computaci\'o \\
  Universitat Polit\`ecnica de Catalunya \\
  Campus Nord, Edifici Omega\\
  Jordi Girona Salgado 1-3 \\
  08034 Barcelona, Catalonia, Spain
 }
 \affiliation{$^2$Universidade da Coru\~na, CITIC\\
  FASTPARSE Lab, LyS Research Group\\
  Departamento de Ciencias de la Computaci\'on y Tecnolog\'ias de la Informaci\'on \\
  Facultade de Inform\'atica, Elvi\~{n}a, 
  15071, A Coru\~na, Spain
 }
 \affiliation{$^3$Departament de Ci\`encies de la Computaci\'o \\
  Universitat Polit\`ecnica de Catalunya (UPC) \\
  Campus Nord, Edifici Omega\\
  Jordi Girona Salgado 1-3 \\
  08034 Barcelona, Catalonia, Spain.
 }


\date{\today}

\begin{abstract}
It is often stated that human languages, as other biological systems, are shaped by cost-cutting pressures but, to what extent? Attempts to quantify the degree of optimality of languages by means of an optimality score have been scarce and focused mostly on English. Here we recast the problem of the optimality of the word order of a sentence as an optimization problem on a spatial network where the vertices are words, arcs indicate syntactic dependencies and the space is defined by the linear order of the words in the sentence. We introduce a new score to quantify the cognitive pressure to reduce the distance between linked words in a sentence. 
The analysis of sentences from \Languages languages representing \Families linguistic families reveals that half of languages are optimized to a $70\%$ or more. The score indicates that distances are not significantly reduced in a few languages and confirms two theoretical predictions, i.e. that longer sentences are more optimized and that distances are more likely to be longer than expected by chance in short sentences. We present a new hierarchical ranking of languages by their degree of optimization. The new score has implications for various fields of language research (dependency linguistics, typology, historical linguistics, clinical linguistics and cognitive science). Finally, the principles behind the design of the score have implications for network science. 
\end{abstract}

\maketitle

\iftoggle{arxiv}{\tableofcontents}{}

\section{Introduction}


Optimization has shed light on many features of biological systems \cite{Alexander1996a, Parker1990a, Perez-Escudero2009a}. Some examples are the genetic code \cite{Itzkovitz2007a,Kumar2016a}, neural structures \cite{Klyachko2003, Chen2006a,Perez-Escudero2007a}, metabolic networks \cite{Segre2002a, Berkhout2012a} and animal behavior \cite{Alexander1996a,Gustison2016a}. Understanding why these systems deviate from optimality is a fundamental research problem \cite{Perez-Escudero2009a}.
In the domain of human communication, it has been argued that languages are shaped by cost-cutting pressures \cite{Zipf1949a,Ferrer2015b,Gibson2019a} but evaluations of the degree of optimality of languages have been rather scarce and limited to a few Indo-European languages in most of the cases \cite{Hawkins1998a,Ferrer2004b,Tily2010a,Gulordava2015,Gulordava2016a}.  
Seven fundamental questions for a research program on language efficiency are: 
\begin{enumerate}
\item
Are there universal optimization principles in languages? Or, are there languages showing no optimization at all?
\item
What is the degree of optimization of languages?
\item
What are the most and the least optimized languages of the world?
\item
Under what conditions is optimization stronger?
\item
Why do languages deviate from optimality?
\item
What are the best optimality scores for cross-linguistic research?
\item
What is the contribution of function words to the optimization of languages?
\end{enumerate}

Here we approach these questions from the perspective of the word order of a sentence, one of the main components of the complexity of human languages \cite{joshi85,Fenk-Oczlon2008a}. 
As word order optimization is a multiconstraint satisfaction problem \cite{Ferrer2012d,Gildea2020a,Hahn2020a}, we focus on one of its dimensions, namely the distance between syntactically related words in a sentence. We introduce a new optimality score $\Omega$ and investigate its theoretical and empirical properties. 

\begin{figure}
\centering
\includegraphics[width = \linewidth]{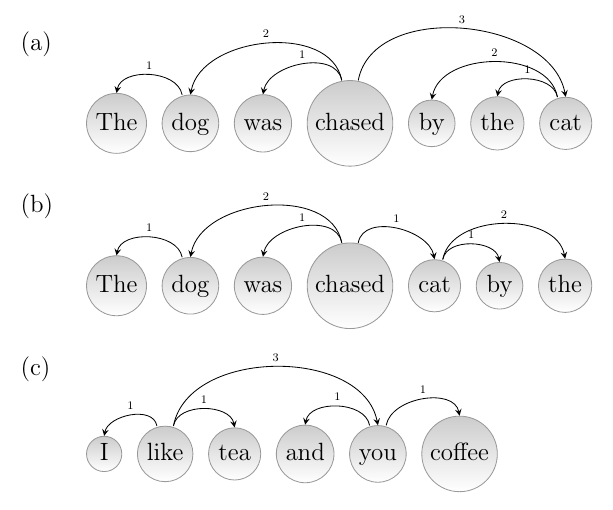}
\caption{\label{syntactic_dependency_tree_figure} Examples of syntactic dependency structures. Edges go from a head word to a dependent word \cite{Tesniere1959a,melcuk88}. Here edge labels indicate dependency distances (in words). Link direction is irrelevant for calculating edge distance. $D$, the true sum of dependency distances, is compared against two baselines assuming that the network is fixed: $D_{min}$, the minimum value of $D$ and, $D_{rla}$, the average value of $D$ over all linear arrangements. $\Omega$ is the new optimality score. (a) A sentence with $n=7$, $D = 10$, $D_{rla}= 16$, $D_{min}=8$, 
and $\Omega= 3/4$. Borrowed from Example 1 in \url{https://universaldependencies.org/introduction.html}. (b) An "ungrammatical" reordering of the sentence in (a) giving $D = D_{min}=8$ and $\Omega= 1$. (c) A sentence where $n=6$, $D = D_{min}=7$, $D_{rla} = 35/3$, 
and $\Omega = 1$. Borrowed from Example 12 in \url{https://universaldependencies.org/u/overview/specific-syntax.html}. }
\end{figure}


The problem of word order in sentences goes beyond human language: the syntactic structure of a sentence is actually a particular case of spatial or geographical network, a network where vertices are embedded into a space \cite{Barthelemy2011a, Reuven2010a_Chapter8}.
That structure can be represented as a network, typically a directed tree, where vertices are words and links indicate syntactic dependencies between words (Fig. \ref{syntactic_dependency_tree_figure}) \cite{Tesniere1959a,melcuk88}. The space has one dimension, defined by the linear order of the elements of the sentence as in Fig. \ref{syntactic_dependency_tree_figure}.

Popular examples of spatial networks in two dimensions are subway networks or road networks \cite{Barthelemy2011a}. Edge distance or edge length, defined as the distance between the two linked vertices, is biased towards low values in syntactic dependency networks \cite{Liu2017a,Temperley2018a} and other spatial networks \cite{Gastner2006a, Ercsey2013a}. Indeed, the probability that two connected words are at a certain distance decays with distance \cite{Ferrer2004b}, as in networks of individuals communicating with cell phones \cite{Lambiotte2008a} or real neural networks \cite{Ercsey2013a}. In syntactic dependency networks, the distance between vertices is typically measured in words (consecutive words are at distance 1, words separated by just one word are at distance 2, and so on) and the distance of an edge is defined as the distance between the two words that are syntactically related \cite{Liu2017a} (Fig. \ref{syntactic_dependency_tree_figure}). Since the discovery that dependency distances are smaller than expected by a shuffling of the words of a sentence \cite{Ferrer2004b}, edge distances have been shown to be significantly small in real syntactic dependency networks from many languages with the help of random baselines of varying complexity \cite{Gildea2007a, Liu2008a, Park2009a, Gildea2010a, Ferrer2013c, Futrell2015a, Futrell2020a}. These findings provide direct support for a principle of dependency distance minimization (DDm) \cite{Ferrer2004b} that would stem from a general principle of distance minimization (Dm) in languages \cite{Ferrer2017c}. Other instances of Dm would be swap distance minimization in word order \cite{Ferrer2016c} or topological distance minimization \cite{Ferrer2003a}. DDm predicts the scarcity of crossings between syntactic dependencies in languages \cite{Gomez2016a,Gomez2019a}.
Such pressure to reduce dependency distances is believed to originate from decay of activation and increased chances of interference as the distance between syntactically connected words increases \cite{Liu2017a,Temperley2018a}. Thus $D$, the sum of the dependency lengths of a sentence (Fig. \ref{syntactic_dependency_tree_figure}), is a measure of cognitive cost.



Although dependency distances are significantly small, they are not minimum \cite{Ferrer2004b, Gildea2007a, Park2009a, Gildea2010a, Tily2010a, Futrell2015a, Gulordava2015, Futrell2020a}. Assuming that the network is constant, $D$ cannot be smaller than $D_{min}$, the minimum value of $D$ achievable by rearranging the words of that sentence while keeping the dependencies between them (Fig. \ref{syntactic_dependency_tree_figure}).
This baseline led to the first optimality score for dependency distances \cite{Ferrer2004b,Tily2010a,Gulordava2016a},  
\begin{equation}
\Gamma = \frac{D}{D_{min}},
\label{Gamma_equation}
\end{equation} 
i.e. a measure of dependency distance in units of $D_{min}$. For fully optimized sentences, $\Gamma = 1$.  
$\Gamma$ was found to exceed $1.2$ on average for sufficiently long sentences in Romanian \cite{Ferrer2004b}. 
The finding that $\Gamma$ tended to be higher in longer sentences was interpreted as less evidence of DDm in longer sentences \cite{Ferrer2004b}. However, we will show how this interpretation is misleading.

The fact that actual dependencies are not minimum has also been confirmed with more restrictive baselines where only projective linearizations of the sentence are allowed \cite{Gildea2007a, Park2009a, Gildea2010a, Tily2010a, Futrell2015a, Gulordava2015}. These results indicate that $D$ is greater than $D_{min}$ for sufficiently large sentences but no quantitative measurement of the degree of optimality has been made.
As in many biological problems \cite{Perez-Escudero2009a}, a crucial question is: if DDm is a fundamental principle of languages, why are real languages not reaching the theoretical minimum sum of dependency lengths? (Question 5). Here we aim to provide a coherent answer to this question. 

In this article, we present $\Omega$, a new measure of the degree of optimality of syntactic dependency lengths, that takes a value of 1 when distances are minimized, negative values when distances are maximized, and is expected to be zero in shuffled sentences. We explain its advantages over $\Gamma$ and other scores (Question 6). Our mathematical analysis reveals that raw $D$ and its variants are poor optimality scores and also poor indicators of pressure for DDm although they are the scores used in the overwhelming majority of studies on DDm \cite{Liu2017a,Temperley2018a}.    
We use $\Omega$ to investigate DDm in a sample of \Languages languages from \Families families, \IncreaseFutrellEtAlLanguage more languages than in the most recent large-scale study of dependency distances \cite{Futrell2020a} and \IncreaseGulordavaMerloOptimality more languages than in the broadest previous study of the degree of optimality of dependency distances \cite{Gulordava2016a}.
We find massive evidence that $\Omega$ is significantly large with respect to shuffled sentences (Question 1). 
We introduce a hierarchical method to rank languages by $\Omega$ that is a significant improvement over previous attempts based on a total ordering of languages by mean dependency distance \cite{Liu2008a}. Thanks to this method, we identify highly and less optimized languages and uncover more than 140 previously unreported relationships of the form {\em language $x$ is more optimized than language $y$} (Question 3).
We also find that languages are optimized to a $70\%$ on average (Question 2) and confirm two predictions of a mathematical theory of word order \cite{Ferrer2014a, Ferrer2019a}. First, we support the prediction that pressure to minimize dependency lengths is larger for longer sentences, in particular, we find that $\Omega$ tends to grow with sentence length (Question 4) 
although a naive interpretation of the massive evidence of a positive correlation between dependency distance and sentence length \cite{Ferrer2004b,Ferrer2013c,Jiang2015a,Park2009a,Futrell2015a} suggests that $\Omega$ should decrease as sentence length increases.
Second, we support the prediction that DDm is more likely to be surpassed by other word order principles in short sequences, i.e. negative and significantly small values of $\Omega$ are found in sentences of $3$ or $4$ words (Question 5). Thanks to the statistical power of $\Omega$, we find more evidence of anti-DDm effects in short sentences than in previous research \cite{Ferrer2019a}. 
Finally, we show that function words contribute to increase the degree of optimality of dependency distances (Question 7).

\section{Optimality scores}

\label{optimality_metric_section}

The syntactic structure of a sentence is two-fold: a network, i.e. a tree, and a linear arrangement of the vertices that is defined by the ordering of the words of the sentence (Fig. \ref{syntactic_dependency_tree_figure}). 
We aim to investigate optimality scores that measure the extent to which $D$ has been optimized, assuming that the network remains constant and the linear arrangement can vary. The simplest score would consist of using $D$ itself, but $D$ is uninformative unless a baseline or reference is taken into account. Two baselines have been considered for $D$: $D_{min}$ and $D_{rla}$, namely the minimum and the average of $D$ over the $n!$ different linear arrangements of a given sentence of $n$ words \cite{Ferrer2004b}. $D_{min}$ is the solution of the minimum linear arrangement problem of computer science on a tree \cite{Diaz2002,Chung1984}.
A uniformly random linear arrangement (rla) is a linear arrangement 
chosen randomly in such a way that every possible arrangement has the same probability, namely $1/n!$.
Such random arrangements are our random baseline and our null hypothesis. Fig. \ref{syntactic_dependency_tree_figure} shows a sentence where $D>D_{min}$ and a sentence where $D = D_{min}$, while $D < D_{rla}$ in both cases.
 
$D_{rla}$ is a shorthand for $\E_{rla}[D]$, the expected value of $D$ in a uniformly random linear arrangement of a certain tree.
It is well-known that \cite{Ferrer2004b,Zornig1984a}
\begin{equation}
D_{rla}  = \frac{1}{3}(n^2-1).
\label{sum_of_dependency_lengths_random_equation} 	
\end{equation}
To estimate the optimality of a sentence, at some point one has to choose between $D$ and its variants. One is the average $D$ over the $n-1$ edges of a tree, i.e. $\meanD = D/(n-1)$ \cite{Ferrer2004b}. Another one is Hudson's $D_0$ \cite{Hudson1995a,Hiranuma1999a,Eppler2004a}, that measures the distance between two linked words as the number of intervening words (then consecutive words are at distance 0 and not at distance 1 as in $D$). For this reason, one has $D_0 = D - (n - 1)$. We are interested in optimality scores whose values do not depend on the choice of one of these possibilities. 

\begin{figure}
\centering
\includegraphics[width = \linewidth]{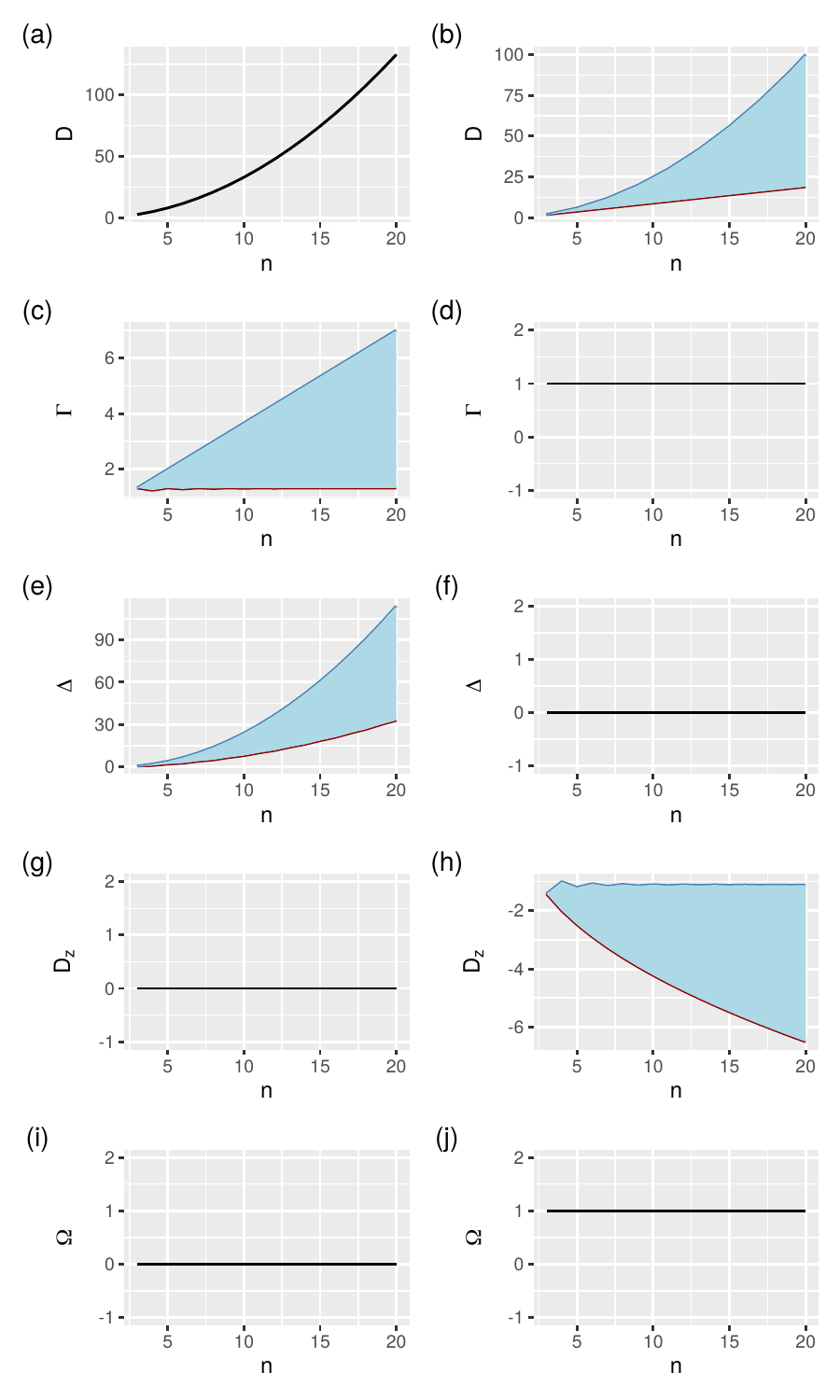}
\caption{\label{scores_figure} The theoretical properties of each optimality score as a function of $n$, the number of words of the sentence (one distinct score each row). Left: the expected value of the score under the null hypothesis. Right: the value of the score in a minimum linear arrangement. The light blue areas in between curves indicate the interval of variation of the score for a given $n$. (a), (b) for $D$. (c), (d) for $\Gamma$. (e), (f) for $\Delta$. (g), (h) for $D_z$. (i), (j) for $\Omega$.}
\end{figure}

To estimate the optimality of a language, one has to aggregate the scores from sentences of a treebank, namely a collection of syntactic dependency trees. The scores can be aggregated by taking their mean. 
We use $\left<X\right>$ to refer to the average value of some score $X$ in some treebank. 
Using $\meanD$ to assess the degree of optimality of a language is problematic because sentence length and the kind of tree usually varies in a real treebank. $D_{rla}$ is a moving baseline because it increases as the number of words in the sentence increases (Eq. \ref{sum_of_dependency_lengths_random_equation} and Fig. \ref{scores_figure} A). $D_{min}$ is also a moving baseline not only because it depends on $n$ but also on the kind of tree (Fig. \ref{scores_figure} B): for a tree of size $n$, the range of variation of $D_{min}$ satisfies \cite{Iordanskii1974a,Esteban2016a} 
\begin{equation}
D_{min}^{linear} = n - 1 \leq D_{min} \leq D_{min}^{star} = \left\lfloor \frac{n^2}{4} \right\rfloor \leq D_{rla},
\label{range_of_variation_of_D_min_equation}
\end{equation}
where $D_{min}^{linear}$ and $D_{min}^{star}$ are the values of $D_{min}$ for a linear and a star tree, respectively. 
For these reasons, interpreting when an ensemble of sentences are fully optimized by means of $\meanD$, even when the sentence lengths are the same, is not straightforward. The values of $D$ should be normalized somehow prior to aggregating them. Similar arguments can be made for $\meanD$ and $D_0$. The problem of $D$ and its simple variants is that they are neither constant under a minimum linear arrangement (Fig. \ref{scores_figure} B) nor stable under the null hypothesis (Fig. \ref{scores_figure} A). 

 
We aim to explore new scores for individual sentences that reduce the problems of existing measures when producing a global optimality score 
by aggregating their values in a treebank. Some existing scores will be reviewed and a new score will be presented with the help of Fig. \ref{scores_figure} (see Appendix \ref{maths_appendix} for further mathematical details and the formulae behind the curves in that figure).
    
The oldest score, $\Gamma$ (Eq. \ref{Gamma_equation}), satisfies $\Gamma \geq 1$ \cite{Ferrer2020a}. A recent optimality score, defined as \cite{Gulordava2015}
\begin{equation}
\Delta = D - D_{min},
\end{equation}
satisfies $\Delta \geq 0$. One has $\Delta = 0$ and $\Gamma = 1$ only in a fully optimized sentence \cite{Ferrer2020a}.
Therefore these scores satisfy constancy under minimum linear arrangement: regardless of $n$ or the kind of tree, $\Delta = 0$ and $\Gamma = 1$ when the linear arrangement is optimal (Fig. \ref{scores_figure} D and Fig. \ref{scores_figure} F).   
However, neither 
$\Gamma$ nor $\Delta$ are stable under random linear arrangement: their expected value depends on $n$ or the kind of tree (Figs. \ref{scores_figure} C and E). Therefore, knowing how far $D$ is from $D_{rla}$ (the random baseline) based on $\optimalityLGamma$ or $\optimalityLDelta$ is not straightforward.

\begin{figure}
\centering
\includegraphics[width = \linewidth]{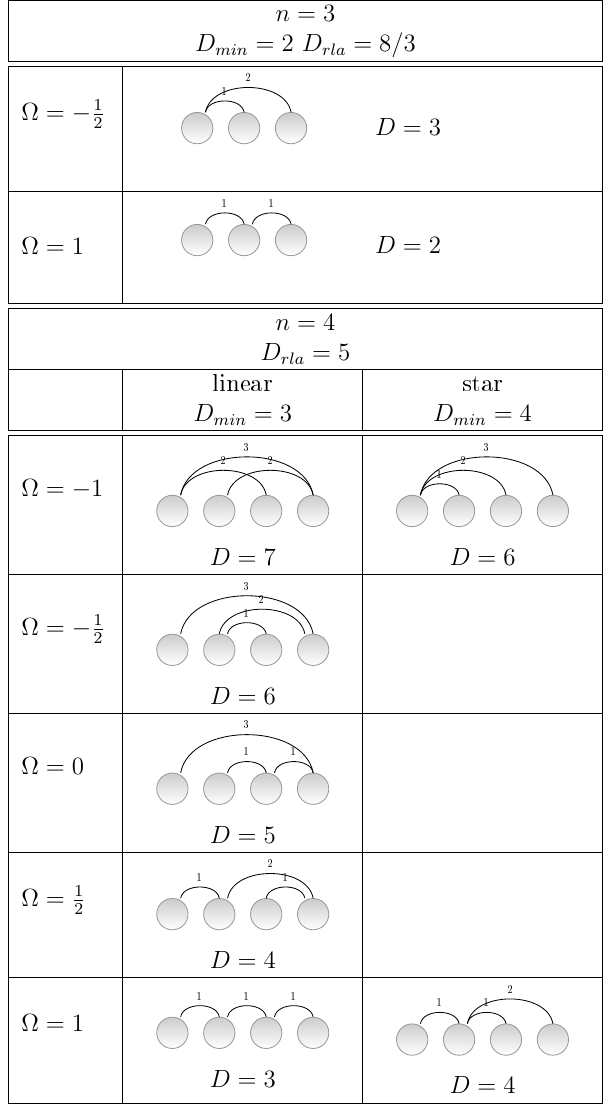}
\caption{\label{examples_of_optimality_measure_figure} All the possible values of $\Omega$ that can be obtained for sentences of lengths $n = 3$ and $n = 4$. Edge lengths (in words) are shown on top of each edge. For $n=3$, the only possible trees are linear trees and star trees simultaneously. For $n=4$, the only possible trees are either linear or star. For every tree and every possible value of $\Omega$, a linear arrangement achieving is shown. }
\end{figure}

An attractive possibility is $D_z$, a $z$-score of $D$, that has been introduced recently and that is defined as \cite{Ferrer2018a}
\begin{equation}
D_z = \frac{D - D_{rla}}{\sigma_{rla}[D]},
\label{z_scored_sum_of_edge_lengths_equation}
\end{equation}
where $\sigma_{rla}[D]$ is the standard deviation of $D$ in uniformly random linear arrangements of a tree. 
$z$-scoring is a common technique to find hidden patterns in heterogeneous data \cite{Cocho2015a,Morales2016a,monroe_colaresi_quinn_2017}. 
$D_z$ is a function of $n$ and $\left<k^2\right>$, the 2nd moment of vertex degrees about zero \cite{Ferrer2018a}, and thus may allow one to control better for the effect of tree size $n$ and the distribution of vertex degrees. Interestingly, $D_z$ is stable under the null hypothesis, namely its expected value in that situation is zero (Fig. \ref{scores_figure} G) but it is not constant under minimum linear arrangement (Fig. \ref{scores_figure} H). 

\subsection*{A new optimality score}

We introduce a new optimality score $\Omega$ that exhibits constancy under minimum linear arrangement, taking a constant value of 1 (Fig. \ref{scores_figure} J) and stability under random linear arrangement, having an expected value of 0 (Fig. \ref{scores_figure} I). We define $\Omega$ as  
\begin{equation}
\Omega = \frac{D_{rla} - D}{D_{rla} - D_{min}}.
\label{raw_optimality_metric_equation}
\end{equation}
In the syntactic dependency trees in Fig. \ref{syntactic_dependency_tree_figure}, $\Omega = 3/4$ for (a) whereas $\Omega = 1$ both for the ``ungrammatical'' reordering in (b) and the other sentence in (c). 
Fig. \ref{examples_of_optimality_measure_figure} shows all the values that $\Omega$ can take for small trees 
along with examples of linear arrangements that achieve each of these values. It is easy to see that the presentation covers all the possible values checking the corresponding figures in \cite{Ferrer2019a}. 

$\Omega$ also satisfies the following three properties. First,
when interpreted as a function of the random variable $D$, $\Omega$ is invariant under linear transformation, namely, $\Omega[aD+b]=\Omega[D]$, where $a$ and $b$ are constants. Thus, $\Omega$ satisfies invariance under proportional scaling, i.e. $\Omega[aD] = \Omega[D]$, and invariance under translation, i.e. $\Omega[D+b] = \Omega[D]$ (Appendix \ref{maths_appendix}). 
$D_z$ is invariant under linear transformation whereas $\Delta$ and $\Gamma$ are not (Appendix \ref{maths_appendix}). $\Delta$ satisfies invariance under translation but not under proportional scaling. Thus $\Delta$ would give different values if $D$ were replaced by $\meanD$ to control for the natural growth of $D$ as $n$ increases. 
Finally, $\Gamma$ satisfies invariance under proportional scaling but not under translation.

Second, while $D$, $D_0$, $\meanD$, $\Gamma$ and $\Delta$ take positive values, $D_z$ and $\Omega$ are signed. $\Omega$ takes positive values when $D < D_{rla}$ and negative values when $D > D_{rla}$, namely when $D$ is above the random baseline, as illustrated in Fig. \ref{examples_of_optimality_measure_figure} for small trees. 
This sign is of great importance for a theory of word order: it is crucial to detect when DDm is surpassed by other word order principles \cite{Ferrer2014a,Ferrer2013f,Ferrer2019a}. Such a detection by means of $\Delta$ or $\Gamma$ is more difficult because $D_{rla}$ depends on $n$ as we have seen. 

Third, $\Omega$ is bounded under maximum linear arrangement. As $n$ increases, the maximum values of $D$, $\Delta$ and $\Gamma$ and $D_z$ diverge (Appendix \ref{maths_appendix}). In contrast, 
$-c \leq \Omega$ where $c$ is a constant that satisfies $\cUpperBoundOmegaMin \leq c \leq 5$. We conjecture that $c = 2$ (Appendix \ref{maths_appendix}).  

$\Omega$ is analogous to the Pearson correlation coefficient. First, $\Omega$ is invariant under linear transformation as the Pearson correlation coefficients and similar correlation metrics \cite{DeGroot2002a}.
Second, Pearson $r$ and $\Omega$ are zero under the null hypothesis.
Third, $r$ and $\Omega$ (Eq. \ref{raw_optimality_metric_equation}) are normalized differences between a sample average value and the expectation under some null model. In $\Omega[\meanD]$, the sample average is $\meanD$ and the expectation under the null hypothesis is $D_{rla}/(n-1)$.
However, $-1 \leq r \leq 1$ while $-c \leq \Omega \leq 1$. 
Newman's modularity $Q$ \cite{Newman2006a} shares with $\Omega$ the second and the third property as well as the property of not being symmetrically distributed around zero ($-1/2 \leq Q \leq 1$) \cite{Brandes2008a}.

Invariance under linear transformation is a fundamental property for research on dependency distance: it implies that $\Omega$ yields the same value if $D$ is replaced by
\begin{itemize}
\item
$D_0$, where $a=0$ and $b = -(n-1)$ (translation).
\item
$\meanD$, where $a = 1/(n-1)$ and $b=0$ \cite{Ferrer2004b} (proportional scaling).
\item
$\Delta$, where $a=0$ and $b = -(n-1)$ (translation).
\item
$D_z$, where $a=1/\sigma_{rla}[D]$ and $b = - D_{rla}/\sigma_{rla}[D]$.  
\end{itemize}   
Crucially, such invariance under linear transformation implies that the estimates of the optimality of a language based on $\Omega$ remain unaltered if $D$ is replaced by some linear transformation of $D$.
For instance, if $D$ is replaced by $D_0$ when controlling for the different conventions when defining dependency distance, the degree of optimality of Chinese based on $\optimalityL$ remains unchanged while that based on $\optimalityLGamma$ is likely to change. Similarly, if $D$ is replaced by $\meanD$ to control for the natural growth of $D$ as sentence length increases, the degree of optimality of Chinese based on $\optimalityL$ remains unchanged again while that based on $\optimalityLDelta$ is likely to change.
Furthermore, linear transformations, when applied to scores that do not satisfy invariance under linear transformation, may alter the power of the statistical tests used to answer research questions (e.g., Questions 1 and 4).  

\subsection*{Summary}

Table \ref{scores_table} reviews the properties of all the optimality scores. $\Omega$ is the only score that incorporates the minimum and the random baseline ($D_{rla}$ and $D_{min}$). For these reasons, $\Omega$ is a dual optimality score, $\Delta$, $\Gamma$ and $D_z$ are singular optimality scores ($\Delta$, $\Gamma$ only incorporate the minimum baseline, $D_{min}$; $D_z$ only incorporates the random  baseline, $D_{rla}$) while $D$ and its linear transformations (e.g., $D_0$) are not proper optimality scores (they do not incorporate any baseline).   

We are aware of another score \cite{Lei2018a}
\begin{equation*}
NDD = \left|\log\frac{\meanD}{\sqrt{\pi^r n}}\right|,
\end{equation*}
where $\pi^r$ is the position of the root of the syntactic dependency structure in the linear arrangement ($1 \leq \pi^r \leq n$). $NDD$ was designed with two main aims in mind: reducing the dependence of $D$ on sentence length $n$, an aim shared with $\meanD$, as well as serving as a measure of syntactic complexity \cite{Lei2018a}. In contrast, the primary goal of the scores reviewed in Table \ref{scores_table} is to measure the degree of optimization of languages based on DDm. Furthermore, the complexity of the formula in the definition of $NDD$ turns the mathematical analysis particularly complex. Crucially, it suffers the same limitations of $D$ and its linear transformations (Table \ref{scores_table}; Appendix \ref{maths_appendix}).

\begin{table*}
\centering
\caption{\label{scores_table}Comparison of the properties of each dependency distance score. }
\begin{ruledtabular}
\begin{tabular}{lrrrrr}
Properties &                                  $D_0$, $D$, $\meanD$, $NDD$ & $\Gamma$       & $\Delta$          & $D_z$     & $\Omega$ \\
\hline
Sign & $+$ & $+$ & $+$ & $\pm$ & $\pm$ \\
Minimum baseline  &                            & $\checkmark$  & $\checkmark$  & $\checkmark$  & $\checkmark$ \\
Random baseline   &                            &               &               &               & $\checkmark$ \\
Constancy under minimum linear arrangement    &                            & $\checkmark$   & $\checkmark$ &              & $\checkmark$ \\
Stability under random linear arrangement     &                           &                &              & $\checkmark$ & $\checkmark$ \\
Invariance under translation                  &                            &                & $\checkmark$ & $\checkmark$ & $\checkmark$ \\
Invariance under proportional change of scale &                            & $\checkmark$   &              & $\checkmark$ & $\checkmark$ \\
Invariance under linear transformation        &                            &                &              & $\checkmark$ & $\checkmark$ \\
Bounded under maximum linear arrangement      &                            &                &              &               & $\checkmark$ \\
\end{tabular}
\end{ruledtabular}
\end{table*}


\section{Results}

\label{results_section}

\begin{table*}
\centering
\caption{\label{datasets_table} The annotation criteria (rows) and the collection (column) of each dataset. See Appendix \ref{materials_appendix} for further details on the collections and the annotation criteria. }
\begin{ruledtabular}
\begin{tabular}{llccc}
                                                   & Universal Dependencies & HamleDT & Parallel Universal Dependencies \\
\hline
Universal Dependencies                             & UD  &          & PUD \\
Surface-syntactic Universal Dependencies           & SUD &          & PSUD \\
Universal Stanford Dependencies                    &     & Stanford \\
Prague Dependencies                                &     & Prague \\
\end{tabular}
\end{ruledtabular}
\end{table*}

We investigate $\Omega$ in \Languages languages from \Families families. The input for the analyses is organized into datasets. A dataset results from the combination of a collection of sentences from different languages and some annotation criteria to define the syntactic dependency structure of sentences (Table \ref{datasets_table}). 
The UD and the SUD datasets contain \LanguagesUD distinct languages. The Prague and the Stanford datasets contain \LanguagesHamleDT distinct languages.
We control for the bias towards Indo-European languages (\IndoeuropeanLanguages languages) and for vertical transmission by grouping languages according to their linguistic family. Languages from the same family are not statistically independent since the features we observe may originate from a common ancestor \cite{Roberts2013b}. 

For a given language, we analyze $\optimalityL$, the average value of $\Omega$ over all sentences from a certain dataset. The sign of $\optimalityL$ is indicative of the kind of constraint on dependency distance. If $\optimalityL > 0$ and $\optimalityL$ is significantly large, then this is evidence of the action of DDm.
If $\optimalityL < 0$ and $\optimalityL$ is significantly small, then this is evidence of anti-DDm effects \cite{Ferrer2019a}. Significance is determined with respect to (randomly) shuffled sentences.

In linguistics, the term {\em doculect} refers to the variety of a language that is documented in a given resource \cite{Cysouw2013a}. 
A novel and a newspaper article written in German are samples of the same language but are not the same doculect. In our datasets, each language is represented by a series of sentences that define a doculect. In the analyses to follow, it is important to bear in mind that we are analyzing doculects rather than languages and that this challenges the comparability of the results obtained for different languages. The majority of doculects correspond to written language and thus all the results to follow are eminently relative to the written modality of these languages. The analyses below reflect a trade-off between reaching as many languages as possible and minimizing the distances between the corresponding doculects. When the latter is critical, we will resort to parallel corpora.

\begin{figure}
\centering
\includegraphics[width = \linewidth]{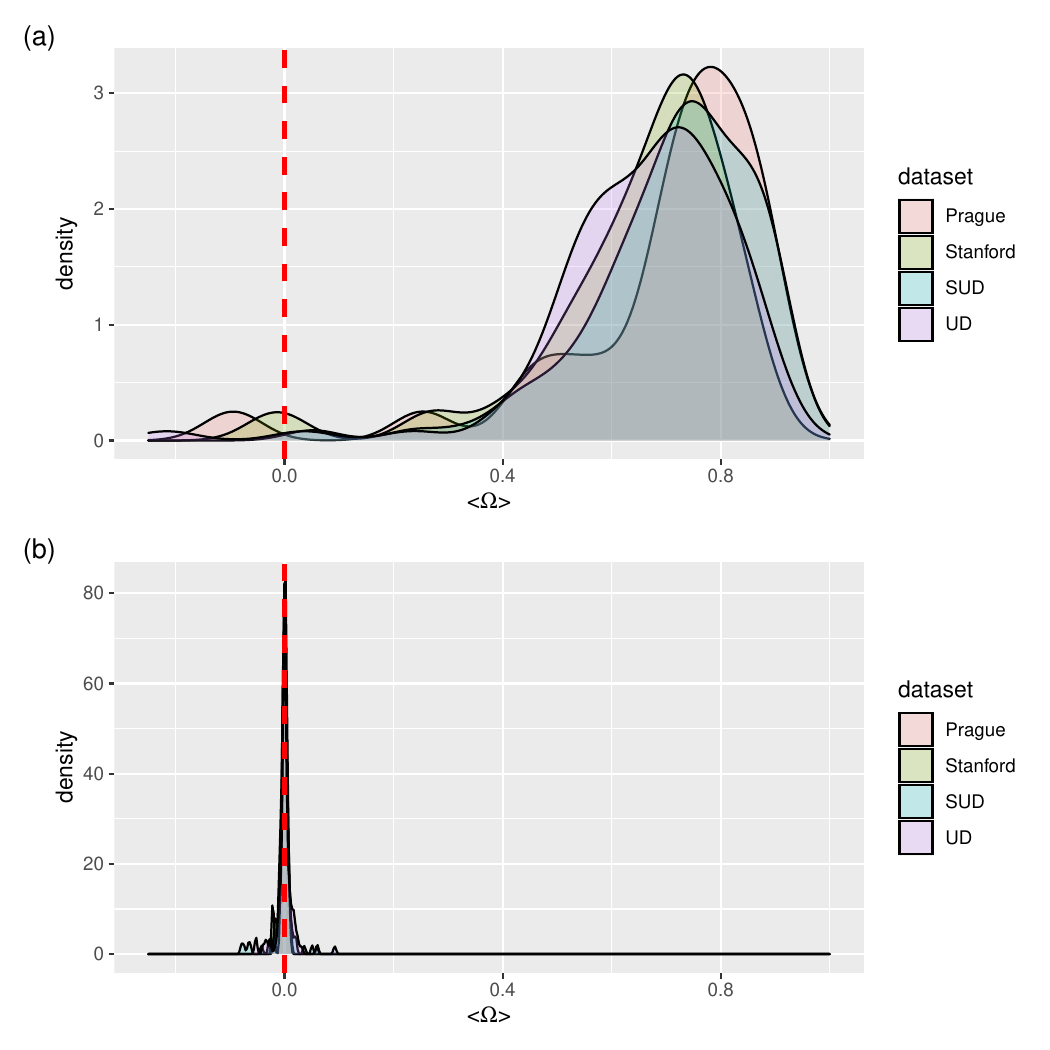}
\caption{\label{probability_density_figure} The probability density of $\optimalityL$, the average $\Omega$ in the sentences of a language, for languages in different datasets. The vertical dashed line indicates the expected value under the null hypothesis. A. Original sentences. B. Shuffled sentences. }
\end{figure}

\subsection*{The optimality of syntactic dependency distances}

Under the null hypothesis of a uniformly random linear arrangement, the probability distribution of $\optimalityL$ should be centered at zero (as expected by the stability under linear arrangement property of $\Omega$). In contrast, the probability density of $\optimalityL$ across languages is skewed towards positive values peaking at about 0.7 (Fig. \ref{probability_density_figure}). After shuffling the words of every sentence, the probability density peaks at zero as expected.

\begin{table}
\centering
\caption{\label{optimality_by_family_table} Summary of the statistical properties of $\optimalityL$ by dataset and language family. {\em All} indicates all the languages of the dataset, regardless of their family. }
\begin{ruledtabular}
\begin{tabular}{llrrrrr}
Dataset & Family & Langs. & Min & Mean & Median & Max \\
\hline
UD & All & 92 & -0.22 & 0.66 & 0.7 & 0.9 \\
 & Indo-European & 51 & 0.24 & 0.68 & 0.7 & 0.9 \\
 & Uralic & 11 & 0.54 & 0.63 & 0.62 & 0.77 \\
 & Afro-Asiatic & 7 & 0.63 & 0.78 & 0.79 & 0.89 \\
 & Altaic & 3 & 0.49 & 0.5 & 0.5 & 0.51 \\
 & Sino-Tibetan & 3 & 0.51 & 0.62 & 0.62 & 0.72 \\
 & Austronesian & 2 & 0.4 & 0.63 & 0.63 & 0.86 \\
 & Dravidian & 2 & 0.04 & 0.39 & 0.39 & 0.75 \\
 & Niger-Congo & 2 & 0.79 & 0.83 & 0.83 & 0.87 \\
 & Austro-Asiatic & 1 & 0.79 & 0.79 & 0.79 & 0.79 \\
 & Basque & 1 & 0.71 & 0.71 & 0.71 & 0.71 \\
 & Japanese & 1 & 0.73 & 0.73 & 0.73 & 0.73 \\
 & Korean & 1 & 0.6 & 0.6 & 0.6 & 0.6 \\
 & Mande & 1 & 0.66 & 0.66 & 0.66 & 0.66 \\
 & Mongolic & 1 & 0.59 & 0.59 & 0.59 & 0.59 \\
 & Other & 1 & 0.69 & 0.69 & 0.69 & 0.69 \\
 & Pama-Nyungan & 1 & -0.22 & -0.22 & -0.22 & -0.22 \\
 & Sign Language & 1 & 0.51 & 0.51 & 0.51 & 0.51 \\
 & Tai-Kadai & 1 & 0.88 & 0.88 & 0.88 & 0.88 \\
 & Tupian & 1 & 0.57 & 0.57 & 0.57 & 0.57 \\
SUD & All & 92 & 0.05 & 0.71 & 0.74 & 0.91 \\
 & Indo-European & 51 & 0.24 & 0.74 & 0.77 & 0.91 \\
 & Uralic & 11 & 0.61 & 0.7 & 0.71 & 0.79 \\
 & Afro-Asiatic & 7 & 0.53 & 0.77 & 0.86 & 0.9 \\
 & Altaic & 3 & 0.42 & 0.44 & 0.43 & 0.48 \\
 & Sino-Tibetan & 3 & 0.59 & 0.66 & 0.65 & 0.74 \\
 & Austronesian & 2 & 0.52 & 0.7 & 0.7 & 0.88 \\
 & Dravidian & 2 & 0.05 & 0.37 & 0.37 & 0.7 \\
 & Niger-Congo & 2 & 0.83 & 0.87 & 0.87 & 0.91 \\
 & Austro-Asiatic & 1 & 0.82 & 0.82 & 0.82 & 0.82 \\
 & Basque & 1 & 0.66 & 0.66 & 0.66 & 0.66 \\
 & Japanese & 1 & 0.73 & 0.73 & 0.73 & 0.73 \\
 & Korean & 1 & 0.56 & 0.56 & 0.56 & 0.56 \\
 & Mande & 1 & 0.74 & 0.74 & 0.74 & 0.74 \\
 & Mongolic & 1 & 0.55 & 0.55 & 0.55 & 0.55 \\
 & Other & 1 & 0.8 & 0.8 & 0.8 & 0.8 \\
 & Pama-Nyungan & 1 & 0.42 & 0.42 & 0.42 & 0.42 \\
 & Sign Language & 1 & 0.72 & 0.72 & 0.72 & 0.72 \\
 & Tai-Kadai & 1 & 0.89 & 0.89 & 0.89 & 0.89 \\
 & Tupian & 1 & 0.61 & 0.61 & 0.61 & 0.61 \\
Stanford & All & 30 & -0.01 & 0.65 & 0.71 & 0.86 \\
 & Indo-European & 21 & 0.27 & 0.68 & 0.71 & 0.86 \\
 & Uralic & 3 & 0.6 & 0.67 & 0.71 & 0.72 \\
 & Dravidian & 2 & -0.01 & 0.37 & 0.37 & 0.75 \\
 & Afro-Asiatic & 1 & 0.84 & 0.84 & 0.84 & 0.84 \\
 & Altaic & 1 & 0.52 & 0.52 & 0.52 & 0.52 \\
 & Basque & 1 & 0.74 & 0.74 & 0.74 & 0.74 \\
 & Japanese & 1 & 0.4 & 0.4 & 0.4 & 0.4 \\
Prague & All & 30 & -0.09 & 0.7 & 0.77 & 0.9 \\
 & Indo-European & 21 & 0.25 & 0.72 & 0.77 & 0.9 \\
 & Uralic & 3 & 0.56 & 0.69 & 0.71 & 0.78 \\
 & Dravidian & 2 & -0.09 & 0.34 & 0.34 & 0.78 \\
 & Afro-Asiatic & 1 & 0.88 & 0.88 & 0.88 & 0.88 \\
 & Altaic & 1 & 0.6 & 0.6 & 0.6 & 0.6 \\
 & Basque & 1 & 0.77 & 0.77 & 0.77 & 0.77 \\
 & Japanese & 1 & 0.73 & 0.73 & 0.73 & 0.73 \\

\end{tabular}
\end{ruledtabular}
\end{table}

We define the percentage of optimization as $100 \cdot \Omega$. Notice that such percentage can be negative.
In each dataset, languages are optimized at least to a $65\%$ on average when all languages are considered (Table \ref{optimality_by_family_table}). In addition, the median indicates that half of languages are optimized to a $71\%$ at least. The statistics are strongly influenced by the abundance of Indo-European languages in the datasets: \IndoeuropeanLanguagesUD out of \LanguagesUD languages in UD/SUD and \IndoeuropeanLanguagesHamleDT out of \LanguagesHamleDT in Prague/Stanford (Table \ref{optimality_by_family_table}). High values of $\optimalityL$ are found in other families (the average $\optimalityL$ is greater than $50\%$ in the majority of families). All the values of $\optimalityL$ are positive except in a few cases: 
a Pama-Nyungan language (Warlpiri) for UD (but not SUD) and a Dravidian language (Telugu) for Stanford and Prague. 

\subsection*{The significance of syntactic dependency distances}

$\optimalityL$ is significantly large in the overwhelming majority of languages: there are two exceptions in UD (Telugu and Warlpiri), no exception in SUD and one exception (Telugu) in Prague and Stanford (Table \ref{patterns_in_optimality_table}). Interestingly, we did not find any language with a significantly small $\optimalityL$ although $\optimalityL<0$ in a few languages as reported above.  

\begin{table}
\centering
\caption{\label{patterns_in_optimality_table} The significance of $\optimalityL$. $l_0$ is the number of languages available from the dataset, 
$f_H$, is the number of languages where $\optimalityL$ is significantly {\em large} after applying Holm's correction to the languages counted in $l_0$, and Exceptions is the number of the languages counted in $l_0$ where $\optimalityL$ is not significantly {\em large}. The number attached to the language name indicates the corrected $p$-value.}
\begin{ruledtabular}
\begin{tabular}{lrrrll}
Dataset & $l_0$ & $f_H$ & Exceptions & Family & Languages \\
\hline
UD & 92 & 90  & 2  & Dravidian (1) & Telugu$^{0.1}$ \\
& & & & Pama-Nyungan (1) & Warlpiri$^{1}$ \\
SUD & 92 & 92  & --- & --- & --- \\
Prague & 30 & 29  & 1  & Dravidian (1) & Telugu$^{1}$ \\
Stanford & 30 & 29  & 1  & Dravidian (1) & Telugu$^{0.7}$ \\

\end{tabular}
\end{ruledtabular}
\end{table}

\subsection*{Syntactic dependency distance versus sentence length}

Here we investigate $\optimalityn$, the average value of $\Omega$ in sentences of length $n$ of a  given language. In sufficiently long real sentences, $D < D_{rla}$ on average \cite{Ferrer2004b,Ferrer2013c,Jiang2015a}, and thus $0 < \optimalityn$ is expected. Accordingly, $\optimalityn$ is higher in longer sentences (Fig. \ref{heat_map_figure}). 
For sufficiently small $n$ (e.g., $n \in \{3,4\}$), negative values of $\optimalityn$ are found in some languages. 
Indeed, the correlation between $\optimalityn$ and $n$ is significantly large in the overwhelming majority of languages (Table \ref{correlation_with_sentence_length_table}).
Such a global tendency suggests that DDm may turn out to not be the dominating principle in short sequences. 

\begin{figure*}
\centering
\includegraphics[width=\linewidth]{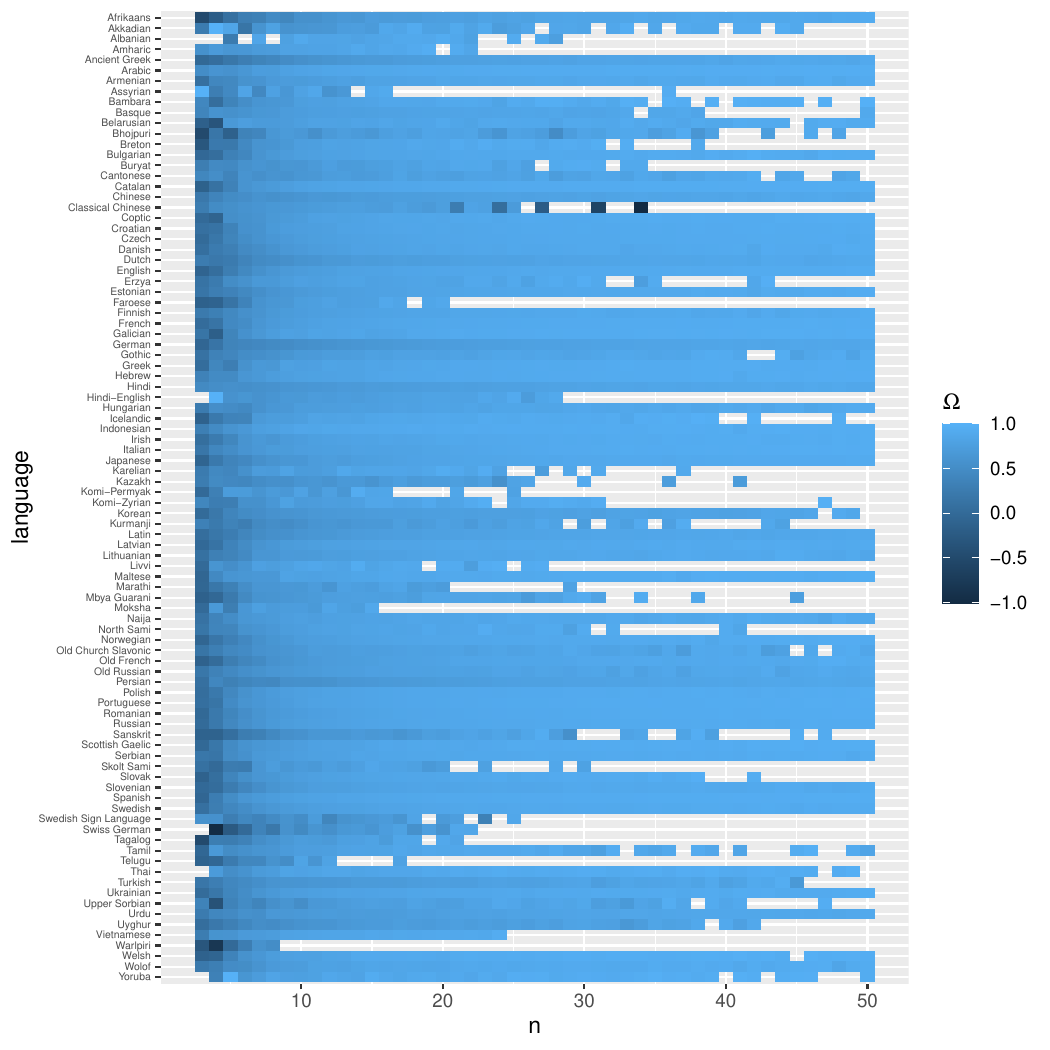}
\caption{\label{heat_map_figure} $\optimalityn$, the average value of $\Omega$ in sentences of length $n$ of a language in UD for $3 \leq n \leq 50$. Sentences longer than 50 words are excluded to ease visualization. 
}
\end{figure*}
 
\begin{table}
\centering
\caption{\label{correlation_with_sentence_length_table} The significance of the Kendall $\tau$ correlation between $\optimalityn$ and sentence length $n$. $l_0$ is the number of languages available from the dataset, 
$f_H$ is the number of languages where the correlation is significantly {\em large} after applying Holm's correction to the languages counted in $l_0$, and Exc. (Exceptions) is the number of the languages counted in $l_0$ where the correlation is not significantly {\em large}. The number attached to the language name indicates the corrected $p$-value. }
\begin{ruledtabular}
\begin{tabular}{lrrrrll}
Dataset & $l_0$ & $f_H$ & Exc. & Family & Languages \\
\hline
UD & 92 & 88  & 4  & Afro-Asiatic (1) & Assyrian$^{0.1}$ \\
& & & & Indo-European (1) & Albanian$^{0.1}$ \\
& & & & Pama-Nyungan (1) & Warlpiri$^{0.1}$ \\
& & & & Sino-Tibetan (1) & Classical Chinese$^{0.2}$ \\
SUD & 92 & 89  & 3  & Indo-European (1) & Albanian$^{0.1}$ \\
& & & & Pama-Nyungan (1) & Warlpiri$^{0.4}$ \\
& & & & Uralic (1) & Komi-Permyak$^{0.1}$ \\
Prague & 30 & 30  & --- & --- & --- \\
Stanford & 30 & 30  & --- & --- & --- \\

\end{tabular}
\end{ruledtabular}
\end{table}

\subsection*{Anti dependency distance minimization effects in short sequences}

In some languages, $\optimalityn$ is significantly small in sentences of length $n = 3$ or $n = 4$ (Table \ref{anti_dependency_distance_table}). In UD, 12 languages for $n = 3$ and 3 with $n = 4$. Interestingly, the effect disappears completely for $n > 4$ consistently with previous research on DDm \cite{Lei2020a}.
It is unlikely that the results are due to a common descent \cite{Roberts2013b}: $\optimalityn$ is significantly small in languages from distinct families and the finding does not cover the whole family provided that the family is represented by more than one language {\em a priori} (see Appendix \ref{materials_appendix} for the composition of each linguistic family). For instance, in UD with $n = 3$, the 12 languages concerned are 9 Indo-European languages out of \IndoeuropeanLanguagesUD, 
one Austronesian language out of \AustronesianLanguagesUD (Tagalog), one Dravidian language out of \DravidianLanguagesUD (Telugu) and one isolate (Japanese).

\begin{table*}
\centering
\caption{\label{anti_dependency_distance_table} Anti dependency distance minimization in short sentences (significantly low $\optimalityn$ for small $n$). 
$n$ is the sentence length (in words). $l_0$ is the number of languages available from the dataset, 
and $f_H$ is the number of languages where $\optimalityn$ is significantly {\em small} after applying Holm's correction to the languages counted in $l_0$. 
The number attached to the language name indicates the magnitude of the corrected $p$-value. It is obtained after rounding $-\log_{10}(p\mathrm{-value})$ to leave just one decimal digit. Then the significance level $\alpha = 0.05$ gives 1.3.
}
\begin{ruledtabular}
\begin{tabular}{llrrrp{3in}}
$n$ & Dataset & $l_0$ & $f_H$ & Family & Languages \\
\hline
3  & UD & 87 & 12  & Austronesian (1) & Tagalog$^{3.1}$ \\
 & & & & Dravidian (1) & Telugu$^{1.8}$ \\
 & & & & Indo-European (9) & Afrikaans$^{3.1}$ Bhojpuri$^{3.1}$ Breton$^{1.6}$ German$^{3.1}$ English$^{3.1}$ Old French$^{3.1}$ Norwegian$^{2.5}$ Sanskrit$^{2.5}$ Slovak$^{3.1}$ \\
 & & & & Japanese (1) & Japanese$^{3.1}$ \\
   & SUD & 87 & 4  & Austronesian (1) & Tagalog$^{3.1}$ \\
 & & & & Indo-European (3) & Bhojpuri$^{3.1}$ Persian$^{2.6}$ Urdu$^{3.1}$ \\
   & Prague & 30 & 3  & Dravidian (1) & Telugu$^{3.5}$ \\
 & & & & Indo-European (2) & Hindi$^{3.5}$ Persian$^{3.5}$ \\
   & Stanford & 30 & 5  & Dravidian (1) & Telugu$^{3.5}$ \\
 & & & & Indo-European (3) & Czech$^{3.5}$ German$^{3.5}$ Slovak$^{3.5}$ \\
 & & & & Japanese (1) & Japanese$^{3.5}$ \\
4  & UD & 91 & 3  & Indo-European (2) & Swiss German$^{3}$ Sanskrit$^{2.2}$ \\
 & & & & Pama-Nyungan (1) & Warlpiri$^{2.3}$ \\
   & SUD & 91 & 2  & Indo-European (2) & Swiss German$^{3}$ Sanskrit$^{2.7}$ \\
   & Prague & 30 & 3  & Dravidian (1) & Telugu$^{3.5}$ \\
 & & & & Indo-European (2) & Ancient Greek$^{2}$ Persian$^{1.5}$ \\
   & Stanford & 30 & 0  & --- & --- \\

\end{tabular}
\end{ruledtabular}
\end{table*}

\subsection*{The ranking of languages based on their degree of optimality}

\begin{figure}
\centering
\includegraphics[width=\linewidth]{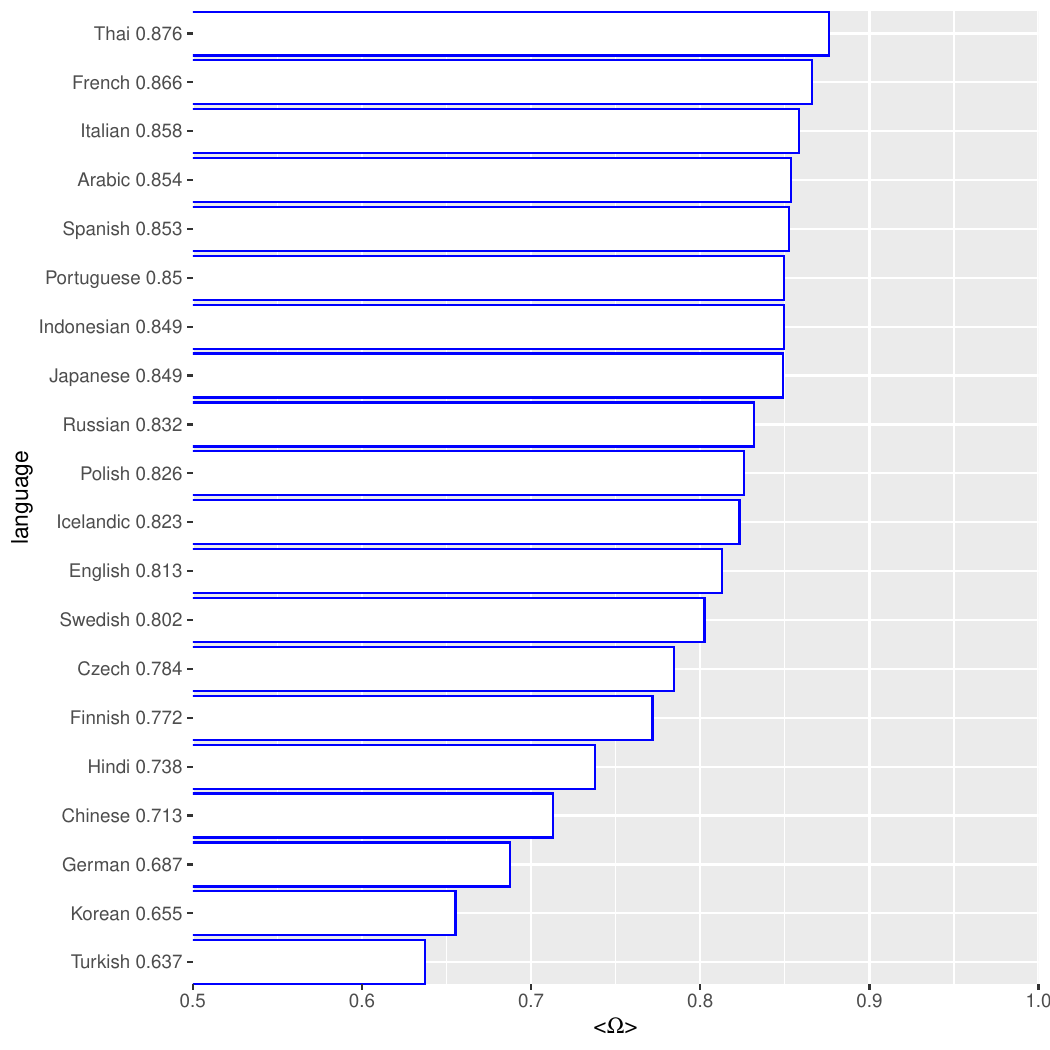}
\caption{\label{ranking_PUD_figure} The ordering of languages in PUD according to $\Omega$. The number attached to each language indicates the value of $\optimalityL$. }
\end{figure}

What are the most and the least optimized languages?
It is possible to sort the languages in a dataset by $\optimalityL$ to obtain a ranking of languages according to dependency distance, e.g. from $\optimalityL = 0.9$ down to $\optimalityL = - 0.22$ 
in UD. Such an ordering is problematic for various reasons. First, it may depend on the dataset used \cite{Passarotti2016a,Osborne2019a}. 
Second, there is a problem of homogeneity or comparability, namely the source of the sentences for each language is not homogeneous (in one language the source may be newspaper articles while in another language it could be literary works).    
Such a lack of homogeneity of the doculects representing each language may distort the ordering. $\optimalityL$ is an aggregate measure over sentences of varying length and we have seen that $\optimalityn$ depends on sentence length (Fig. \ref{heat_map_figure} and Table \ref{correlation_with_sentence_length_table}) in such a way that $\Omega$ can be negative in short sentences. It is well-known that the distribution of sentence length as well as other quantitative features depend on modality or genre \cite{Rudnicka2018a}. Accordingly, modality and genre influence dependency distances \cite{Wang2017a}.
A widely adopted solution in quantitative linguistics for these two problems is the use of parallel corpora, namely, collections of sentences that have the same content, often obtained by translation from one language to the remainder of languages in the sample. By doing so, one reduces the distance between the {\em doculects} of each language as much as possible.
Here we use the Parallel Universal Dependencies corpus (PUD), that contains some sentences of a subset of languages from the UD collection (Appendix \ref{materials_appendix}). PUD has the further advantage of warranting an unprecedented degree of homogeneity in terms of wordness criteria across languages (Appendix \ref{materials_appendix}).
Figure \ref{ranking_PUD_figure} shows a first approximation to the ranking of languages in PUD. Assuming that the ordering is total, the most optimized language would be Thai with $\optimalityL = 0.88$ and the least optimized language would be Turkish with $\optimalityL = 0.64$ (Figure \ref{ranking_PUD_figure}). Interestingly, the Romance languages, i.e. French, Italian, Spanish and Portuguese, cluster in positions 2 to 7 (Fig. \ref{ranking_PUD_figure}). 
However, the ordering in Figure \ref{ranking_PUD_figure} is not total. For instance, although the value of $\optimalityL$ of Italian is greater than that of Arabic, Spanish, Portuguese, Indonesian and Japanese the difference is not significant (Appendix \ref{results_appendix}). 
For that reason, we represent the partial order as a network where there is an arc from language $x$ to language $y$ whenever $\optimalityL_x$ is significantly larger than $\optimalityL_y$ after controlling for multiple comparisons.
Figure \ref{Hasse_diagrams_figure} A shows the corresponding Hasse diagram, a compressed representation of the network that is obtained by applying transitive reduction to the original network \cite{Clough2014a} (Appendix \ref{methods_appendix}). The original network has 149 arcs while the Hasse diagram has just 46 arcs. If a language $y$ can be reached from a language $x$ following one or more arcs in the diagram then $x$ is significantly more optimized than $y$.
Based on that diagram, it cannot be concluded that Thai is the most optimized language, but rather that Thai is more optimized than all the languages in the sample excluding Italian and French, totalling 17 languages. Similarly, it cannot be concluded that Turkish is the least optimized language but rather that all languages in the sample are more optimized than Turkish with the only exception of Korean. 
PUD has allowed us to control for content (turning the source homogeneous). Now the original network has 155 arcs while the Hasse diagram has just 57 arcs. Our next step is to control for annotation style by means of PSUD, a variant of PUD where universal dependencies have been replaced with surface-syntactic universal dependencies \cite{sud}. Figure \ref{Hasse_diagrams_figure} B confirms that Korean and Turkish are among the least optimized languages (in Figure \ref{Hasse_diagrams_figure} A, German is not more optimized than Korean) but shows that Thai is more optimized only with respect to 11 languages. Globally, the ranking of languages that PUD yields is significantly correlated with that of PSUD (Fig. \ref{correlogram_figure}). 

\begin{figure}
\centering
\includegraphics[width=\linewidth]{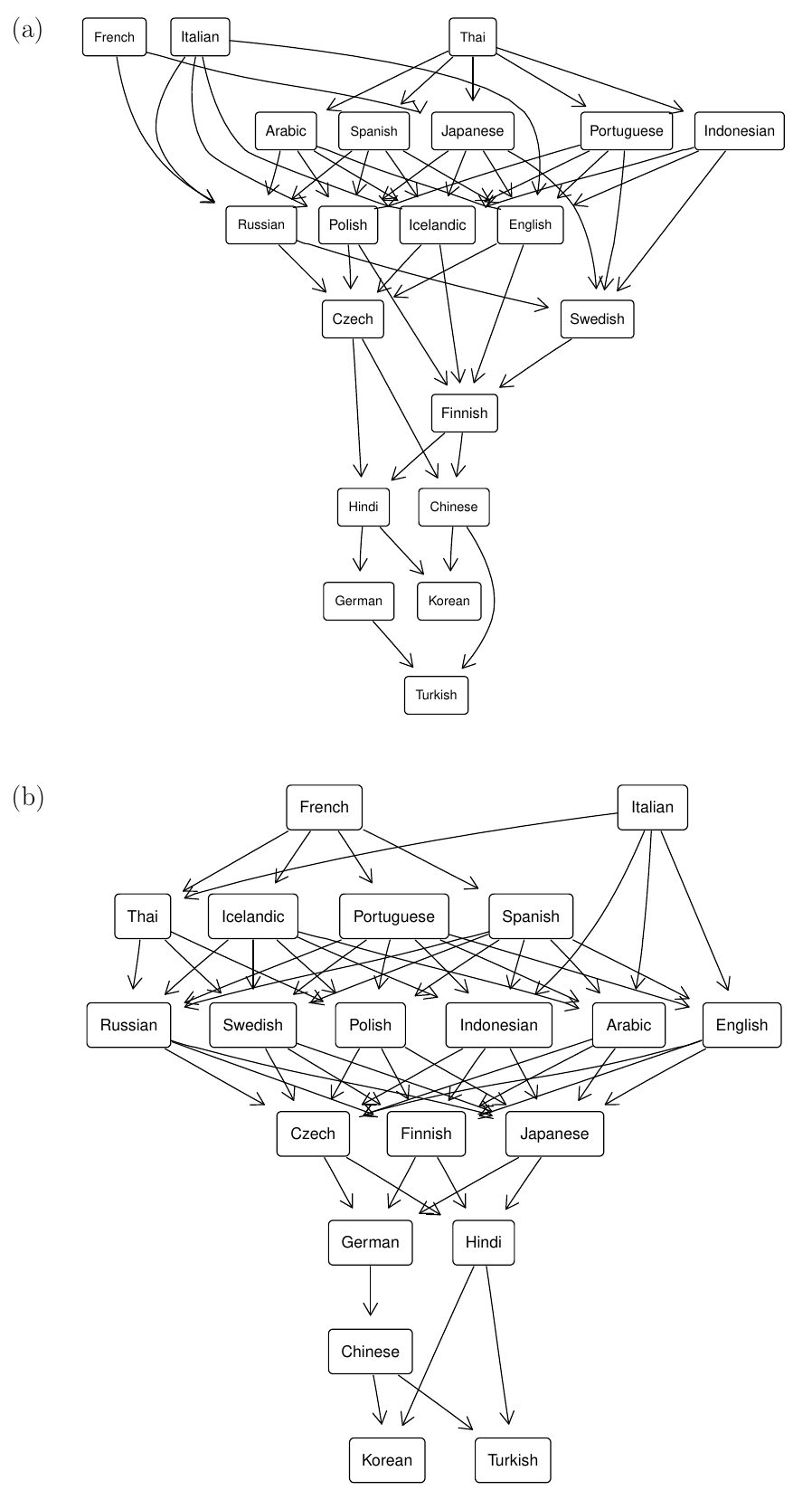}
\caption{\label{Hasse_diagrams_figure} A Hasse diagram where an arrow pointing from one language to another indicates that the former is significantly more optimized than the latter according to $\Omega$. (a) PUD. (b) PSUD. }
\end{figure}

Our control for annotation style implies a control for the way function words are annotated. UD follows the content-head criterion, namely it prefers content words as heads while SUD does not: adpositions, subordinating conjunctions, auxiliaries, and copulas are heads in SUD \cite{sud}. Now we go a step further on function words and control for the proportion of function words, which varies across languages. At one end of the continuum of variation, one finds Asian isolating languages such as Chinese or Thai, that leave many grammatical relationships unexpressed as in other languages of Mainland Southeast Asia \cite{Bisang2006a}. 
Could it be that the ranking of languages that we have obtained is severely distorted by the scarcity of function words and the corresponding high degree of indeterminateness in certain languages? 
The removal of function words implies a reduction of $\optimalityL$ in $95\%$ of the languages both for PUD and PSUD (the only exceptions are Czech for PUD and Korean for PSUD; see Appendix \ref{results_appendix} for further details). However, the relative ordering of the languages is preserved significantly. 
Globally, the ranking of languages that results from removing all function words in PUD is strongly correlated with the original ranking by PUD or PSUD (Spearman correlation $> 0.85$) whereas the ranking that results from removing all function words in PSUD is more weakly correlated with these original rankings (Spearman correlation $< 0.48$) (Fig. \ref{correlogram_figure}). 
The correlation analysis confirms the robustness of the ranking method to a high degree if one takes into account that the removal of all function words in PSUD is a special case because of the statistical problems arising in that condition (a detailed explanation is given in Appendices \ref{materials_appendix} and \ref{results_appendix}).
Notice that the ranking by PSUD without function words is less strongly correlated with the original ranking by PSUD than with the other configurations based on PUD with and without function words (Fig. \ref{correlogram_figure}), suggesting that removing function words in PSUD turns the optimality of languages more PUD-like and less PSUD-like. 
Appendix \ref{results_appendix} shows complementary figures on the effect of removing function words that have been omitted here for the sake of brevity. 

\begin{figure}
\centering
\includegraphics[width=\linewidth]{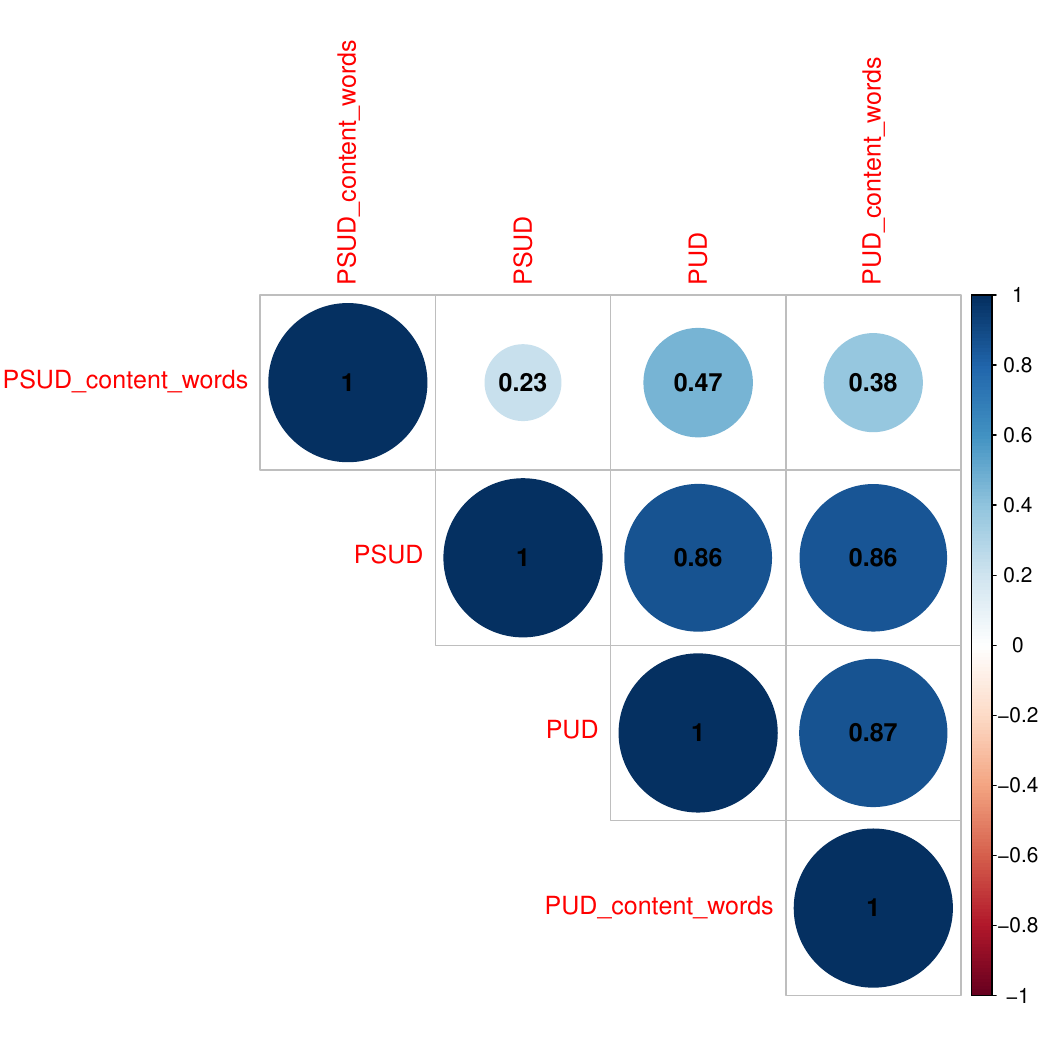}
\caption{\label{correlogram_figure} Correlogram of the Spearman rank correlation between the values of $\optimalityL$ of a language under two different conditions. The conditions that are considered are PUD annotation, PSUD annotation, PUD annotation with content words only and PSUD annotation with content words only. The numbers inside the circles indicate the value of the correlation. All correlations are significant. Significance is assessed after applying a Holm correction for multiple comparisons (Appendix \ref{methods_appendix}). }
\end{figure}

\begin{table}
\centering
\caption{\label{surrogate_table} The Spearman rank correlation ($\rho$) between $\Omega$ and a surrogate score. $p$ is the p-value of the corresponding two-sided significance test. }
\begin{ruledtabular}
\begin{tabular}{llccc}
Collection & Score & $\rho$ & $p$ \\
\hline
PUD & $D$ & 0.146 & $ 0.54 $ \\
 & $\bar{d}$ & -0.389 & $ 0.09 $ \\
 & $\Delta$ & -0.26 & $ 0.27 $ \\
 & $\Gamma$ & -0.774 & $ 9.10 \cdot 10^{-5} $ \\
 & $D_z$ & -0.899 & $ 1.18 \cdot 10^{-6} $ \\
PSUD & $D$ & -0.074 & $ 0.76 $ \\
 & $\bar{d}$ & -0.432 & $ 0.06 $ \\
 & $\Delta$ & -0.355 & $ 0.13 $ \\
 & $\Gamma$ & -0.889 & $ 0.00 $ \\
 & $D_z$ & -0.783 & $ 6.31 \cdot 10^{-5} $ \\

\end{tabular}
\end{ruledtabular}
\end{table}

A pressing question is if $\Omega$ could be replaced successfully by a simpler score in spite of all the claims in Section \ref{optimality_metric_section} about the outstanding mathematical and statistical properties of $\Omega$ compared to other scores. We approach this questions first by looking at the correlation between the sequential ranking of languages that $\Omega$ produces and the sequential ranking of languages produced by another score. Table \ref{surrogate_table} shows that only the ranking of languages by $D_z$ or $\Gamma$ are correlated significantly with that of $\Omega$; the best surrogate would depend on the annotation style. Consequently, $D$, $\bar{d}$ and $\Delta$ must be discarded as proxies for $\Omega$.
As the sequential ranking may sort arbitrarily languages that do not differ significantly in the value of the score, we consider again a hierarchical ranking. For simplicity, we focus on $D$ because it is a widely used score \cite{Gildea2007a,Futrell2015a,Futrell2020a}. As expected from Table \ref{surrogate_table}, the hierarchical ranking changes substantially (Fig. \ref{Hasse_diagrams_D_figure}). $D$ contradicts $\Omega$ in many ways (Fig. \ref{Hasse_diagrams_D_figure}). $D$ concludes that the most optimized language is Finnish and the least optimized language is Japanese regardless of the annotation style. Besides, the Romance languages have moved now to lower positions and are considered to be less optimized than English.  

\begin{figure}
\centering
\includegraphics[width=\linewidth]{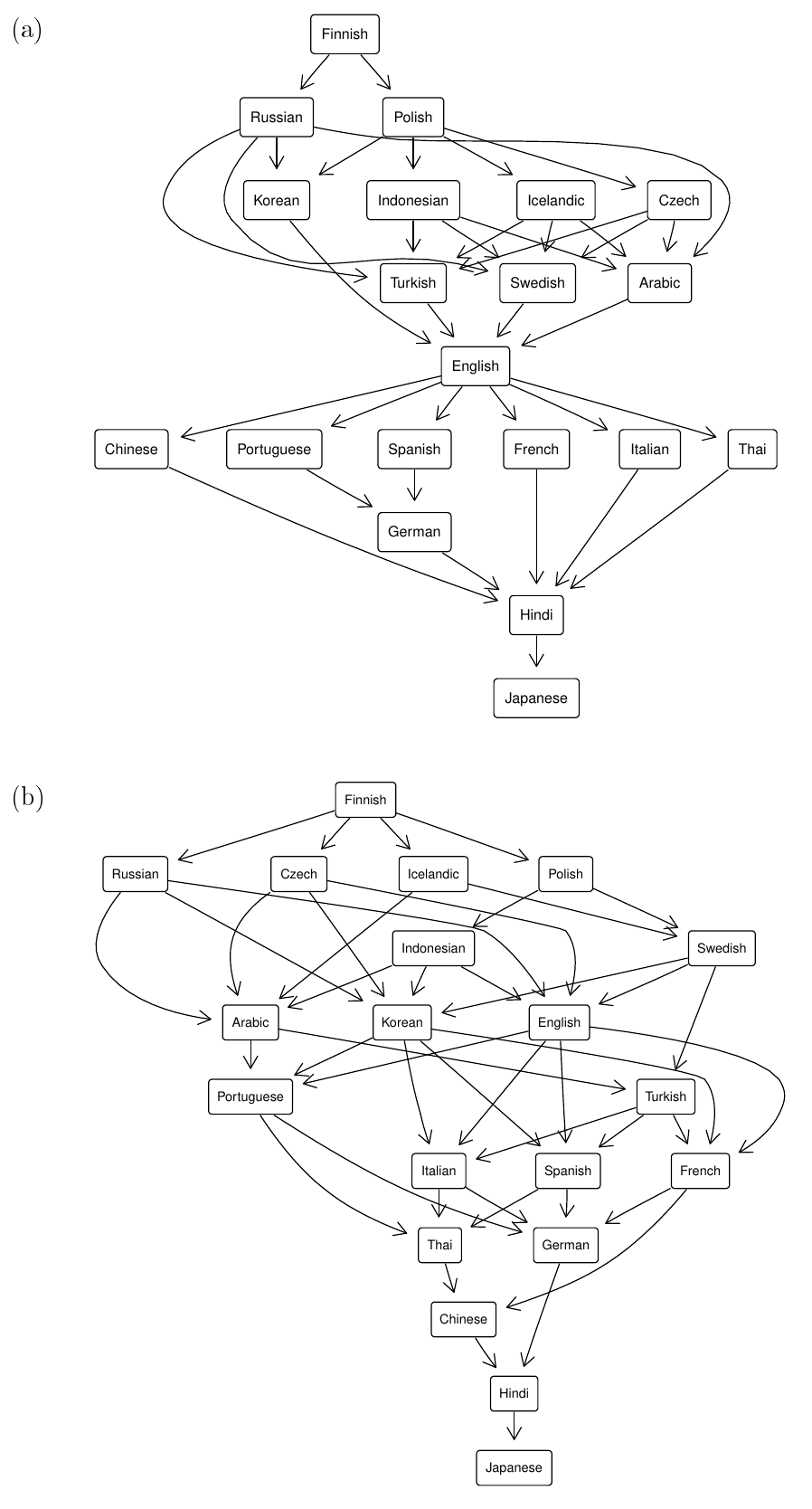}
\caption{\label{Hasse_diagrams_D_figure} The Hasse diagrams that would be obtained if $\Omega$ was replaced by $D$ in Fig. \ref{Hasse_diagrams_figure}. $\Omega$ is actually replaced by $-D$ and then the languages on top are the ones that have shorter dependency distances and the languages at the bottom have longer dependencies. (a) PUD. (b) PSUD. }
\end{figure}

\section{Discussion}

\subsection*{Question 1. Are there universal optimization principles in languages? Or, are there languages showing no optimization at all?}

Here we have added \IncreaseFutrellEtAlLanguage languages to the broadest previous study of DDm in languages \cite{Liu2008a, Futrell2015a, Futrell2020a}. In our analysis based on 
$\Omega$, we have found that 
dependency distances are not significantly small in only 2 languages: Telugu and Warlpiri (Table \ref{patterns_in_optimality_table}). In contrast, previously existing optimality scores only indicate that dependency distances are not significantly small in Warlpiri (Appendix \ref{results_appendix}), suggesting that $\Omega$ has a slightly greater capacity to detect exceptions than other scores. 
Exceptions have been previously reported but using different methods \cite{Futrell2020a}.
This suggests that conclusions on whether a language has been optimized or not may depend, at least, on the score used, or on the distribution of sentence lengths \cite{Ferrer2013c}. The latter is not a mere statistical issue (controlling for sentence length): we have shown that $\Omega$ is significantly small in short sequences of these two languages (Table \ref{anti_dependency_distance_table}), strongly suggesting that DDm is surpassed by other word order principles in short sentences \cite{Ferrer2019a}. However, all these issues should be the subject of future research.

The languages where $\optimalityL$ is not significantly large (Table \ref{patterns_in_optimality_table}) could be interpreted as challenging the universality of DDm. 
However, such interpretation should be examined in light of two observations. 
First, we should distinguish between principles and their manifestations \cite{Ferrer2012d,Ferrer2015b}. The fact that a principle does not manifest, does not invalidate the universality of the principle \cite{Ferrer2012d}. DDm may be acting but counteracted by other word order principles, e.g., surprisal minimization \cite{Ferrer2013f}. This issue is developed further below. These considerations notwithstanding, the true number of exceptions could be larger but hidden by biases in the families in our dataset. Second, we cannot exclude the possibility that the cases where DDm does not surface are not due to genuine properties of the languages but rather a case of the comparability or homogeneity problem: e.g. a consequence of the annotation criteria beyond those specified in the dataset or biases in the sample of sentences forming the collection (for instance, the author or the genre), that hide the manifestation of DDm (this is issue is also developed below). 

\subsection*{Question 2. What is the degree of optimization of languages?}

In two of the large-scale studies referred above \cite{Liu2008a, Futrell2015a}, the values of the plain score used did not reveal that the effects of DDm were too weak or missing. Our score has indicated negative values or values close to zero in some cases (Table \ref{optimality_by_family_table}).
However, we have found that the majority of languages are highly optimized. 
When evaluating the degree of optimization of languages, there is a trade-off between the breadth of the analysis and comparability. One of the broadest datasets, i.e. UD, allows one to conclude that half of languages are optimized to at least $70\%$ but $\optimalityL$ ranges between $-0.22$ and $0.9$ (Table \ref{optimality_by_family_table}). When controlling for annotation style using SUD, one finds that half of languages are optimized to at least $74\%$ and languages with negative $\optimalityL$ have disappeared.
Although annotation style within a dataset is homogeneous {\em a priori}, the source texts are not. When reducing the comparability problem using PUD, one finds that the degree of optimality ranges between $64\%$ and $88\%$ (Fig. \ref{ranking_PUD_figure}) but more than half of the languages available are Indo-European (only Arabic, Thai, Indonesian, Japanese, Finnish, Chinese, Korean and Turkish are not).   
We hope that our research stimulates the development of large parallel corpora where linguistic families are more balanced.

In his pioneering research, Hawkins concluded that the word order of English is $92\%$ optimal \cite{Hawkins1998a}. Here we have used $\Omega$, an optimality score based on $D$, showing that English is optimized to a $81\%$ (according to PUD, Fig. \ref{ranking_PUD_figure}) or $88\%$ (according to PSUD, Appendix \ref{results_appendix}) with respect to two baselines (the random baseline and the optimal baseline). These numbers must be taken with caution because other sources would give different percentages ($70\%$ according to UD, $80\%$ according to SUD and Prague and $85\%$ according to Stanford). 

$\Omega$ could be one of the dimensions of new research on patterns in the distribution of languages in bidimensional maps of word order features \cite{Gerdes2020a}.

\subsection*{Question 3. What are the most and the least optimized languages of the world? Can we define a ranking of languages?}

We have provided a first approximation to the ranking of languages by dependency distances using a small but parallel sample of languages, finding that Romance languages are among the most optimized languages, while Korean and Turkish, two agglutinating languages \cite{Kim2009a}, are among the least optimized languages (Fig. \ref{ranking_PUD_figure}). Such ranking is heavily limited by the number of languages sampled and not indicative of the optimality of language in absolute terms because dependency distances are just one aspect of the optimality of word order \cite{Ferrer2013f, Gildea2020a, Hahn2020a} and, in turn, word order is not the only aspect that determines the degree of optimization of a language \cite{Ferrer2015b,Fedzechkina2017a,Gibson2019a}.
For instance, Turkish and Korean order the subject, verb and object in a cognitively natural order, i.e. SOV, differing from the default of Romance languages, i.e. SVO \cite{Goldin-Meadow2008a}, and are endowed with complex morphology \cite{Kim2009a}, a feature that 
can reduce the effort of the listener \cite{Fedzechkina2012a,Ros2015a} and may ease learning in native speakers \cite{Dale2012a,Kempe2018a}.

Furthermore, we cannot exclude the possibility that sentences in the PUD collection are affected by some bias in the original language or in the translator. Put differently, the parallel corpus may have diminished the homogeneity or comparability problem but not cleared it completely.  Finally, the ranking relies on the granularity of wordness criteria (discussed in Appendix~\ref{materials_appendix}) used to assign ``words'' to vertices of the syntactic dependency structure as well as the unit of measurement of distance, i.e. words. 
A more precise but far more complicated measurement could be made in syllables or phonemes \cite{Ferrer2014e, Ferrer2021a}.
Therefore, any statement or inference that one language is more (or less) optimized than another in the entire article must be understood as relying exclusively on $\Omega$ (even when $\Omega$ is not mentioned for the sake of brevity) and conditioned on the collection of sentences used, the granularity of words and the unit of distance.  

Notwithstanding the above considerations, our ranking outperforms previous ordering attempts. We have put forward a hierarchical ranking rather than a sequential ordering \cite{Liu2008a}. A normalized measure of dependency distance as $\Omega$ has never been used \cite{Liu2008a,Gildea2010a}. 
To our knowledge, the first attempt to rank languages by dependency distance is found in the pioneering research of H. Liu, who sorted 20 languages by mean dependency distance using non-parallel corpora and implicitly assuming that the order of languages was total in a mathematical sense \cite[Figures 7-8]{Liu2008a}. We have found that English is more optimized than German in agreement with previous research using non-parallel corpora and $D$ instead of $\Omega$ \cite{Gildea2010a} (as well as in the pioneering research of \cite{Liu2008a}). 
The consensus in Fig. \ref{Hasse_diagrams_figure} goes much further indicating which languages are more likely 
\begin{itemize}
\item 
To be less optimized than German: Turkish, that is the only language in the intersection between the set of languages that are less optimized than German in Fig. \ref{Hasse_diagrams_figure} (a), namely Turkish, with those that are less optimized than German in Fig. \ref{Hasse_diagrams_figure} (b), i.e. Chinese, Turkish and Korean.
\item
To have an intermediate level of optimization between English and German: Czech and Finnish (the intersection between Czech, Finnish, Hindi from Fig. \ref{Hasse_diagrams_figure} (a) and Czech, Finnish and Japanese from Fig. \ref{Hasse_diagrams_figure} (b)).
\item
To be more optimized than English: Portuguese, Italian, French and Spanish, by similar arguments. 
\end{itemize}
Taking a reductionistic approach to the complexity of languages, we have investigated the effect of removing all function words assuming that nothing else needs to be done for a fair comparison of languages once they are removed. We have shown that the ranking of languages is robust with respect to the proportion of function words in the language (Fig. \ref{correlogram_figure}). 
  
Furthermore, the diagram unveils a myriad of dominance relationships that have never been explored before, e.g., Russian is likely to be more optimized than Chinese or Korean based on $\Omega$. 
To understand the power of the new ranking method that we have developed, notice that the parallel data that we have used comprises 20 languages and for that reason, there are {\em a priori} 
\begin{equation}
2{20 \choose 2} = 380
\end{equation}
questions that can be made on whether language $x$ is more optimized than language $y$, which may potentially result in
\begin{equation}
{20 \choose 2} = 190
\end{equation}
relationships indicating that language $x$ is more optimized than language $y$.
\cite{Gildea2010a} only checks 1 out of 190 (German versus English; two Germanic languages) with a score ($D$) that is statistically problematic as we explain in our article. By analyzing the structure of the graph, one concludes that Figure 7 (a) shows that language $x$ is more optimized than language $y$ for 149 pairs of languages while Figure 7 (b) does it for 155 pairs of languages. This is a quantum leap for research in quantitative typology.

We hope that our research stimulates the development of parallel syntactic dependency treebanks comprising a sample of languages as large as possible. 
 
\subsection*{Question 4. Under what conditions is optimization stronger?}

From a theoretical standpoint, two predictions have been made: one is that pressure for DDm should be stronger in longer sentences, and the other is that DDm is more likely to be surpassed in short sequences by other principles \cite{Ferrer2014a, Ferrer2019a}. The second prediction is based on the first prediction and the fact that DDm is in conflict with other principles of word order \cite{Ferrer2014a,Ferrer2013f,Ferrer2019a}. If we focus on the optimal placement of a single head and its dependents, it turns out that the head must be put at the center according to DDm but it should be put at one of the ends according to the principle of surprisal minimization (Sm) or predictability maximization (PM): last to minimize the uncertainty about the head, or first to minimize the uncertainty about the dependents \cite{Ferrer2013f}. Interestingly, putting the head at one of the ends maximizes dependency lengths, i.e. the optimal placement of the head according to Sm/PM is the worst case for DDm \cite{Ferrer2013f}.

Our analysis has corroborated these two predictions. As for the first prediction, the finding that $\Omega$ is higher in longer sentences (Fig. \ref{heat_map_figure} and Table \ref{correlation_with_sentence_length_table}), indicates that longer sentences are more optimized. Such a tendency defies a trivial explanation. In shuffled sentences (no DDm), $\Omega$ is expected to be $0$ (recall the stability under random linear arrangement). As in the other (preexisting) scores lower values indicate higher optimization, the corresponding trend for the other scores would be a tendency of the score to reduce in longer sentences, but that was found in no score and no language except for $D_z$ (Appendix \ref{results_appendix}). This is consistent with the fact that,
in real sentences, $D$, one of the components of $\Omega$, as well as $\meanD$, tends to grow with sentence length \cite{Ferrer2004b, Park2009a, Ferrer2013c, Jiang2015a, Futrell2015a, Ferrer2020a}. The same tendency was found for $\Gamma$ in Romanian \cite{Ferrer2004b} and for $\Delta$ in Latin and ancient Greek \cite{Gulordava2015} (as for the latter notice that $D_{min}$ was defined assuming projectivity). Our analyses confirm a tendency of all scores except $\Omega$ and $D_z$ to increase in longer sentences: in all languages concerning $D$ and $\meanD$ and $\Delta$, and in fraction of them for $\Gamma$ (Appendix \ref{results_appendix}). Thus, by evaluating optimality based on $D$, $\Delta$ or $\Gamma$, rather than with $\Omega$, one could wrongly conclude that longer sentences are less optimized. Such disagreement with respect to $\Omega$ can be explained by the fact that these scores do not include the random baseline (Table \ref{scores_table}).

As for the second prediction, we found that $\Omega$ is significantly small in short sentences (sentences of length 3 or 4) for some languages, contradicting DDm (Table \ref{anti_dependency_distance_table}). 
Interestingly, we found more languages showing anti-DDm effects than in previous research \cite[Table 3]{Ferrer2019a}: the number of languages where this happens grows in all cases (SUD had not been considered before). That could be explained by an increase in the number of languages in the UD dataset with respect to \cite{Ferrer2019a}, but not in the Prague and Stanford datasets, whose number of languages remains the same. Therefore, $\Omega$ (in particular, $\optimalityn$) appears to have more statistical power to unveil anti-DDm effects than all the other scores (Appendix \ref{results_appendix}) as well as the statistic used in previous research \cite{Ferrer2019a}. 
Finally, our findings clarify the discussion on the existence of anti-DDm effects in treebanks \cite{Lei2020a}: they do exist in sentences of length 3 or 4.

\subsection*{Question 5. Why do languages deviate from optimality?}

A crucial question is: if dependency length minimization is a fundamental principle of languages, why are real languages not reaching the theoretical minimum sum of dependency lengths? 
The first answer stems from the arguments above: a conflict between DDm and other word order principles. 
Another possible answer is that language production is an inherently online and incremental process under severe memory limitations, in line with Christiansen and Chater's now-or-never bottleneck \cite{Christiansen2015a}: 
real speakers produce sentences on the fly while algorithms to calculate minimum linear arrangements take whole sentences as input \cite{Chung1984, Shiloach1979}.

Finally, from an evolutionary standpoint, deviations from optimality can originate for many reasons that are fuel for future research, e.g. stochasticity, 
insufficient time to reach the optimum or the existence of local maxima 
\cite{Perez-Escudero2009a}. Furthermore, parts of a system with a smaller contribution to the cost function are expected to show a largest deviation from the optimum \cite{Perez-Escudero2009a}. 

\subsection*{Question 6. What are the best optimality scores for cross-linguistic research?} 

In this article, we investigated mathematical properties of $\Omega$ and compared them against other scores. Our empirical analysis suggests that $\Omega$ has a greater capacity to detect languages where DDm is weak or surpassed by other word order principles (Question 1), unveil higher optimization in longer sentences (Question 4) and detect anti-DDm effects in short sequences (Question 4). 
These findings, in combination with the many theoretical caveats of raw dependency distances that we have unveiled in Section \ref{optimality_metric_section}, strongly suggest that  
raw dependency distances are a poor reflect of DDm and that scores such as $\Omega$ are crucial for progress in research on that optimization principle and related memory constraints \cite{Liu2008a,Jiang2015a}. Finally, we have shown that in terms of the ranking of languages by optimality, $\Omega$ cannot be replaced by the widely-used $D$ \cite{Gildea2007a,Gildea2010a,Futrell2015a,Futrell2020a}, the closely-related $\bar{d}$ \cite{Ferrer2004b} (neither by $\Delta$ \cite{Gulordava2015}). We have shown the hierarchical ranking of languages by $D$ is actually distorted substantially with respect to the original ranking by $\Omega$. All these findings on the superiority of $\Omega$ strongly suggest that the theoretical properties of $\Omega$ actually (Section \ref{optimality_metric_section}) imply a remarkable difference with respect to other scores in a practical setting. For this reason, it is safer to conclude that Romance languages are more optimized then English based on $\Omega$ (Fig. \ref{Hasse_diagrams_figure})) rather than the other way around based on $D$ (Fig. \ref{Hasse_diagrams_D_figure})).
However, further theoretical and empirical research should be carried out in relation to {$\Omega$ and alternative optimality} scores. 

\subsection*{Question 7. What is the contribution of function words to the optimization of languages?} 

We have investigated the role of function words in DDm in two ways. First, we have investigated the effect of how function words are linked considering two rather opposite annotation styles: UD, which is content-head, and SUD, which is function-head \cite{sud}. In addition to confirming that dependency distances are shorter in SUD than in UD \cite{Osborne2019a} (Table \ref{optimality_by_family_table}), we have found that all languages (with no exception) are significantly optimized only when SUD is used (Table \ref{patterns_in_optimality_table}) and that SUD shows fewer languages with anti-DDm effects (Table \ref{anti_dependency_distance_table}), as expected by the direct influence of DDm in the annotation criteria of SUD \cite{sud}.
Second, we have also shown that the removal of function words reduces the optimality of dependency distances in practically all languages of the parallel collection. 
The outcomes of these two approaches to the role of function words strongly suggest that function words help languages to reduce dependency distance. Further research is needed to clarify if dependency distance minimization is a driving force during grammaticalization \cite{Hopper1991a}.

\subsection*{Limitations, connections with different fields and future work}

The score has implications for many fields of language research and network science based on its power and also on current limitations.

\paragraph{Dependency linguistics and dependency treebanks}

In this article we have investigated the optimality of languages using doculects as proxies. A large collaborative effort is needed to reduce the distance between the doculects and to reduce the scarcity of samples from more naturalistic context, e.g. oral language. 
As for reducing the distance between doculects, a crucial task is increasing the number of languages available in the parallel collections and reinforcing the homogeneity of wordness criteria (Appendix \ref{materials_appendix}).

Our findings have implications for the debate on the proper syntactic annotation \cite{sud,Osborne2019a,Yan2019a}. We have shown that SUD shows DDm effects more easily (see Question 7 above). Therefore, whether DDm is a universal (exceptionless) phenomenon on a global scale or the scope of anti-DDm effects may depend on the annotation criteria. Further research on the foundations of these criteria is necessary \cite{Ferrer2002f,Futrell2019a}.

\paragraph{Quantitative typology} We have already shown the capacity of the the new score to rank languages. 
We have investigated the optimality of languages syncronically, based on current data about them. One of the least optimized languages is Warlpiri, a non-configurational language \cite{Hale1983a}. The finding, though based on non-parallel data, suggests that $\Omega$ is able to quantify extreme cases of word order freedom \cite{Evans2009a} and thus could open new avenues for research in linguistics.
When using parallel data, we have found that Korean and Turkish, two verb-final languages, are consistently the least optimized languages (Fig. \ref{Hasse_diagrams_figure}). The combination of $\Omega$ and parallel data would allow one to quantify, in a statistically rigorous way, the degree of optimization of verb-initial, verb-medial and verb-final languages.
Diacrony, namely the evolution of the optimality of a language over time, is one of the next challenges \cite{BingliLiu2017a}. 

\paragraph{Historical linguistics} 
It has been shown that dependency distances have been reducing over time in English using a variant of $\Gamma$ \cite{Tily2010a} and also using $D$ and $NDD$ \cite{Lei2020a}. 
$\Delta$ has unveiled a similar optimization process in Latin and Ancient Greek \cite{Gulordava2015}. In contrast, dependency distances have been increasing over time in Chinese using a variant of $\meanD$ \cite{BingliLiu2013a,BingliLiu2017a}. An intriguing question is whether all these conclusions will remain valid when revised using a dual optimality score such as $\Omega$ (Table \ref{scores_table}). The next frontier will be unveiling how DDm has shaped languages throughout phylogeny \cite{Dunn2011a}.

\paragraph{Cognitive science and clinical linguistics}

A limitation of the view of dependency distance as a cognitive cost is that its evidence comes only from corpus studies, computational simulations, and experiments on comprehension \cite{Liu2017a,Temperley2018a}.
As far as we know, only partial information about dependency distances (not distances over the whole sentence as measured by $D$) has been used in experiments on comprehension with real subjects \cite{Ferrer2014f}. 
$\Omega$, as a measure the degree of optimality of a whole sentence from a dependency distance perspective, inherits all limitations of past research supporting dependency distance minimization.
Such limitation also concerns our score $\Omega$, that has been supported using a combination of corpus analysis and various kinds of theoretical approaches. 
Furthermore, it is yet to be demonstrated that $\Omega$ approximates the real cognitive effort of the human processor better than the existing alternative measures. Future research should provide evidence that $\Omega$ is more strongly correlated with behavioral measures (e.g., reading or reaction times) and neural measures in the brain (e.g., event-related potentials or advanced neuroimaging methods).   

In addition to functioning as a measure of the degree of optimization of a language, $\Omega$ can also shed light on a critical question for clinical applications \cite{Cohen2014a}: 
whether dependency distances reflect the linguistic competence of an individual speaker. Research on a potentially positive correlation between dependency distances and language proficiency in second language learners has led to inconsistent conclusions using mean dependency distances \cite{Ouyang2017a,Komori2019a}. Such inconsistencies may have originated from the limitations of the score that has been used. More solid and coherent conclusions may be reached using $\Omega$.

\paragraph{Network science} 
Our theoretical research is relevant for network science in general. We have put forward a new way of normalizing physical distances. The analysis can be extended to arbitrary graphs or spaces with more dimensions, e.g. the 2-dimensional layouts of the popular transportation networks 
\cite{Barthelemy2011a}, and it may guide future research on the normalization of topological distances. It has been proposed that these distances should be normalized as \cite{Zamora-Lopez2019a} 
\begin{eqnarray*}
\lambda' & = & \frac{L}{L_{min}} \\ 
\lambda'' & = & \frac{L}{L_{max} - L_{min}},
\end{eqnarray*}
where $L$ is the observed topological distance and $L_{min}$ and $L_{max}$ are the minimum and the maximum topological distance of a network with the same number of vertices and edges. Notice that $\lambda'$ is the counterpart of $\Gamma$ in topological distance and is thus a singular optimality score for not incorporating the random baseline. Interestingly, $\lambda''$ does not incorporate any random baseline but is defined over two baselines ($L_{min}$ and $L_{max}$). Powerful normalizations of $L$ may be developed following the mathematical structure of $\Omega$, e.g., incorporating some random graph model \cite{Newman2010a,Barthelemy2011a}.

\section{Materials and methods}


Each dataset is a combination of a collection and an annotation style. See Appendix \ref{materials_appendix} for further details about them. 
\textcolor{red}{Preprocessed data and code are deposited in GitHub (\url{https://github.com/lluisalemanypuig/optimality-syntactic-dependency-distances}).} 

The analyses are restricted to $n \geq 3$ because a measure of optimality such as $\Omega$ is useful when $D$ can vary given $n$ (for $n = 1$, $D = 0$; for $n = 2$, $D = 1$). See Appendix \ref{maths_appendix} for further details and the implications of such a constraint.  

The optimality score $\Omega$ is defined on two baselines: the random baseline ($D_{rla}$) and the optimal baseline ($D_{min}$). Both baselines are unrestricted in the sense that all shufflings are allowed (the $n!$ permutations have non-zero probability).  
Random and optimal baselines that assume projectivity or consistent branching, hence disallowing certain shufflings, have dominated research on DDm
\cite{Liu2008a,Gildea2007a, Park2009a,Gildea2010a,Futrell2015a,Futrell2020a} 
and optimality scores \cite{Tily2010a,Gulordava2015} for historical reasons. 
However, these linguistic constraints (e.g., projectivity) could be an epiphenomenon of DDm \cite{Gomez2019a,Gomez2016a,Ferrer2013e,Ferrer2008e} and thus could mask the effect of dependency distance minimization (DDm). Our concern is supported by the fact that planarity, a relaxation of projectivity, reduces the statistical power of a test of DDm \cite{Ferrer2019a}. From a theoretical standpoint, separating DDm from these additional linguistic constraints compromises the parsimony of word order theory \cite{Ferrer2016a}.


The words of a sentence are shuffled replacing the linear arrangement of the sentence with a uniformly random linear arrangement using Durstenfeld's algorithm \cite{Durstenfeld1964a}.


The positive association between $\Omega$ and $n$ is analyzed through a one-sided correlation test with Kendall's $\tau$ as statistic. Kendall's $\tau$ is chosen for its advantages over the traditional Pearson correlation: while Pearson correlation is a measure of linear association, Kendall's $\tau$ correlation is a measure of non-linear (but monotonic) association \cite{Gibbons2010a,Embrechts2002a}.   
For a certain language, the test is applied on the average value of $\Omega$ as a function of $n$, namely, an estimate of the conditional expectation of $\Omega$ given $n$, which is in direct correspondence with the heatmaps in Fig. \ref{heat_map_figure}. 
Without such preprocessing, the test would be sensitive to the distribution of sentence lengths, which is in turn dependent on features of the source texts such as genre and modality \cite{Rudnicka2018a}.
The same procedure is used for the association between other scores and $n$.

We use the significance level of 0.05. 
All statistical tests are one-sided. The Monte Carlo test explained in Appendix \ref{methods_appendix} is used to check if $\optimalityL$ or $\optimalityn$ are significantly large (or significantly small) with respect to the random baseline. A Monte Carlo test is also used to assess whether the values of $\optimalityL$ of two languages differ significantly, a crucial step for the construction of the Hasse diagram. Monte Carlo tests are chosen to turn the tests non-parametric (assumption free) while maximizing statistical power. Appendix \ref{methods_appendix} explains how the Hasse diagram is built in detail, how the Holm correction is implemented and how the $p$-values from Monte Carlo tests are preprocessed before applying the Holm correction.

\begin{acknowledgments}
This article is dedicated to the memory of G. Altmann (1931-2020) \cite{Koehler2021a}. We are grateful to C. Bentz, N. Catal\`a, M. H. Christiansen, M. Gustison and A. Hern\'andez-Fern\'andez for helpful comments. 
RFC and LAP are supported by the grant TIN2017-89244-R from MINECO (Ministerio de Econom\'ia, Industria y Competitividad). RFC is also supported by the recognition 2017SGR-856 (MACDA) from AGAUR (Generalitat de Catalunya). LAP is also supported by Secretaria d'Universitats i Recerca de la Generalitat de Catalunya and the Social European Fund.
CGR is supported by the European Research Council (ERC), under the European Union's Horizon 2020 research and innovation
programme (FASTPARSE, grant agreement No 714150), the ANSWER-ASAP project (TIN2017-85160-C2-1-R) from ERDF/MICINN-AEI, 
Xunta de Galicia (ED431C 2020/11 and an Oportunius program grant to complement ERC grants); the CITIC research center is funded by ERDF and Xunta de Galicia (ERDF - Galicia 2014-2020 program, grant ED431G 2019/01).
JLE is funded by the grants TIN2016-76573-C2-1-P and PID2019-109137GB-C22 from MINECO.
\end{acknowledgments}

R.F.C. designed research; R.F.C and C.G.R. wrote the paper; C.G.R. collected and preprocessed data; C.G.R. and R.F.C. analyzed the data; R.F.C, C.G.R and J.L.E developed the mathematical arguments; C.G.R., J.L.E. and L.A.P. wrote the core code; all authors verified the mathematical arguments.

\appendix

\section{The mathematics of optimality scores}

\label{maths_appendix}

\subsection{The new optimality score}

For a tree $t$ of $n$ vertices, our optimality metric is  
\begin{equation}
\Omega^t = \frac{D_{rla} - D^t}{D_{rla} - D_{min}^t}
\label{raw_optimality_metric_tree_equation}
\end{equation}
with $n \geq 3$ and $D_{rla}$ defined as in Eq. \ref{sum_of_dependency_lengths_random_equation}. 

\subsubsection{The maximum value of $\Omega$}

Showing that $\Omega^t \leq 1$ for $n\geq 3$ is straightforward.  
By definition, $D_{min}^t \leq D^t$ and then  
\begin{equation*}
D_{rla} - D^t  \leq D_{rla} - D_{min}^t.
\end{equation*} 
Dividing both sides of the previous inequality by $D_{rla} - D_{min}$ and recalling that $D_{rla} \geq D_{min}$ \cite{Esteban2016a}, 
\begin{equation*}
\frac{D_{rla} - D^t}{D_{rla} - D_{min}^t} = \Omega^t \leq 1.
\end{equation*}  
Therefore, $\Omega^t$ is constant under minimum linear arrangement. 

\subsubsection{A lower bound of $\Omega$}

$D_{max}^t$, the maximum value of $D^t$ that a tree $t$ can achieve over the $n!$ linear arrangements of its vertices, satisfies \cite{Ferrer2020a}
\begin{eqnarray}
D_{max}^t & \leq & D_{max}^{b-bistar} \nonumber \\ 
          & =    & \frac{1}{4}\left[3(n-1)^2 + 1 - n \bmod 2\right],
\label{extreme_sum_of_edge_lengths_equation}
\end{eqnarray}
where $D_{max}^{b-bistar}$ is the value of $D_{max}^t$ of a balanced bistar ({\em b-bistar}) tree. 
The next theorem gives a lower bound of $\Omega^t$ as a function of the size of the tree that follows from Eq. \ref{extreme_sum_of_edge_lengths_equation}.

\begin{thm}
Let $\Omega_{min}^t$ be the minimum value of $\Omega$ of a tree $t$ over its $n!$ linear arrangements. 
Let $\alpha$ be the minimum of $\Omega_{min}^t$ over all trees of $n$ vertices, i.e.
\begin{equation*}
\alpha = \min_{t \in \tau} \left\{ \Omega_{min}^t \right\},
\end{equation*}
where $\tau$ is the set of all unlabelled trees of $n$ vertices.
We have that 
\begin{equation*}
- \frac{5n-8-5(n \bmod 2)}{n+2- n\bmod 2} \leq \alpha
\end{equation*}
and then $\Omega_{min}^t \geq -5$ for $n \geq 3$. 
\label{lower_bound_of_Omega_min_theorem}
\end{thm}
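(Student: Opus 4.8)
The plan is to bound $\Omega_{min}^t$ from below by combining the upper bound on $D_{max}^t$ from Eq.~\eqref{extreme_sum_of_edge_lengths_equation} with the range of variation of $D_{min}^t$ from Eq.~\eqref{range_of_variation_of_D_min_equation}. Start from the definition: for any linear arrangement of a fixed tree $t$,
\begin{equation*}
\Omega^t = \frac{D_{rla} - D^t}{D_{rla} - D_{min}^t} \geq \frac{D_{rla} - D_{max}^t}{D_{rla} - D_{min}^t},
\end{equation*}
since $D^t \leq D_{max}^t$ and the denominator $D_{rla} - D_{min}^t$ is positive (by $D_{min}^t \leq D_{rla}$, which is strict for $n \geq 3$). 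Hence $\Omega_{min}^t \geq (D_{rla} - D_{max}^t)/(D_{rla} - D_{min}^t)$. Now observe that the right-hand side is minimized (over the freedom we still have, namely which tree $t$ we picked and hence what $D_{max}^t$ and $D_{min}^t$ are) by making the numerator as negative as possible and the denominator as small as possible. The numerator $D_{rla} - D_{max}^t$ is most negative when $D_{max}^t$ is largest; by Eq.~\eqref{extreme_sum_of_edge_lengths_equation} this is at most $D_{max}^{b\text{-}bistar}$. The denominator $D_{rla} - D_{min}^t$ is smallest when $D_{min}^t$ is largest, and by Eq.~\eqref{range_of_variation_of_D_min_equation} we have $D_{min}^t \leq \lfloor n^2/4 \rfloor$, so $D_{rla} - D_{min}^t \geq \frac{1}{3}(n^2-1) - \lfloor n^2/4 \rfloor$.

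The subtlety is that these two extremes need not be attained by the same tree, so the bound obtained by plugging in the worst case for each part separately is a valid lower bound but possibly not tight — which is exactly what the theorem statement allows (it asserts a lower bound on $\alpha$, not its exact value). So I would simply substitute both worst cases:
\begin{equation*}
\alpha \;\geq\; \frac{D_{rla} - D_{max}^{b\text{-}bistar}}{\frac{1}{3}(n^2-1) - \left\lfloor n^2/4 \right\rfloor}.
\end{equation*}
Then it is a matter of algebraic simplification. Plugging in $D_{rla} = \frac{1}{3}(n^2-1)$ and $D_{max}^{b\text{-}bistar} = \frac{1}{4}[3(n-1)^2 + 1 - n\bmod 2]$, and handling the floor by splitting into the cases $n$ even ($\lfloor n^2/4\rfloor = n^2/4$) and $n$ odd ($\lfloor n^2/4 \rfloor = (n^2-1)/4$), both the numerator and denominator become explicit rational functions of $n$. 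Carrying the parity through as $n \bmod 2$, the quotient should collapse to $-\dfrac{5n - 8 - 5(n\bmod 2)}{n + 2 - n\bmod 2}$, which is the claimed bound.

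For the final clause, $\Omega_{min}^t \geq -5$ for $n \geq 3$: since the displayed expression is a lower bound for every tree of $n$ vertices, it suffices to show $-\dfrac{5n-8-5(n\bmod 2)}{n+2-n\bmod 2} \geq -5$, i.e. that $5n - 8 - 5(n\bmod 2) \leq 5(n + 2 - n\bmod 2)$, which simplifies to $-8 \leq 10$ after the $5n$ and $5(n\bmod 2)$ terms cancel — trivially true. One should double-check the denominator is positive for all $n \geq 3$ in both parities (it equals $n+2$ when $n$ even and $n+1$ when $n$ odd, both positive), so no sign flip occurs. The main obstacle I anticipate is purely bookkeeping: keeping the parity term $n\bmod 2$ consistent through the floor function and the algebra, and making sure the direction of the inequality is preserved when dividing by the (positive) denominator. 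There is no deep idea needed beyond the monotonicity observation that pushing $D^t$ up and $D_{min}^t$ up both decrease $\Omega^t$.
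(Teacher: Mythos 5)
Your proposal is correct and follows essentially the same route as the paper's proof: bound $\Omega_{min}^t$ below by replacing $D_{max}^t$ with $D_{max}^{b\text{-}bistar}$ and $D_{min}^t$ with $D_{min}^{star}$ (the paper's bound $Z_1$), then simplify by parity and observe the resulting expression is at least $-5$. The only difference is cosmetic: the paper additionally derives a second candidate bound $Z_2$ and discards it as weaker, a step your argument rightly omits.
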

\begin{proof}
We have that
\begin{equation}
\Omega^t = \frac{D_{rla}-D^t}{D_{rla}-D_{min}^t},
\label{Omega_specific_tree_equation}
\end{equation}
where $D^t$ is the value of $D$ in some linear arrangement of $t$. 
The fact that the denominator of Eq. \ref{Omega_specific_tree_equation} is positive for any linear arrangement ($D_{rla} - D_{min}^t \geq 0$ \cite{Esteban2016a}) and that the numerator can be negative depending on the linear arrangement 
($D_{rla} \leq D_{max}^t$ by the definition of $D_{rla}$ and $D_{max}^t$), yields that
\begin{equation}
\Omega_{min}^t = \frac{D_{rla}-D_{max}^t}{D_{rla}-D_{min}^t} \leq 0,
\label{Omega_min_equation}
\end{equation}
where the numerator is negative and the denominator positive.
Then we have that 
\begin{eqnarray*}
\alpha & \geq & \frac{\min_{t \in \tau} \left\{D_{rla} - D_{max}^t \right\}}{\min_{t \in \tau} \left\{D_{rla} - D_{min}^t \right\} } \\
       & =    & \frac{D_{rla} - \max_{t \in \tau} \left\{D_{max}^t \right\}}{D_{rla} - \max_{t \in \tau} \left\{D_{min}^t \right\} } \\
       & =    & \frac{D_{rla} -  D_{max}^{b-bistar}}{D_{rla} - D_{min}^{star}} \\
       & =    & Z_1
\end{eqnarray*}
thanks to $D_{max}^t \leq D_{max}^{b-bistar}$ (Eq. \ref{extreme_sum_of_edge_lengths_equation})
and $D_{min}^t \leq D_{min}^{star}$ \cite{Esteban2016a}.
Alternatively,
\begin{eqnarray*}
\alpha & \geq & \min_{t \in \tau} \left\{ \frac{D_{rla}}{D_{rla} - D_{min}^t}\right\} - \max_{t \in \tau} \left\{ \frac{D_{max}^t}{D_{rla} - D_{min}^t}\right\} \\
       & \geq & \frac{D_{rla}}{\max_{t \in \tau} \left\{D_{rla} - D_{min}^t\right\}} - \frac{\max_{t \in \tau} \left\{D_{max}^t\right\}}{\min_{t \in \tau} \left\{D_{rla} - D_{min}^t\right\}}  \\ 
       & =    & \frac{D_{rla}}{D_{rla} - \min_{t \in \tau} \left\{ D_{min}^t\right\}} - \frac{\max_{t \in \tau} \left\{D_{max}^t\right\}}{D_{rla} - \max_{t \in \tau} \left\{D_{min}^t\right\}} \\ 
       & =    & \frac{D_{rla}}{D_{rla} - D_{min}^{linear}} -  \frac{D_{max}^{b-bistar}}{D_{rla} - D_{min}^{star}} \\
       & =    & Z_2
\end{eqnarray*}
thanks to $D_{max}^t \leq D_{max}^{b-bistar}$ (Eq. \ref{extreme_sum_of_edge_lengths_equation})
and $D_{min}^{linear} \leq D_{min}^t \leq D_{min}^{star}$ \cite{Esteban2016a}.
Combining both lower bounds of $\Omega^t$,
\begin{eqnarray*}
\alpha & \geq & \max(Z_1, Z_2) \\
       & =    & Z_1      
\end{eqnarray*}
because $D_{min}^{linear} \leq D_{min}^{star}$ implies $Z_2 \leq Z_1$.

On the one hand, Eq. \ref{sum_of_dependency_lengths_random_equation} and Eq. \ref{extreme_sum_of_edge_lengths_equation}, give
\begin{equation*}
D_{rla} - D_{max}^{b-bistar} = \frac{1}{12}(-5n^2 + 18n - 16 + 3 (n \bmod 2)).
\end{equation*}
On the other hand, Eq. \ref{sum_of_dependency_lengths_random_equation} and \cite{Iordanskii1974a,Ferrer2013b,Esteban2016a}
\begin{equation}
D_{min}^{star} = \frac{1}{4}(n^2 - {n \bmod 2}),
\end{equation}
give 
\begin{equation*}
D_{rla} - D_{min}^{star} = \frac{1}{12}[n^2 - 4 + 3(n \bmod 2)].
\end{equation*}
Then
\begin{equation*}
Z_1 = \frac{-5n^2 + 18n - 16 + 3 (n \bmod 2)}{n^2 - 4 + 3(n \bmod 2)},
\end{equation*}
namely  
\begin{equation*}
Z_1 = - \frac{5n-8}{n+2}
\end{equation*}
when $n$ is even and
\begin{equation*}
Z_1 = - \frac{5n-13}{n+1},
\end{equation*}
when $n$ is odd. 
Finally,
\begin{equation*}
Z_1 = - \frac{5n-8-5(n \bmod 2)}{n+2-n \bmod 2}.
\end{equation*}
A simple analysis of the final expression of $Z_1$ indicates that $\Omega^t \geq -5$ for $n \geq 3$. 
\end{proof}

\subsubsection{The minimum $\Omega$ up to some tree size and an ansatz}

Figure \ref{Omega_min_figure} A shows the exact value of $\alpha$
as a function of $n$ up to $n_{max} = \nMaxOmegaMin$. $\alpha$ was calculated exactly using the algorithm explained below. In all cases, $\Omega_{min}^t$ was minimized by bistar trees. 
In addition, a bistar tree defines a class of trees that includes the balanced bistar tree and the star tree \cite{Ferrer2020a}. When $\Omega_{min}^t \leq 0$, the former tree minimizes the numerator of $\Omega_{min}$ and the latter minimizes its denominator (proof of theorem \ref{lower_bound_of_Omega_min_theorem}). These facts allow one to formulate the following ansatz: for any $n \geq 3$, $\Omega_{min}^t$ is minimized by bistar trees. Notice that, for certain values of $n$, the balanced bistar tree does	 not minimize $\Omega_{min}^t$ (Figure \ref{Omega_min_figure} A).

Applying this hypothesis, Figure \ref{Omega_min_figure} A shows the predicted value of $\alpha$ obtained assuming that $\Omega_{min}^t$ is also minimized by bistar trees for $n > n_{max}$. 
Such a prediction is 
\begin{equation} 
\alpha^{bistar} = \min_{t \in \tau^{bistar}} \left\{ \Omega_{min}^t \right\}, 
\label{setup_equation} 
\end{equation}
where $\tau^{bistar}$ are all the unlabelled bistar trees of $n$ vertices. 

Eq. \ref{setup_equation} that can be expressed equivalently as  
\begin{equation*} 
\alpha^{bistar} = \min_{\left\lceil\frac{n}{2}\right\rceil \leq k_1 \leq n - 1} \left\{ \frac{D_{rla} - D_{max}^{bistar(k_1)}}{D_{rla} - D_{min}^{bistar(k_1)}} \right\},
\end{equation*}
where $bistar(k_1)$ is a bistar tree whose most connected vertex has degree $k_1$. 
Then $D_{max}^{bistar(k_1)}$ and $D_{min}^{bistar(k_1)}$ can be calculated using formulae with two parameters, $n$ and $k_1$ \cite{Ferrer2020a}.  
With this approach, it is easy to check numerically that (Fig. \ref{Omega_min_figure} B) 
\begin{equation*} 
\lim_{n \rightarrow \infty} \alpha^{bistar} = -2
\end{equation*}
and 
\begin{equation*}
\alpha^{bistar} \leq \Omega_{min}^{linear}, 
\label{linear_tree_is_worse_equation}
\end{equation*}
where $\Omega_{min}^{linear}$ is the value of $\Omega_{min}^t$ of a linear tree. $\Omega_{min}^t$ is easy to calculate given the formulae for $D_{min}^{linear}$ and $D_{max}^{linear}$ in Table 1 of \cite{Ferrer2020a}.

One the one hand, we have seen that $-5 \leq \Omega_{min}^t$. On the other hand,  
$\alpha$ is minimized at $n = \nMaxOmegaMin$ when $n \leq \nMaxOmegaMin$ (Figure \ref{Omega_min_figure} A). That point corresponds to a bistar tree with $D_{min} = 84$, $D_{max} = 396$ and $D_{rla}=575/3$, hence $\Omega_{min}^t = -\frac{613}{323} \approx - 1.89$ for that tree. 
This allows one to conclude that $-c \leq \Omega_{min}^t$, where $c$ is some constant such that $\frac{613}{323} \leq c \leq 5$. We conjecture that $c = 2$.  

\begin{figure}[tbh]
\centering
\includegraphics[width = \linewidth]{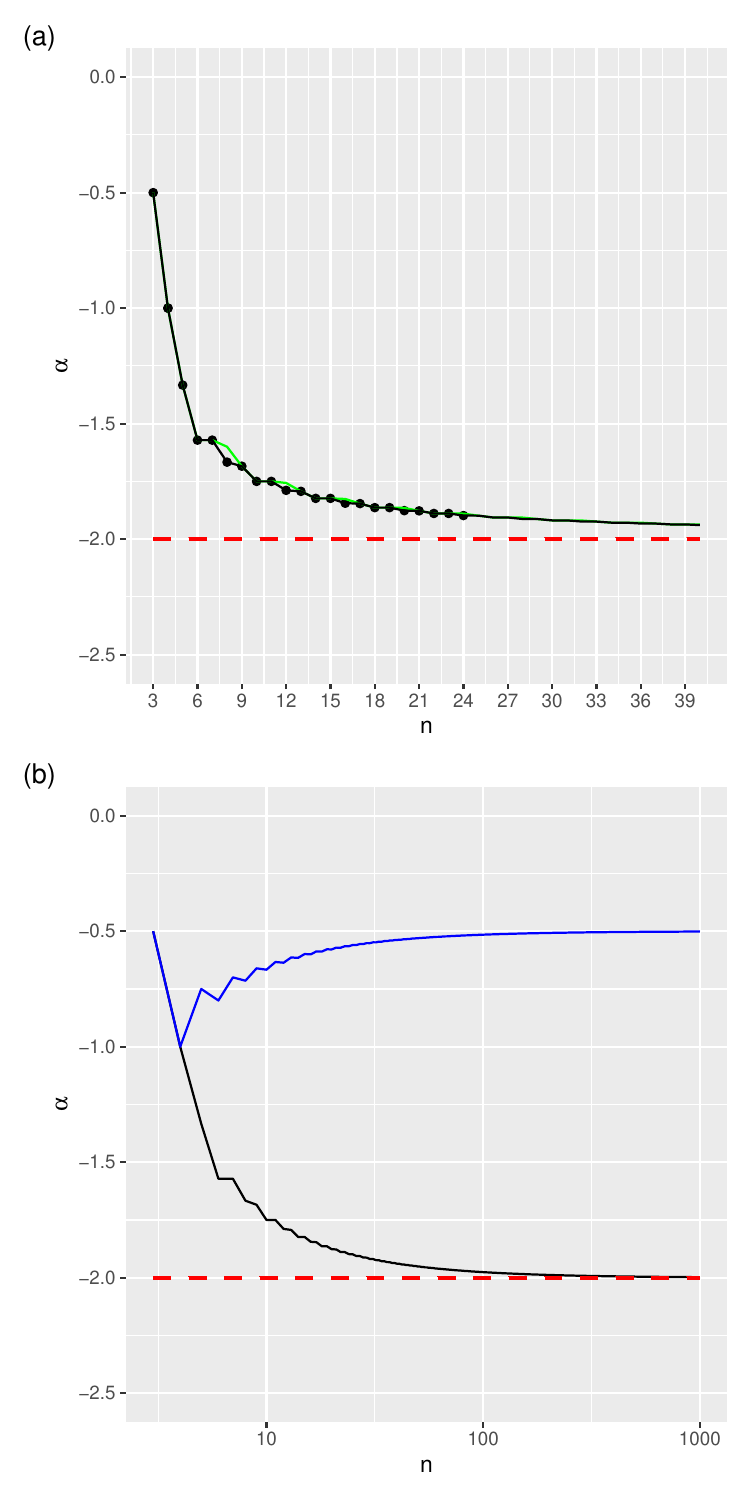}
\caption{\label{Omega_min_figure} The minimum value of $\Omega_{min}^t$ versus $n$, the number of vertices of the tree. (a) $\alpha$, the true minimum (circles), and the ansatz, $\alpha^{bistar}$ based on the smallest $\Omega_{min}^t$ that a bistar tree can achieve (black solid line) matches $\alpha$ perfectly while $\Omega_{min}^{b-bistar}$ (green solid line) is above the ansatz for certain values of $n$. As a guide to the eye, the asymptotic lower bound of $\alpha^{bistar}$ (red dashed line) is also shown. (b) $\alpha^{bistar}$ (black solid line), $\Omega_{min}^{linear}$ (blue solid line) and the asymptotic lower bound (red dashed line). }
\end{figure}

For a given $n$, we calculated $\alpha$ exactly with the following algorithm:
\begin{enumerate}
\item
Initialize $\alpha$ with $\alpha^{bistar}$. The choice of $\alpha^{bistar}$ as the initial value to surpass is justified by the ansatz.   
\item
\label{all_trees_step}
For each distinct unlabelled tree $t$ that it is neither a bistar nor a linear tree
  \begin{enumerate}
  \item
  If $t$ is a caterpillar, calculate $D_{min}^t$ using the fast formulae in Table 1 of \cite{Ferrer2020a}; otherwise calculate $D_{min}^t$ using Shiloach's algorithm \cite{Shiloach1979,Esteban2015a}. 
  \item
  Calculate 
  \begin{equation*}
  \Omega_{lower}^{t} = \frac{D_{rla} - D_{max}^{b-bistar}}{D_{rla}-D_{min}^t},
  \end{equation*}
  which is a lower bound of $\Omega_{min}^{t}$ that is obtained replacing $D_{max}^t$ by $D_{max}^{b-bistar}$ in $\Omega_{min}^t$ (equation \ref{Omega_min_equation}). 
  \item
  If $\alpha \geq \Omega_{lower}^{t}$, namely if it is worth computing $D_{max}^t$ to calculate $\Omega_{min}^t$,     
    \begin{enumerate}
    \item
    \label{D_max_step}
    Calculate $D_{max}^t$ by brute force using a constraint model (described below). 
    \item
    Calculate $\Omega_{min}^t$ (equation \ref{Omega_min_equation}) applying the values of $D_{min}^t$ and $D_{max}^t$ that have been obtained in the preceding steps. 
    \item 
    Set $\alpha$ to $\min(\alpha, \Omega_{min}^t)$.
    \end{enumerate}
  \end{enumerate}
\end{enumerate}
In step \ref{all_trees_step}, the linear tree is excluded by the initial value of $\alpha$ and Eq. \ref{linear_tree_is_worse_equation}.
For step \ref{all_trees_step}, we used an algorithm to generate all unlabelled trees in time proportional to the number of trees \cite{Wright1986a}. 
To calculate $D_{max}$ in step \ref{D_max_step}, we used a constraint model with the following constraints:
\begin{itemize}
\item
A linear arrangement and its reverse give the same $D$. 
\item
Swapping the positions of leaves attached to the same internal vertex does not change $D$. 
\item
Let $\pi^v$ be the position of vertex $v$ in a linear arrangement. Let us consider two leaves, $l_i$ and $l_j$, attached to internal vertices $v_i$ and $v_j$ respectively. Then the positions of these two leaves cannot be swapped if the exchange does not increase the sum of their lengths, namely if  
\begin{equation*}
|\pi^{v_i} - \pi^{l_i}| + |\pi^{v_j} - \pi^{l_j}| \geq |\pi^{v_i} - \pi^{l_j}| + |\pi^{v_j} - \pi^{l_i}|. 
\end{equation*} 
\item
$D_{max} \geq D_{rla}$ by definition of $D_{rla}$ and $D_{max}$ \cite{Ferrer2020a}.
\end{itemize}
Replacing the constraint $D_{max} \geq D_{rla}$ by $D_{max} \geq {n \choose 2}$ \cite{Ferrer2020a} did not help to speed up the calculations. 
The constraints were expressed in the MiniZinc language \cite{Nethercote2007a,Stuckey2014a} and the corresponding model was solved for each relevant tree with the Chuffed solver \cite{chuffed}.

\begin{figure}[tbh]
	\centering
	\includegraphics[width=\linewidth]{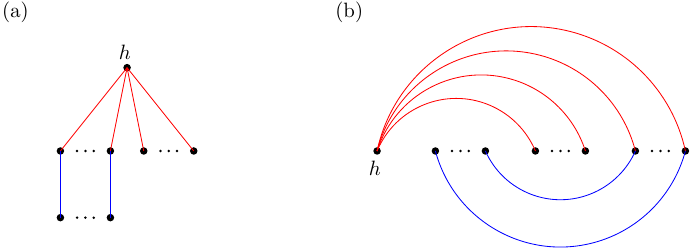}
	\caption{a) A $k$-quasistar tree, that is the fusion of a star tree (edges in red) and a 1-regular tree (edges in blue). b) A maximum linear arrangement of a $k$-quasistar tree.}
	\label{fig:k-quasistar:max_arrangement_Qkl}
\end{figure}

We found that the execution of the solver was excessively long when computing $D_{max}^{t}$ when $t$ was a $k$-quasistar ($k$-quasi) tree and thus we decided to cut computation costs by obtaining a fast formula for $D_{max}^{k-quasi}$. A $k$-quasistar tree of $n$ vertices is a tree that results from merging two graphs, a star tree of $l+k$ edges and a 1-regular graph of $k$ edges (a 1-regular graph is graph such that every vertex has degree 1). The $k$-quasistar is obtained by fusion of each of the $k$ leaves of the star tree with one of the vertices of a distinct edge of the 1-regular graph (Fig. \ref{fig:k-quasistar:max_arrangement_Qkl}(a)). In such a $k$-quasistar, $n= 2k + l + 1$ with $k,l \ge 0$. Notice that such a tree is a generalization of a star tree ($k=0$) and a quasistar tree ($k=1$) \cite{Ferrer2014f}.
The following theorem provides the fast formula that is necessary.
\begin{thm}
For any $k$-quasistar tree of $n$ vertices, 
\begin{equation*}
D_{max}^{k-quasi} = \frac{1}{2}(n - 1 - k)(3k + n).
\end{equation*}
\end{thm}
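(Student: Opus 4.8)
The plan is to prove the two inequalities separately: a matching explicit construction for the lower bound, and an exchange-type argument for the upper bound. View $Q_{k,l}$ (with $n=2k+l+1$) as a spider centered at the star-center $c_0$, having $k$ legs of length two $c_0\!-\!m_j\!-\!o_j$ ($j=1,\dots,k$) and $l$ legs of length one $c_0\!-\!p_j$ ($j=1,\dots,l$). For the lower bound I would exhibit the arrangement of Fig.~\ref{fig:k-quasistar:max_arrangement_Qkl}(b): place $c_0$ at position $1$, then $o_1,\dots,o_k$, then $p_1,\dots,p_l$, then $m_1,\dots,m_k$. Summing the three groups of edge lengths — $\sum_j|\pi(c_0)-\pi(p_j)|=kl+\binom{l+1}{2}$, $\sum_j|\pi(c_0)-\pi(m_j)|=k(k+l)+\binom{k+1}{2}$, and $\sum_j|\pi(m_j)-\pi(o_j)|=k(k+l)$ — gives $\tfrac12(5k^2+6kl+l^2+k+l)$, which equals $\tfrac12(k+l)(5k+l+1)=\tfrac12(n-1-k)(3k+n)$; hence $D_{max}^{k-quasi}\geq\tfrac12(n-1-k)(3k+n)$.

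For the upper bound I would pass to the orientation formulation of maximum linear arrangement: for any graph, $D_{max}=\tfrac12\max_{\omega}\sum_{\{u,v\}}|c^\omega_u-c^\omega_v|$, where $\omega$ ranges over edge orientations and $c^\omega_v=\mathrm{indeg}_\omega(v)-\mathrm{outdeg}_\omega(v)$. (For a fixed $\omega$, orienting each edge of an arrangement $\pi$ towards its larger endpoint shows $D(\pi)=\sum_v\pi(v)c^{\omega_\pi}_v$, which by the rearrangement inequality and $\sum_v c^\omega_v=0$ equals $\tfrac12\sum_{\{u,v\}}|c^\omega_u-c^\omega_v|$ when $\pi$ is sorted against $c^\omega$; both inequalities between $D_{max}$ and this quantity then follow.) On $Q_{k,l}$ an orientation is described, up to symmetry, by how many of the $p_j$-edges point away from $c_0$ and by the state of each length-two leg, so the right-hand side is a low-dimensional discrete function. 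The plan is then to show, by a short exchange/monotonicity analysis, that flipping edges so that $c^\omega_{c_0}=-(l+k)$ and every leg contributes $c_{m_j}=+2$, $c_{o_j}=-1$ never decreases $\sum_{\{u,v\}}|c^\omega_u-c^\omega_v|$ (each such flip only spreads the multiset of charges farther apart); the extremal multiset then consists of $-(l+k)$ together with $k$ copies of $-1$, $l$ copies of $+1$ and $k$ copies of $+2$, which yields $\sum_{\{u,v\}}|c^\omega_u-c^\omega_v|=5k^2+6kl+l^2+k+l$ — matching the construction.

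I expect the structural step — pinning down the extremal orientation (equivalently: showing some maximum arrangement has $c_0$ at an endpoint with the blocks in the stated order, after which $D=\sum_{i=1}^{n-1}\delta_i$ telescopes over three regimes of the cut sizes $\delta_i$, with slopes $+1$, $-1$, $-2$) — to be the main obstacle. It genuinely requires an exchange argument, because the naive cut bound $D\leq\sum_{i=1}^{n-1}\max_{|S|=i}e(S,\bar S)$ is not tight for this family: already for $Q_{1,0}=P_3$ it overestimates $D_{max}=3$ as $4$, since the prefixes $L_1\subset\cdots\subset L_{n-1}$ of an arrangement are nested and $c_0$ can occupy only one position. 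Once the extremal configuration is fixed, the closed form drops out of a routine summation.
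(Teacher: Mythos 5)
Your lower-bound half is complete and correct: the arrangement you describe is exactly the paper's arrangement $*$ (hub at one end, the outer matching vertices next, then the degree-one leaves, then the middle vertices at the far end), and your total $\tfrac12(5k^2+6kl+l^2+k+l)=\tfrac12(n-1-k)(3k+n)$ checks out. The orientation reformulation you invoke for the upper bound, $D_{max}=\tfrac12\max_\omega\sum_{u,v}|c^\omega_u-c^\omega_v|$ with the sum over all vertex pairs and $\sum_v c^\omega_v=0$, is also a valid identity, and the orientation you single out does produce the charge multiset $\{-(l+k)\}\cup\{-1\}^k\cup\{+1\}^l\cup\{+2\}^k$ with the matching value.

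The gap is that the upper bound is never actually proved. Everything hinges on the claim that this orientation maximizes $\sum_{u,v}|c^\omega_u-c^\omega_v|$ over all orientations of the $k$-quasistar, and you offer only the assertion that each flip toward it ``spreads the multiset of charges farther apart.'' That is not a proof, and as a statement about individual flips it needs real work: reversing an edge raises one charge by $2$ and lowers another by $2$, and whether $\sum_{u,v}|c_u-c_v|=2\sum_i i\,c_{(i)}$ increases depends on where those two charges sit inside the whole sorted multiset, so such a move can in principle decrease the functional. A complete argument must either exhibit an improving flip from every non-extremal orientation or optimize globally over the state space (the number $a\in\{0,\dots,l\}$ of leaf edges pointing into the center together with the distribution of the four orientation states over the $k$ two-edge legs), and you explicitly defer exactly this step as ``the main obstacle.'' For contrast, the paper closes the upper bound without any orientation machinery: it splits $D$ into the star contribution and the $1$-regular (matching) contribution, uses the window maxima $D_{max}^{star}(l+k,z)$ and $D_{max}^{1-reg}(k,z)$, and argues by contradiction that beating $*$ would force the matching to occupy both ends of the arrangement, hence push the hub off the ends, capping the total at $U=D_{max}^{star}(l+k,n-1)+D_{max}^{1-reg}(k,n)$ with $D_{*}^{k-quasi}-U=l\ge 0$. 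Your route is plausible and genuinely different, but as written the decisive combinatorial step is missing.
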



\begin{proof}
A maximum linear arrangement of star tree is one where its hub is placed at one end of the arrangement and the other vertices are placed at the other end \cite{Ferrer2020a} as in Fig. \ref{fig:k-quasistar:max_arrangement_Qkl}(b). Such arrangement for a star tree of $m$ edges on $z$ positions yields \cite{Ferrer2020a}
\begin{equation}
\label{eq:max_star}
D_{max}^{star}(m, z) = \frac{m(2z-m-1)}{2}.
\end{equation}
To obtain a maximum arrangement for 1-regular graph of $m$ edges over $z$ positions, we take advantage of the fact that its edges are independent (no two edges share a vertex). At any time, one picks the farthest pair of positions available for every edge. First, positions $\{1,z\}$ produce the longest edge, positions $\{2,z-1\}$ produce the second longest edge and so on as in \ref{fig:k-quasistar:max_arrangement_Qkl}(b). Such arrangement yields
\begin{equation}
\label{eq:max_one_reg}
D_{max}^{1-reg}(m, z) = \sum_{i=0}^{m-1} (z-1-2i) = m(z-m).
\end{equation}

Our claim is that the linear arrangement in Fig. \ref{fig:k-quasistar:max_arrangement_Qkl}(b), henceforth $*$, is maximum. More precisely, we claim that $D_{max}^{k-quasi}$ matches
\begin{equation*}
D_{*}^{k-quasi} = D_{max}^{star}(l+k, n) + D_{max}^{1-reg}(k, n - 1)
\end{equation*}
with $n=2k+l+1$. 
The application of Eqs. \ref{eq:max_star} and \ref{eq:max_one_reg} yields
\begin{equation}
D_{*}^{k-quasi}	= \frac{1}{2}(n - 1 - k)(3k + n). \label{maximum_linear_arrangement_equation} 
\end{equation}
We prove, by contradiction, $D_{max}^{k-quasi} = D_{*}^{k-quasi}$.
Suppose that there is some linear arrangement $**$ yielding a sum of distance $D_{**}^{k-quasi}$ such that $D_{**}^{k-quasi} > D_{*}^{k-quasi}$. We use $X$ to indicate an undetermined argument of a function. Decomposing $D_{**}^{k-quasi}$ as
\begin{equation}
D_{**}^{k-quasi} = D_{**}^{star}(l+k, X) + D_{**}^{1-reg}(k, X),
\end{equation} 
we can then write
\begin{widetext} 
\begin{equation*}
D_{**}^{star}(l+k, X) + D_{**}^{1-reg}(k, X)
	> \\
D_{max}^{star}(l+k, n) + D_{max}^{1-reg}(k, n - 1).
\end{equation*}
\end{widetext}
Rearranging the terms of the previous inequality and the fact that $D_{**}^{star}(l+k, X) \le D_{max}^{star}(l+k, n)$, because $D_{max}^{star}(l+k, n)$ is already maximum, we obtain 
\begin{widetext} 
\begin{equation*}
0
	\geq
D_{**}^{star}(l+k, X) - D_{max}^{star}(l+k, n)
	>
D_{max}^{1-reg}(k, n - 1) - D_{**}^{1-reg}(k, X).
\end{equation*}
\end{widetext} 
Therefore, $D_{**}^{1-reg}(k, X) > D_{max}^{1-reg}(k, n - 1)$. This can only happen if the 1-regular graph is arranged over $n$ positions in $**$, namely $D_{**}^{1-reg}(k, X) = D_{**}^{1-reg}(k, n)$ as in Fig. \ref{fig:max_arrangements_2}(b). This leads to the conclusion that both ends of the linear arrangement $**$ must be occupied by vertices of the 1-regular graph since $D_{**}^{1-reg}(k, n - 1) > D_{max}^{1-reg}(k, n - 1)$ is impossible. This restricts the arrangement of the star tree in $**$ by not allowing its hub to be placed at one of the ends of the linear arrangement, which implies $D_{**}^{star}(l+k, X) \leq D_{**}^{star}(l+k, n - 1)$. To see it, notice that, in order to maximize $D_{**}^{star}(l+k, X)$ when the hub cannot be be placed at the ends, the hub should be placed second (resp. penultimate) and none of its leaves should occupy the first (resp. last) position as in Fig. \ref{fig:max_arrangements_2}(a) or its symmetric.
As a result of that, $D_{**}^{k-quasi} \leq U$, where 
\begin{equation*}
U = D_{max}^{star}(l+k, n - 1) + D_{max}^{1-reg}(k, n).
\end{equation*}
Eqs. \ref{eq:max_star} and \ref{eq:max_one_reg} give (recall that $n=2k+l+1$)
\begin{align}
D_{max}^{star}(l+k, n - 1) &= \frac{1}{2}(n - k - 1)(k + n - 2), \nonumber \\
D_{max}^{1-reg}(k, n) &= k(n - k), \nonumber \\
U &= \frac{1}{2}(n - k -1)(3k+n-2). \label{upper_bound_equation}
\end{align}
Importantly, $U \le D_{*}^{k-quasi}$ since $D_{*}^{k-quasi} - U = l \ge 0$ thanks to Eqs. \ref{maximum_linear_arrangement_equation} and \ref{upper_bound_equation}, and thus $*$ is a maximum linear arrangement.
\end{proof}

\begin{figure}[tbh]
	\centering
	\includegraphics[width=\linewidth]{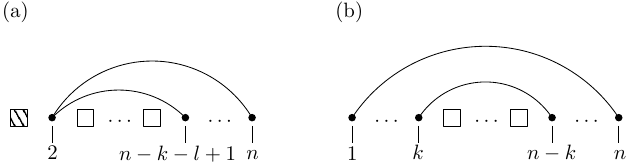}
	\caption{Two maximum linear arrangements. Empty squares indicate empty positions, and dashed squares indicate forbidden positions in the arrangement. a) A maximum linear arrangement of a star tree of $k+l$ edges over $n-1$ positions. b) A maximum linear arrangement of a $1$-regular tree of $k$ edges over $n$ positions.}
	\label{fig:max_arrangements_2}
\end{figure}

\subsection{Constancy under minimum linear arrangement of the optimality scores}

Under minimum linear arrangement, $\Omega^t = \Gamma^t = 1$ and $\Delta^t = 0$ (see above for $\Omega^t$ and \cite{Ferrer2020a} for $\Gamma^t$ and $\Delta^t$).
$D_{z,min}^t$, the value of $D_z^t$ in a minimum linear arrangement satisfies 
\begin{equation*}
D_{z,min}^{linear} \leq D_{z,min} \leq D_{z,min}^{star}
\end{equation*}
and $D_{z,min}^{linear}$ diverges as $n$ tends to infinity \cite{Ferrer2020a}. Formulae for $D_{z,min}^{linear}$ and $D_{z,min}^{star}$ are found in \cite{Ferrer2020a}.

Let us recall the definition of the $NDD$ score \cite{Lei2018a}, i.e. 
\begin{equation*}
NDD = \left|\log\frac{\meanD}{\sqrt{\pi^r n}}\right|.
\end{equation*}
Recall $\meanD = D^t/(n-1)$. Under a minimum linear arrangement, the $NDD^t$ score \cite{Lei2018a} becomes 
\begin{equation*}
NDD_{min}^t = \left| \log \frac{D_{min}^t}{(n-1)\sqrt{\pi_{min}^r n}} \right|
\end{equation*}
and then $NDD^t$ is not constant under linear arrangement due to $\pi_{min}^r$. The following counterexamples clarify it further. 
Consider a linear tree. Suppose that the root is the $i$-th vertex of the linear tree following a depth-first traversal from one of the leaves. Then $\pi_{min}^r = i$ or $\pi_{min}^r = n - i$ in a minimum linear arrangement. If the root is a leaf, then $\pi_{min}^r = 1$ or $\pi_{min}^r = n$. Therefore, $NDD_{min}^{linear}$ is not constant. 
Consider a star tree. Suppose that the root is the hub. Then $\pi_{min}^r = 1$ or $\pi_{min}^r = n$ in a minimum linear arrangement. Suppose that the root is a leaf. Then $2 \leq \pi_{min}^r \leq n$ if the hub has been placed first or $1 \leq \pi_{min}^r \leq n - 1$ if the hub has been placed last. Therefore, $NDD_{min}^{star}$ is not constant either.  

\subsection{Stability under the null hypothesis (random linear arrangements)}

We define $\E_{rla}[\Omega^t]$ and $\E_{rla}[D_z^t]$ as the expected value of $\Omega^t$ and $D_z^t$ in a uniformly random linear arrangement of a given tree. $D_z$ is stable under the null hypothesis because $\E_{rla}[D_z] = 0$ \cite{Ferrer2020a}. We will show that this is also the case for $\Omega^t$.
Recall that $D_{rla} = \E_{rla}[D]$. Given a tree, $D_{min}$ and $D_{rla}$ are constant. Hence  
\begin{eqnarray*}
\E_{rla}[\Omega^t] & = & \E_{rla}\left[\frac{D_{rla} - D^t}{D_{rla} - D_{min}^t}\right] \\
                   & = & \frac{D_{rla} - \E_{rla}[D^t]}{D_{rla} - D_{min}^t} \\
                   & = & 0.
\end{eqnarray*} 
In contrast, neither $\Gamma$ nor $\Delta$ are stable under the null hypothesis. 
Recall Eq. \ref{sum_of_dependency_lengths_random_equation} and that \cite{Iordanskii1974a,Esteban2016a}
\begin{equation}
D_{min}^{linear} = n - 1 \leq D_{min}^t \leq D_{min}^{star} = \left\lfloor \frac{n^2}{4} \right\rfloor \leq D_{rla}.
\label{range_of_variation_of_D_min_tree_equation}
\end{equation}
The expected value of $\Gamma^t$ in a uniformly random linear arrangement of a given tree is  
\begin{equation}
\E_{rla}[\Gamma^t] = \frac{D_{rla}}{D_{min}}.
\label{expected_Gamma_equation}
\end{equation}
The application of Eq. \ref{sum_of_dependency_lengths_random_equation} and Eq. \ref{range_of_variation_of_D_min_tree_equation} to Eq. \ref{expected_Gamma_equation} yields 
\begin{widetext}
\begin{equation*}
\E_{rla}^{star} [\Gamma^t] = \frac{4}{3}\frac{n^2-1}{n^2 - {n \bmod 2}} \leq \E_{rla}[\Gamma^t] \leq \E_{rla}^{linear}[\Gamma^t] = \frac{n+1}{3}. 
\end{equation*}
\end{widetext}
for any tree of $n \geq 2$ vertices.

The application of Eq. \ref{sum_of_dependency_lengths_random_equation} and Eq. \ref{range_of_variation_of_D_min_tree_equation} to $\Delta$ yields 
\begin{equation*}
\E_{rla}^{star}[\Delta^t] \leq \E_{rla}[\Delta^t] \leq \E_{rla}^{linear}[\Delta^t] 
\end{equation*}
with
\begin{eqnarray*}
\E_{rla}^{linear}[\Delta^t] = \frac{1}{3}(n-1)(n-2) \\ 
\E_{rla}^{star}[\Delta^t] = \frac{1}{12}(n^2 - 4 + 3 (n \bmod 2)). 
\end{eqnarray*}

In the case of $NDD$, $\E_{rla}[NDD^t]$, crucial to determine if the metric is stable under the null hypothesis, is difficult to calculate because of the complexity of the formula of $NDD$: the absolute value, the logarithmic transformation, the square root in the denominator of the inner quotient, and the fact that the numerator and the denominator may not be independent. A possible approximation is  
\begin{equation*}
\E_{rla}[NDD^t] \approx \left| \log B \right|.
\end{equation*}
with 
\begin{equation*}
B = \frac{\E_{rla}[\meanD]}{\sqrt{\E_{rla}[\pi^r] n}}.
\end{equation*}
Knowing that $\E_{rla}[\pi^r] = n/2$ and 
\begin{eqnarray*}
\E_{rla}[\meanD] & = & \frac{\E_{rla}[D_{rla}]}{n-1} \\ 
                 & = & \frac{n+1}{3}
\end{eqnarray*}
thanks to Eq. \ref{sum_of_dependency_lengths_random_equation}, one obtains
\begin{equation*}
B = \frac{\sqrt{2}}{3} \left(1 + \frac{1}{n}\right).
\end{equation*} 
Since $B \leq 1$ for $n \geq 1$, $|\log B| = - \log B$ and finally   
\begin{equation}
\E_{rla}[NDD^t] \approx - \log \left[ \frac{\sqrt{2}}{3} \left(1 + \frac{1}{n}\right) \right].
\label{approximation_of_expectation_equation}
\end{equation}
Fig. \ref{NDD_figure} compares this approximation against the exact $\E_{rla}[NDD^t]$ for linear and star trees. Linear and star trees are chosen because they minimize and maximize the variance of $D$ in uniformly random linear arrangements. Eq. \ref{approximation_of_expectation_equation} approximates the true expectation better for large $n$. Critically, Fig. \ref{NDD_figure} indicates that $NDD$ is not stable under the null hypothesis: the expected $NDD$ tends to shrink as $n$ grows and, for a given $n$, its value depends on the kind of tree.

\begin{figure}
\centering
\includegraphics[width = \linewidth]{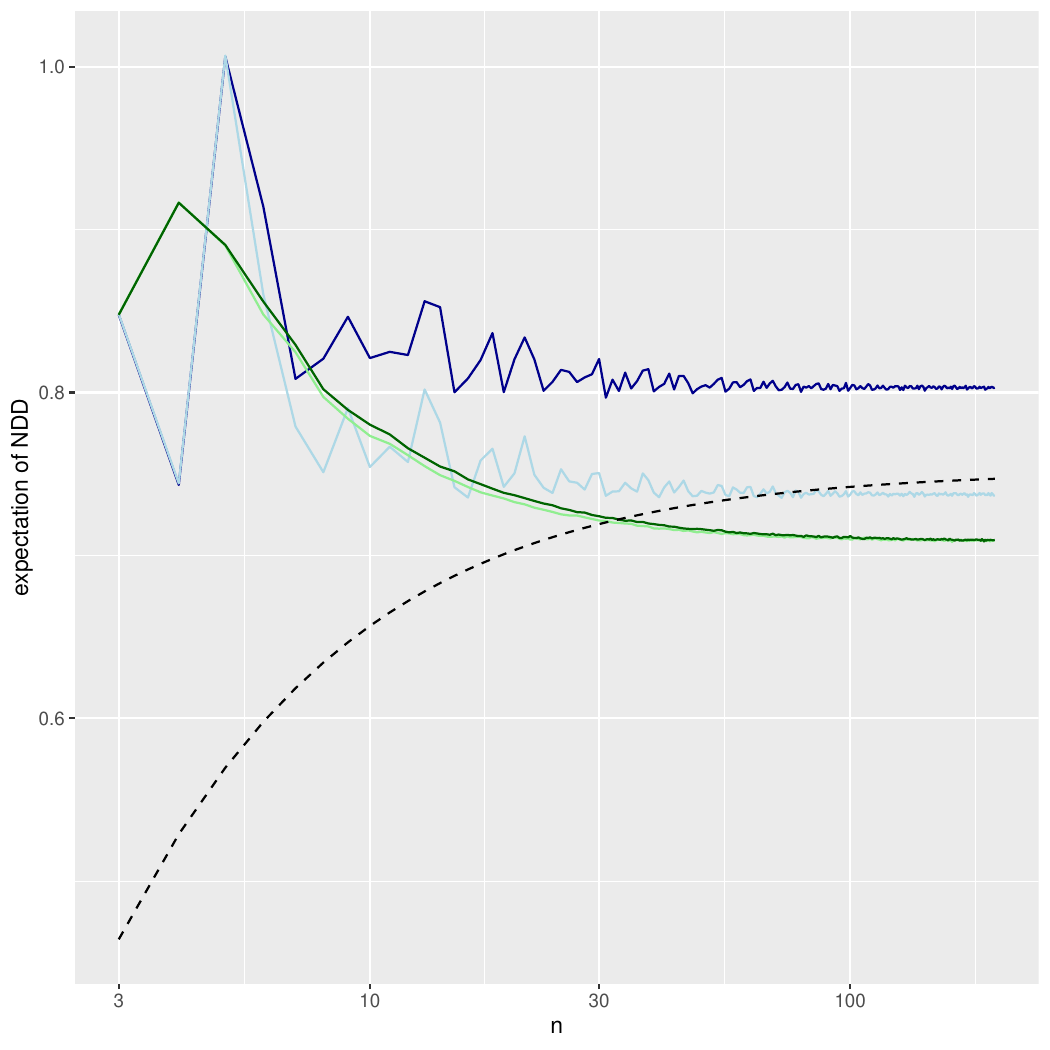} 
\caption{\label{NDD_figure} $\E_{rla}[NDD^t]$ for star trees (blue) and linear trees (green) as a function of $n$ estimated numerically. For reference, the curve for the approximated $\E_{rla}[NDD^t]$ is also shown (dashed line). For star trees, two curves are shown: tree rooted on a leaf (light blue) and a tree rooted on an inner vertex attached to a leaf (dark blue). For linear trees, two curves are shown: tree rooted on a leaf (light green) and a tree rooted on an inner vertex attached to a leaf (dark green). Numerical estimates are obtained generating $10^6$ random linear arrangements for each $n$. }
\end{figure}

\subsection{Boundedness under maximimum linear arrangement}

We have seen above that $\Omega_{min}^t$ is bounded below by a constant when $n$ tends to infinity and thus $\Omega$ is bounded under maximum linear arrangement.  
In contrast, $D_{max}^t$ diverges when $n$ tends to infinity (Eq. \ref{extreme_sum_of_edge_lengths_equation}) and then $D^t$ is not bounded under maximum linear arrangement. The same happens to all other scores as explained in \cite{Ferrer2020a} except for $NDD$. 
Notice that the value of $NDD$ under a maximum linear arrangement of a tree $t$ is 
\begin{equation*}
NDD_{max}^t = \left| \log \frac{D_{max}^t}{(n-1)\sqrt{\pi_{min}^r n}} \right|.
\end{equation*}
It is easy to see that $NDD_{max}^t$ is maximized by a balanced bistar tree rooted at one of the hubs that is placed in first position of the linear arrangement (and then $\pi_{min} = 1$), namely, 
Therefore, 
\begin{equation*}
NDD_{max}^t \leq \left| \log \frac{D_{max}^{b-bistar}}{(n-1)\sqrt{n}} \right|.
\end{equation*}
Hence, $NDD^t$ is not bounded under maximum linear arrangement either. 

\subsection{Invariance under linear transformation}

Both $\Omega^t$ and $D_z^t$ are invariant under linear transformation.  
Let $D$ be the sum of edge lengths of some tree $t$ with $n$ vertices. $D_{rla}$ is a short hand for $\E_{rla}[D]$, the average value of $D$ over all the linear arrangements of $t$. 
Similarly, $D_{min}$ is a shorthand for the minimum value of $D$ over all the linear arrangements (la) of $t$, i.e.
\begin{equation*}
D_{min} = \min_{la}[D]. 
\end{equation*}
Let $aD+b$ be a linear function of $D$, where $a$ and $b$ are constants in the sense that their value does not depend on the linear arrangement for $t$. 
Interpreting $\Omega$ as a function of the random variable $D$ in Eq. \ref{raw_optimality_metric_tree_equation}, one obtains that
\begin{eqnarray*}
\Omega[aD+b] & = & \frac{\E_{rla}[aD + b] - aD - b}{\E_{rla}[aD + b] - \min_{la}[aD + b]} \\
             & = & \frac{a\E_{rla}[D] + b - aD - b}{a\E_{rla}[D] + b - \min_{la}[aD + b]} \\
             & = & \frac{a(D_{rla} - D)}{a(D_{rla} - D_{min})} \\
             & = & \Omega[D].
\end{eqnarray*} 
The same procedure for $D_z$ yields 
\begin{eqnarray*}
D_z[aD+b] & = & \frac{aD + b - \E_{rla}[aD + b]}{\V_{rla}[aD + b]^{1/2}} \\
          & = & \frac{aD + b - a\E_{rla}[D] + b}{a\V_{rla}[D]^{1/2}} \\
          & = & \frac{a(D - \E_{rla}[D])}{a\V_{rla}[D]^{1/2}} \\
          & = & D_z[D]
\end{eqnarray*}  
as expected for a z-score. 
In contrast, neither $\Delta$ nor $\Gamma$ exhibit invariance under linear transformation. 
As for $\Delta$, 
\begin{eqnarray}
\Delta[aD+b] & = & aD + b - \min_{la}[aD + b] \nonumber \\
             & = & a(D - \min_{la}[D]) \nonumber \\
             & = & a\Delta[D] \label{invariance_of_Delta_equation}
\end{eqnarray}
and $a = 1$ is required.   
However, it is easy to see that $\Delta(D+b) = \Delta(D)$, namely invariance under displacement, imposing $a = 1$ on Eq. \ref{invariance_of_Delta_equation}.
$\Delta$ lacks invariance under proportionate scaling, i.e. $\Delta(aD) \neq \Delta(D)$ (unless $a = 1$).
As for $\Gamma$,
\begin{eqnarray}
\Gamma[aD+b] & = & \frac{aD + b}{\min_{la}[aD + b]} \nonumber \\
             & = & \frac{aD + b}{a\min_{la}[D]+b} \nonumber \\
             & = & \frac{aD + b}{aD_{min}+b}. \label{invariance_of_Gamma_equation}
\end{eqnarray}  
The condition $\Gamma[aD+b]=\Gamma[D]$ is equivalent to $D_{min} = D$, indicating that $\Gamma$ is not invariant under linear transformation.
However, it is easy to see that $\Gamma[aD] = \Gamma[D]$, namely invariance under proportionate scaling, imposing $b = 0$ on Eq. \ref{invariance_of_Gamma_equation}.
It is also easy to see that $\Gamma$ lacks invariance under displacement, i.e. $\Gamma[D+b] \neq \Gamma[D]$ (unless $b \neq 0$), imposing $a = 0$ on Eq. \ref{invariance_of_Gamma_equation}.

Regarding $NDD$, the fact that $\meanD$ appears inside a logarithm and that $\pi^r$ is square rooted, trivially implies that it lacks invariance under both displacement and proportionate scaling.

\section{Materials}

\label{materials_appendix}

To investigate the optimality of dependency distances across a wide range of languages and language families we use several different collections of syntactic dependency treebanks that are available in different annotation criteria. A syntactic dependency treebank is an ensemble of sentences, each of them annotated with its corresponding syntactic dependency tree. Trees can be annotated using different annotation criteria, depending on the specific grammatical framework that is being used to represent syntax. 

In particular, our collections of treebanks cover the following annotation criteria:
\begin{itemize}
\item Universal Stanford dependencies \cite{UniversalStanford}. This grammatical framework is a revised version of the Stanford Dependencies for English \cite{Stanford2008}, aiming to extend its applicability to languages beyond English. As mentioned in \cite{Stanford2008}, Stanford dependencies are inspired on lexical-functional grammar \cite{Bresnan00}. They are designed to provide a simple description of the grammatical relations in a sentence that is easy to use in natural language processing (NLP) applications and can be understood even by people without linguistic expertise.
\item Prague Dependencies \cite{PDT20} provide a different view of syntax, based on the functional generative description \cite{Sgall69} of the Prague linguistic tradition \cite{Hajicova95}. The focus is more on providing an accurate and consistent linguistic theory than on simplicity for NLP applications. The differences with respect to the Stanford annotation are substantial, for example, in regard to the annotation of conjunctions and adpositions \cite{HowFarStanfordPrague}. In very rough terms, Stanford Dependencies favor content words as heads, while Prague dependencies often annotate function words (like adpositions) as heads.
\item Universal Dependencies (UD) \cite{ud26_APS}. This is an evolution of the Universal Stanford dependencies with the goal of further extending their cross-linguistic applicability, as the basis of an international cooperative project to create treebanks of the world's languages. Thus, it shares the same principles as Universal Stanford Dependencies, but it has been refined across versions to better accommodate the specificities of diverse languages inside the common framework.
\item Surface-Syntactic Universal Dependencies (SUD). The priority of UD is to maximize parallelism between languages, and for this purpose, it defines annotation criteria that enforce simple tree structures and follow the principle of favoring content words as heads, inherited from Stanford Dependencies (as the syntax of content words varies less than that of function words between languages). This is reasonable for applications like cross-lingual syntactic parsing, but arguably not the best choice for typological research on syntax, as it minimizes differences between languages. For this reason, SUD \cite{sud} is proposed as an alternative where dependency links are defined based on purely syntactic criteria, closer to the Meaning-Text Theory \cite{melcuk88} or the Praguian linguistic tradition \cite{Hajicova95}. However, at the same time, SUD is designed to provide easy convertibility from UD, which earlier frameworks like the Prague dependencies lack.
\end{itemize}

The collections used are the following:
\begin{itemize}
\item Universal Dependencies (UD), version 2.6 \cite{ud26_APS}. This is the largest available collection of syntactic dependency treebanks, both in terms of number of languages and typological diversity. It features \TreebanksUD treebanks from \LanguagesUD different languages, belonging to \Families broad families (see Table \ref{typological_diversity_table}). We use versions of these treebanks with two annotations: UD and SUD. The UD version can be downloaded freely \cite{UD_data}. The SUD version can be obtained from the UD version by applying a conversion script \cite{UD_to_SUD_converter} or downloaded directly \cite{SUD_data}. Both of them can be used under free licenses. 
\item Parallel Universal Dependencies (PUD), version 2.6 \cite{conll2017st_APS}. This is a subset of the UD treebanks that contains parallel annotations for \LanguagesPUD languages. Parallel means that all the treebanks contain the exact same sentences, in the same order, translated into each of the languages by professional translators. This is useful to control for variability due to the genre or content of the text, which can vary across different non-parallel treebanks. Since it is a subset of the UD treebanks, the annotation criteria are the same (i.e., there are versions annotated with UD and SUD), as are the download links.
\item HamleDT, version 2.0 \cite{HamledTStanford}, is a collection of treebanks of \LanguagesHamleDT languages. All of these languages are present in the UD collection except one (Bengali) and thus this collection does not meaningfully extend the linguistic diversity of our study. However, its interest is in the annotation schemes. Each of the \TreebanksHamleDT treebanks in HamleDT comes with two versions, each annotated with a different set of annotation criteria: Universal Stanford dependencies and Prague Dependencies. As mentioned above, the former are a direct ancestor of UD, while the latter provide a different view of syntax, closer in focus to SUD (but substantially different, as SUD had unique design constraints with respect to the original Prague dependencies, like being easily convertible from UD). Thus, using the HamleDT collection is a way to further extend the diversity of syntactic theories represented in our study, making our results more robust with respect to annotation criteria. The HamleDT 2.0 treebanks can be obtained online \cite{HamleDT_data}. While not all of them are distributed under free licenses, for the purposes of reproducing our analysis it suffices to use a stripped version that does not include the words from the sentences (for copyright reasons), but still includes the bare tree structures. This version is distributed freely \cite{stripped_HamleDT}. 
\end{itemize}
Table \ref{typological_diversity_table} summarizes the languages and language families included in each of the collections of treebanks.

Since our trees and definition of dependency distance are based on the concept of word, a concern that may arise is the possible influence of wordness criteria in our measurements -- i.e., ideally, a fair analysis comparing the optimality of dependency distances in different doculects would require all corpora to be annotated with consistent wordness criteria. While it has been argued that there is no such thing as a cross-lingual notion of word \cite{Haspelmath2011}, making this ideal impossible to reach in the absolute, the corpora in the UD collection (including the versions annotated with the UD and SUD criteria, and by extension, the PUD subsets) have been designed to be as close as possible to it.

In particular, the UD tokenization and word segmentation guidelines (which are also followed by SUD~\cite{sud_guidelines}) specify that the basic units of annotation are syntactic words (as opposed to morphological or orthographic words): \url{https://universaldependencies.org/u/overview/tokenization.html}. This implies, among other considerations, that clitics are split away from words (as in Spanish ``m\'iralo'' $\rightarrow$ ``mira lo'') and contractions are broken (as in English ``it's'' $\rightarrow$ ``it is''); while expressions like ``400 000'' are counted as a single word regardless of their spacing. For multiword expressions and compounds, the UD guidelines also provide criteria to annotate them in a consistent way between languages. In languages where there is no clear-cut concept of a discrete word or there is substantial variance in its definition, the UD criterion of maximizing parallelism between languages has been followed (arguably, even at the expense of a certain degree of Euro-centrism) providing as much consistency between languages as the treebank designers could. As an example, we can focus on Japanese, where there are various definitions of word-like entities with different granularities: \cite{Asahara2018} reviews several, including the NINJAL 5-level hierarchy that includes (in order of increasing granularity) Minimum Unit Words, Short Unit Words, Middle Unit Words, Long Unit Words and bunsetsu. While bunsetsu have been the most widely used such unit in Japanese NLP~\cite{Murawaki2019}, UD made the decision of adopting Short Unit Words, as bunsetsu have a larger granularity than the syntactic words used in other UD treebanks. While this decision is not free from controversy (see e.g. \cite{Pringle2016}), \cite{Murawaki2019} argues that in spite of limitations, Short Unit Words are an ``understandable'' solution from the point of view of UD's goals of maximizing cross-lingual consistency and facilitating tasks like Japanese to English machine translation. Similar decisions have been made in other languages, so the wordness criteria in the UD treebanks are as consistent as treebank designers have found possible considering the differences between distant languages, and thus provide a solid foundation to compare the optimality of dependency distances between languages.

{In the case of the HamleDT collection, the wordness criteria that have been followed are significantly less homogeneous, as its corpora come from diverse sources and only syntactic annotation (but not tokenization) was harmonized. As acknowledged in \cite{HamleDT}, this means that there are differences between treebanks, for example in the treatment of multiword expressions.

We now describe our treatment of the treebanks. No dummy root nodes were added to the trees, as is often done for technical reasons in the NLP literature \cite{BalNiv13}, since they do not correspond to actual words, do not take part in the linear arrangement of a sentence as they do not have a real position, and the edges originating from them would be a potential source of spurious anti-DDm effects. All the treebanks from the same language were merged to form a single treebank. 
Then, all of the treebanks underwent the following postprocessing for analysis: following common practice in statistical research on dependency trees \cite{Gomez2016a}, we removed punctuation, as well as tree nodes not corresponding to actual words, like the empty nodes in several UD corpora and null elements in the Bengali, Hindi and Telugu HamleDT treebanks. To ensure that the resulting structure is still a tree after the removal of these nodes, we applied the following transformation: nodes whose heads were deleted were reattached as dependents of their nearest non-deleted ancestor in the original tree. 

In the case of the PUD collection, we also performed additional analyses where we remove function words, thus considering only content words. In these analyses, function word removal was applied right after punctuation removal. We identified function words as those with closed-class universal POS tags, and applied the same transformation described for punctuation above to preserve treeness when possible. However, removal of function words has an additional technical issue: contrary to punctuation, function words often play the role of syntactic root, so their removal results in disconnected tree fragments. This can happen under any annotation criterion (in fact, in extreme cases, some sentences in the corpora are made up exclusively of function words, so their removal results in a sentence of length zero) but it is especially prevalent when SUD annotation criteria are used, as they prefer function words as heads. A very common case is that of sentences headed by an auxiliary verb, which depends on the main verb under UD criteria but is the syntactic root under SUD criteria. Since $\Omega$ is only defined for trees, and there is no way to obtain a tree from a set of disconnected fragments without creating artificial dependencies to link them introducing arbitrary decisions in the data, we disregard these sentences in the analyses without function words.

Finally, in our analyses we also disregard trees with less than three nodes, as they have trivial statistical properties (a tree with one node has no dependencies, a tree with two nodes always has a single dependency with distance 1) and $\Omega$ is undefined for these trees.

\begin{table*}
\centering
\caption[Typological diversity of the treebank collections used in this study.]{\label{typological_diversity_table}The languages in every collection grouped by family. The counts attached to the collection names indicate the number of different families and the number of different languages. The counts attached to family names indicate the number of different languages.   }
\begin{ruledtabular}
\begin{tabular}{llp{4in}}
Collection & Family & Languages \\
\hline
UD (19, 92) & Afro-Asiatic (7) & Akkadian, Amharic, Arabic, Assyrian, Coptic, Hebrew, Maltese \\
  & Altaic (3) & Kazakh, Turkish, Uyghur \\
  & Austro-Asiatic (1) & Vietnamese \\
  & Austronesian (2) & Indonesian, Tagalog \\
  & Basque (1) & Basque \\
  & Dravidian (2) & Tamil, Telugu \\
  & Indo-European (51) & Afrikaans, Albanian, Ancient Greek, Armenian, Belarusian, Bhojpuri, Breton, Bulgarian, Catalan, Croatian, Czech, Danish, Dutch, English, Faroese, French, Galician, German, Gothic, Greek, Hindi, Hindi-English, Icelandic, Irish, Italian, Kurmanji, Latin, Latvian, Lithuanian, Marathi, Norwegian, Old Church Slavonic, Old French, Old Russian, Persian, Polish, Portuguese, Romanian, Russian, Sanskrit, Scottish Gaelic, Serbian, Slovak, Slovenian, Spanish, Swedish, Swiss German, Ukrainian, Upper Sorbian, Urdu, Welsh \\
  & Japanese (1) & Japanese \\
  & Korean (1) & Korean \\
  & Mande (1) & Bambara \\
  & Mongolic (1) & Buryat \\
  & Niger-Congo (2) & Wolof, Yoruba \\
  & Other (1) & Naija \\
  & Pama-Nyungan (1) & Warlpiri \\
  & Sign Language (1) & Swedish Sign Language \\
  & Sino-Tibetan (3) & Cantonese, Chinese, Classical Chinese \\
  & Tai-Kadai (1) & Thai \\
  & Tupian (1) & Mbya Guarani \\
  & Uralic (11) & Erzya, Estonian, Finnish, Hungarian, Karelian, Komi-Permyak, Komi-Zyrian, Livvi, Moksha, North Sami, Skolt Sami \\
PUD (9, 20) & Afro-Asiatic (1) & Arabic \\
  & Altaic (1) & Turkish \\
  & Austronesian (1) & Indonesian \\
  & Indo-European (12) & Czech, English, French, German, Hindi, Icelandic, Italian, Polish, Portuguese, Russian, Spanish, Swedish \\
  & Japanese (1) & Japanese \\
  & Korean (1) & Korean \\
  & Sino-Tibetan (1) & Chinese \\
  & Tai-Kadai (1) & Thai \\
  & Uralic (1) & Finnish \\
HamleDT (7, 30) & Afro-Asiatic (1) & Arabic \\
  & Altaic (1) & Turkish \\
  & Basque (1) & Basque \\
  & Dravidian (2) & Tamil, Telugu \\
  & Indo-European (21) & Ancient Greek, Bengali, Bulgarian, Catalan, Czech, Danish, Dutch, English, German, Greek, Hindi, Italian, Latin, Persian, Portuguese, Romanian, Russian, Slovak, Slovenian, Spanish, Swedish \\
  & Japanese (1) & Japanese \\
  & Uralic (3) & Estonian, Finnish, Hungarian \\

\end{tabular}
\end{ruledtabular}
\end{table*}

Datasets are formed by combining a collection of treebanks and annotation criteria. The PUD and the PSUD datasets are formed by combining the PUD collection with Universal Dependencies and Surface-Syntactic Universal Dependencies, respectively. Originally, these datasets have $\SentencePUD$ sentences per language. As the analyses are restricted to $n\geq 3$ (see below), some sentences such that $n < 3$ are lost after the postprocessing described above (Table \ref{short_sentences_table}). Since we find that $\Omega$ tends to increase as $n$ increases, $\optimalityL$ may be more biased towards higher values in those languages that have lost more sentences. 
This may be problematic because the differences in $\optimalityL$ between languages are often small (e.g. \ref{ranking_PSUD_figure}). To avoid this problem, the PUD and the PSUD are reparallelized, namely, all sentences that are not found in all the languages after the postprocessing are removed. As a result, all languages have 
$\SentencePUDAfterReparallelization$ sentences and the loss of sentences is tiny (only $0.5\%$ of sentences are lost).

However, this problem is more significant in the analysis without function words for the datasets obtained from the PUD collection. Function word removal not only reduces sentence length and thus produces more sentences with $n < 3$, but it can also result in analyses that are not trees and need to be removed, as explained above. This is compounded by reparallelization (i.e., a sentence that does not preserve treeness after function word removal in one language has to be removed from all languages) leaving 835 sentences per language in the version of the PUD dataset without function words, but only 13 in the equivalent PSUD dataset.

\section{Methods}

\label{methods_appendix}

\subsection{Statistical tests}

$\optimalityL$ is the mean value of $\Omega$ over the sentences of a language $L$ and $\optimalityn$ is the mean value of $\Omega$ over the sentences of length $n$ (in words) of that language.   
We tested if $\optimalityL$ is significantly large using a Monte Carlo procedure to estimate the $p$-value. In particular we use a permutation test that consists of 
\begin{enumerate}
\item
Precomputing the value of $D_{min}$ for every sentence.
\item
Setting $F$ to $0$. 
\item
Repeating $T$ times
  \begin{enumerate}
  \item
  Shuffling every sentence. 
  \item
  Computing the new value of $D$ for each sentence.
  \item
  Calculating $\optimalityLrla$, the new value of $\optimalityL$.
  \item
  Updating $F$, that is the number of times that $\optimalityLrla \geq \optimalityL$.   
  \end{enumerate}
\item 
Estimating the $p$-value as $F/T$. 
\end{enumerate}
The same algorithm can be adapted to test if $\optimalityL$ is significantly small by redefining $F$ as the number of times that $\optimalityLrla \leq \optimalityL$.
The same kind of tests where were used to assess if $\optimalityn$ is significantly large or significantly small.   

\subsection{Multiple comparisons}

Given a series of unadjusted $p$-values $p_1$, $p_2$,...,$p_i$,...,$p_m$ sorted increasingly, i.e. $p_1 \leq p_2 \leq ...\leq p_i \leq ... \leq p_m$, Holm's correction produces, for each $p_i$, an adjusted $p$-value $q_i$ through \cite[p. 32]{Bretz2011a}
\begin{equation}
q_i = \min\{1, \max[p_i (m + 1 - i), q_{i-1}]\}
\label{adjusted_p_value_equation}
\end{equation}  
with $q_0 = 0$.
This correction does not assume independence between $p$-values \cite{Bretz2011a} and is then suitable for dealing with $p$-values from distinct languages. Languages from the same family are not independent {\em a priori} due to vertical transmission \cite{Roberts2013b} and our sample is biased towards Indo-European languages.

Here as well as in \cite{Ferrer2019a}, Holm's correction is applied over all the languages of the dataset, before languages are split into different families so as to not reduce the power of the correction. Note that there is a confounding statement in Tables 3 and 4 of \cite[Table 3]{Ferrer2019a}: ``after applying the Holm correction to the languages counted in $f$'' should be ``after applying the Holm correction to the languages counted in $l_0$''. 
  
Estimated $p$-values of 0 obtained with the Monte Carlo procedure are underestimations of the real $p$-value. For $p_i = 0$, Eq. \ref{adjusted_p_value_equation} gives $q_i = p_i$. Then, if there are $m_0$ estimated $p$-values that are zero, $q_i = 0$ for each of them and then at least $m_0$ null hypotheses will be rejected (as $\alpha > 0$). To avoid this problem, we correct the zero $p_i$'s before applying Eq. \ref{adjusted_p_value_equation}. The correction relies on three facts about the Monte Carlo tests
\begin{itemize}
\item
The estimated $p$-values are a number in $\{0\} \cup [1/T, 1]$ where $T$ is the number of randomizations of the Monte Carlo test.  
\item
Then the smallest possible non-zero $p$-value is $1/T$.
\item
If the estimated $p$-value is zero then the true $p$-value is likely to be smaller than $1/T$ for sufficiently large $T$.
\end{itemize}
Therefore, to prevent Holm's correction from rejecting too many hypotheses due to inaccurate estimation of the unadjusted $p$-value, $p$-values estimated as zero are replaced by $(1 - \epsilon)/T$ with $\epsilon = 0.01$. A value of $\epsilon > 0$ is needed so that {\em a priori} none of the replaced $p$-values collides with any of the existing non-zero $p$-values.

\subsection{Hasse diagram}

We define a relation between two languages $x$ and $y$ such that $x < y$ if $\optimalityL_x < \optimalityL_y$ and the difference is statistically significant. The difference is statistically significant if the $p$-value of a one-tailed Fisher randomization test does not exceed the significance level $\alpha$ after controlling for multiple comparisons using Holm's correction. The PUD collection has $L = 20$ languages and then the correction is applied over an ensemble of ${L \choose 2}$ $p$-values (each corresponding to a pair of languages $\{x, y\}$ such that $\optimalityL_x < \optimalityL_y$).
Every $p$-value was estimated in three steps:
\begin{itemize}
\item
The sets of values of $\Omega$ of $x$ and $y$, namely $X$ and $Y$ were merged to produce a set $Z$. 
\item
The $p$-value is the proportion of $T$ random subsets of size $|Y|$ with a sum that is greater than that of $Y$. We used $T = 10^7$.
\item
If the $p$-value is zero it is replaced by $(1 - \epsilon)/T$ as explained above. 
\end{itemize}
The relation defines a strict partial order on the PUD dataset: a relation that is irreflexive, transitive and asymmetric (or strictly antisymmetric) \cite{Flaska2007a}. First, the relationship is by definition irreflexive ($x \leq x$ is impossible).
Second, it is transitive, because whenever $x < y$ and $y < z$ then $x < z$ with the significance level used. Finally, it is asymmetric as a consequence of irreflexivity and transitivity \cite{Flaska2007a}.  
The relationship can be represented as network where an arc $y \rightarrow x$ indicates $x < y$. A Hasse diagram is a drawing of the transitive reduction of a partially ordered set \cite[pp.413-414]{Gross2018a}. 
The transitive reduction of the network can be computed with standard algorithms \cite{Aho1972a,Hsu1975a}. When building this kind of diagrams, we control for multiple comparisons and check transitivity as indicated in previous research \cite{Burda2020a}.
Figure \ref{Hasse_diagrams_without_Holm_correction_figure} shows the misleading Hasse diagrams that would be obtained without Holm's correction.
We use the R package hasseDiagram \cite{Ciomek2017a} to perform the transitive reduction and generate the diagram.

\begin{figure}
\centering
\includegraphics[width=\linewidth]{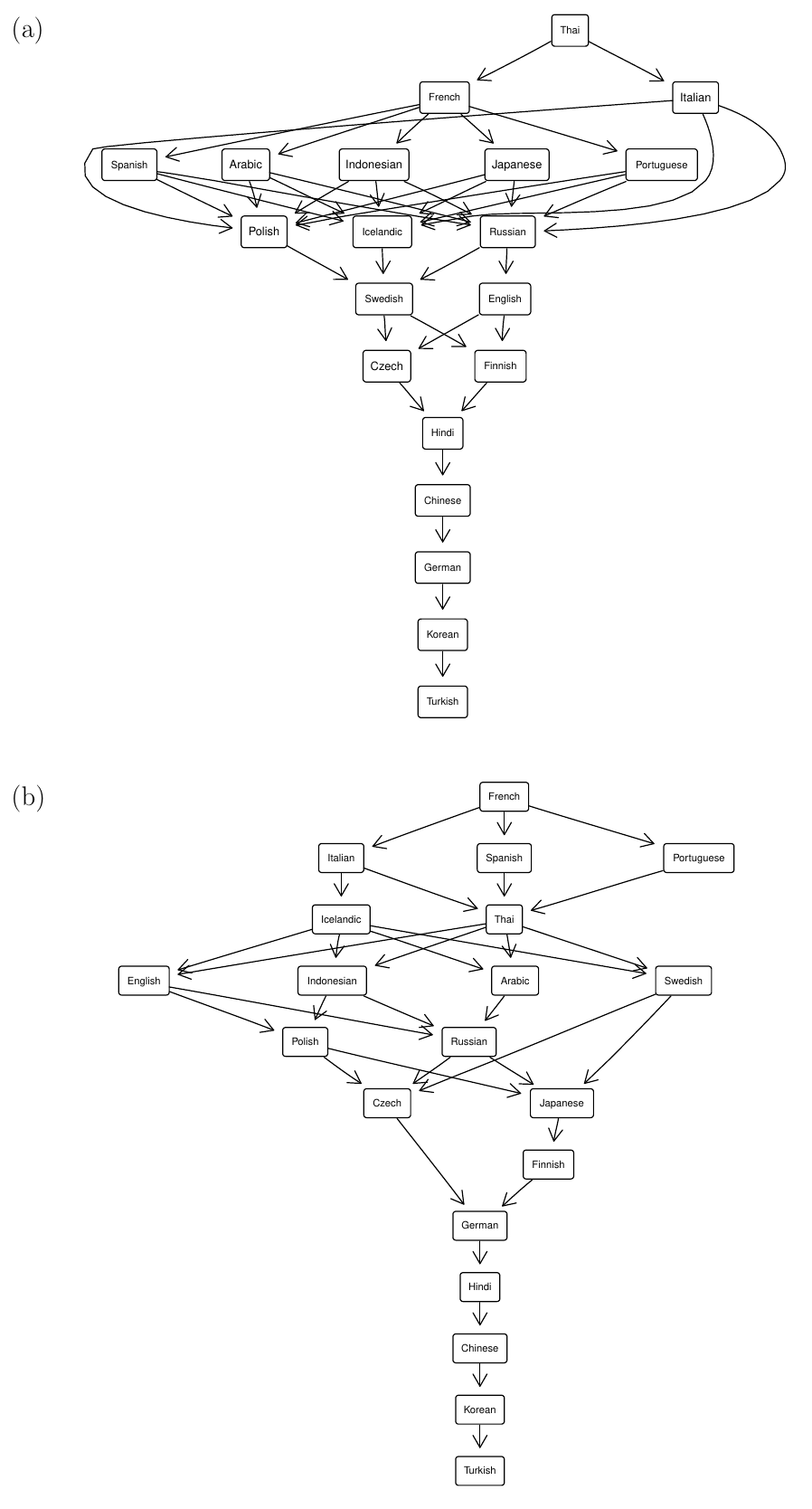}
\caption{\label{Hasse_diagrams_without_Holm_correction_figure} A Hasse diagram where an arrow pointing from one language to another indicates that the former is significantly more optimized than the latter according to $\Omega$. Here the correction for multiple comparisons has been omitted. (a) PUD. (b) PSUD. }
\end{figure}

\subsection{Minimum sentence length}

All the analyses are restricted to $n \geq 3$. When $n \in \{1, 2\}$, $\Omega$ is undefined because $D = D_{rla} = D_{min}$ and then $\Omega = 0/0$.
The problem resembles that of $C_i$, the clustering coefficient of vertex $i$, that is also 0/0 when $k_i$, the degree of $i$, satisfies $k_i < 2$ \cite{Kaiser2008a}.
The average clustering of a graph can be defined as
\begin{equation*}
C = \frac{1}{n} \sum_{i = 1}^n C_i,
\end{equation*} 
where $C_i = \gamma$ is $k_i < 2$. $\gamma = 0$ is the common choice but $\gamma = 1$ has also been used \cite{Brandes2005a}. \cite{Barmpoutis2010a} proposed a hybrid solution: $C_i = 0$ when for isolated vertices ($k_i = 0$) and $C_i = 1$ for leaves ($k_i = 1$).
Let $n_k$ be the number of vertices of degree $k$. 
Alternatively, one can define \cite{Brandes2005a,Kaiser2008a}
\begin{equation*}
C' = \frac{1}{n'} \sum_{k_i \geq 2} C_i,
\end{equation*} 
where $n' = n - n_1 + n_2$ is the number of vertices where $k_i \geq 2$. One has that \cite{Kaiser2008a}
\begin{equation*}
C' = \frac{1}{1 - \theta} C,
\end{equation*}
where $\theta$ is the fraction vertices where $C_i$ gives $0/0$, i.e. $\theta = 1 - n'/n$. 

Let $\Omega_i$ and $n_i$ be value of $\Omega$ and the sentence length of the $i$-th sentence from a sample of $N$ sentences. Let $N_n$ be the number of sentences of length $n$. Then  
\begin{equation}
\optimalityL = \frac{1}{N'} \sum_{n_i \geq 3} \Omega_i,
\label{average_omega_equation}
\end{equation} 
where $N' = N - N_1 - N_2$.
Therefore, $\optimalityL$ is the counterpart of $C'$, because $\optimalityL$ is the average $\Omega$ restricted to sentence lengths where $\Omega$ is defined. The counterpart of $C$ would be a variant of $\optimalityL$ including sentences of length $n < 3$, i.e. 
\begin{equation}
\optimalityL_{n \geq 1} = \frac{1}{N} \left[N_1 \gamma_1  + N_2 \gamma_2 + \sum_{n_i \geq 3} \Omega_i\right],
\label{average_omega_all_equation}
\end{equation}  
where $\gamma_\nu$ is the value of $\Omega_i$ when $n_i = \nu$.  
Inserting  
\begin{equation*}
N' \optimalityL = \sum_{n_i \geq 3} \Omega_i,
\end{equation*} 
into Eq. \ref{average_omega_all_equation} one obtains
\begin{equation}
\optimalityL_{n \geq 1} = (1 - \theta)\optimalityL + \frac{1}{N}(N_1 \gamma_1 + N_2 \gamma_2),
\label{advanced_average_omega_all_equation}
\end{equation}
where 
\begin{equation*}
\theta = \frac{N_1 + N_2}{N}
\end{equation*}
is the proportion of sentences where $n_i < 3$.
If $\gamma_1 = \gamma_2 = \gamma$, 
\begin{eqnarray}
\optimalityL_{n \geq 1} & = & (1 - \theta)\optimalityL + \theta \gamma.
\label{advanced_average_omega_all_convention_equation}
\end{eqnarray}
Including sentences with $n < 3$ is a potential source of arbitrariness. Choosing the value of $\gamma_1$ and $\gamma_2$ is arbitrary. One could reasonably argue that $\gamma_1 = \gamma_2 = 1$ because $D = D_{min}$ when $n < 3$. But one could also argue that $\gamma_1 = \gamma_2 = 0$ because $D = D_{rla}$ when $n < 3$. Whether sentences with $n=1$ should be included is another source of arbitrariness. One may argue that $\optimalityL$ should not be replaced by $\optimalityL_{n \geq 1}$ but rather by $\optimalityL_{n \geq 2}$.

Having in mind such potential limitations, we aim to analyze the consequences of replacing $\optimalityL$ by $\optimalityL_{n \geq 1}$ (or $\optimalityL_{n \geq 2}$). 
First, the replacement will not change the $p$-values of the test of significance of $\optimalityL$ because $\optimalityL_{n \geq 1}$ is a linear function of $\optimalityL$ (Eq. \ref{advanced_average_omega_all_equation}) and both the intercept and the slope are constant (the shuffling of the sentences preserves the distribution of sentence lengths as well as the tree of every sentence). Therefore, the replacement will not alter the conclusions about the languages where $\optimalityL$ is significantly large (or low), independently from the true values of $N_1$ and $N_2$ and also independently from the choice of the $\gamma_1$ and $\gamma_2$.
Second, the conclusions about about the presence of anti-DDm effects will not change because $\optimalityn$ is not affected for $n \geq 3$ and $\optimalityn$ is constant for $n < 3$ ($\optimalityn = \gamma_n$ and then both the left and right $p$-values will be 1).
Third, the replacement may change the value of the optimality score in languages where $N_1 + N_2 > 0$ (Eq. \ref{advanced_average_omega_all_equation}). Table \ref{short_sentences_table} indicates that $\theta > 0$ in the vast majority of languages but $\theta$ is a small number. Then Eq. \ref{advanced_average_omega_all_convention_equation} indicates that the convention $\gamma_1 = \gamma_2 = 0$ or $\gamma_1 = \gamma_2 = 1$ would imply a little difference between $\optimalityL$ and $\optimalityL_{n \geq 1}$. Notice that the situation is totally different with respect to $C$ versus $C'$ because $\theta$ is large in real networks \cite{Kaiser2008a}. 
Third, a sensitive issue is the ranking of languages and the Hasse diagram, where differences in $\optimalityL$ between languages are often small (e.g. \ref{ranking_PSUD_figure}). We believe that the decision of restricting the analyses to $n \geq 3$ with reparallelization (see above) is a reasonable choice because $\theta$ is particularly small in PUD and PSUD and the comparison of languages does not depend on the arbitrary decisions reviewed above.

\begin{table}
\centering
\caption[???]{\label{short_sentences_table} Summary of the statistics of $\theta$, the proportion of sentences of length $n < 3$ of a language, in the different datasets: the number of languages in the dataset, the number of languages where $\theta >0$ as well as the median, the average and the maximum value of $\theta$. For each language, $\theta$ is computed after the postprocessing of the treebanks described above but before the reparallelization process of PUD and PSUD. }
\begin{ruledtabular}
\begin{tabular}{llllll}
Dataset & Languages & $\theta >0$ & Median & Mean & Maximum \\
\hline
UD & 92 & 86 & 0.02369 & 0.03441 & 0.15813 \\
SUD & 92 & 86 & 0.02369 & 0.03441 & 0.15813 \\
Stanford & 30 & 29 & 0.02923 & 0.05069 & 0.44851 \\
Prague & 30 & 29 & 0.03023 & 0.05141 & 0.44844 \\
PUD & 20 & 14 & 0.001 & 0.00105 & 0.003 \\
PSUD & 20 & 14 & 0.001 & 0.00105 & 0.003 \\

\end{tabular}
\end{ruledtabular}
\end{table}

\section{Results}

\label{results_appendix}

Figs. \ref{heat_map_SUD_figure}, \ref{heat_map_Prague_figure} and \ref{heat_map_Stanford_figure} show the tendency of $\optimalityn$ to increase as sentence length increases in the SUD, Prague and Stanford datasets. 

\begin{figure*}
\centering
\includegraphics[width=\linewidth]{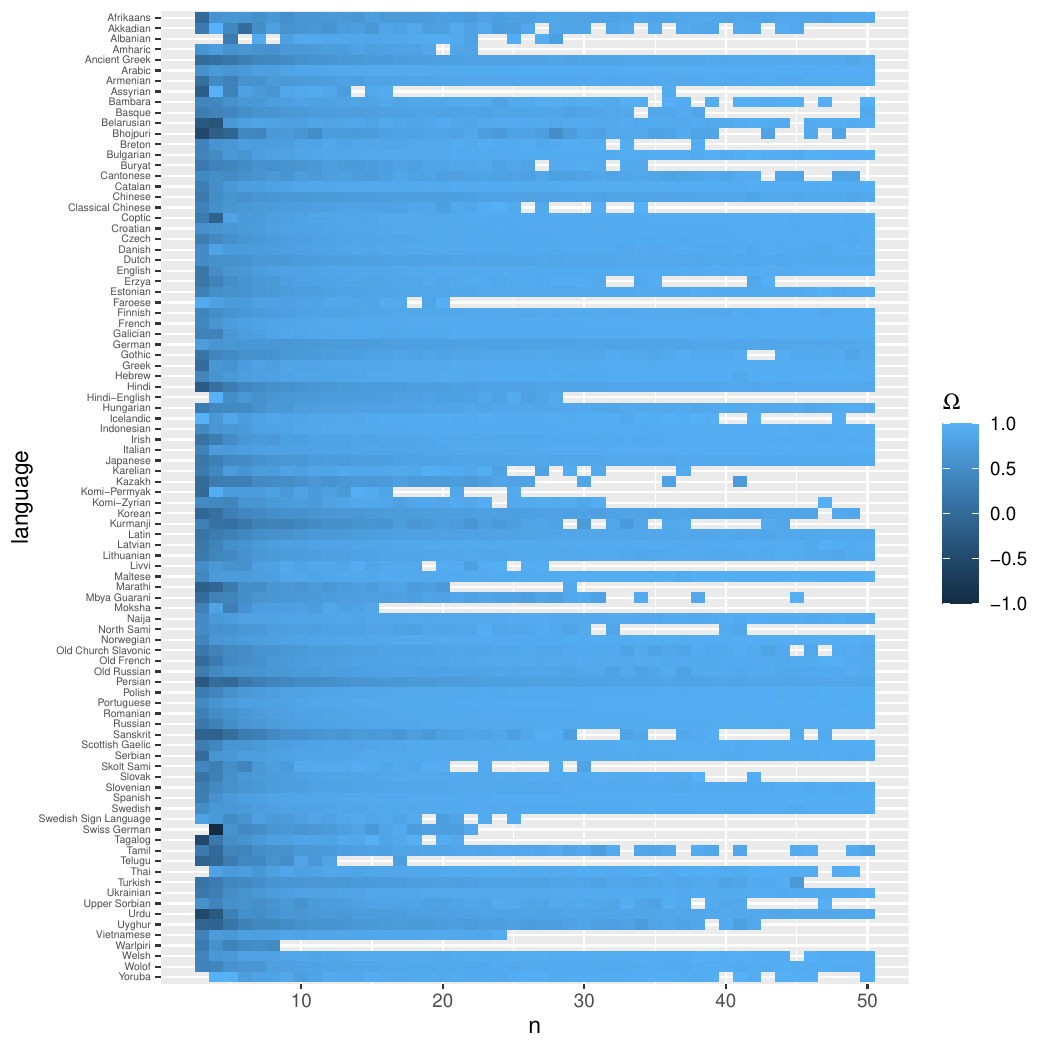}
\caption{\label{heat_map_SUD_figure} $\optimalityn$, the average value of $\Omega$ in sentences of length $n$ of a language in SUD for $3 \leq n \leq 50$. Sentences longer than 50 words are excluded to ease visualization. }
\end{figure*}

\begin{figure}
\centering
\includegraphics[width=\linewidth]{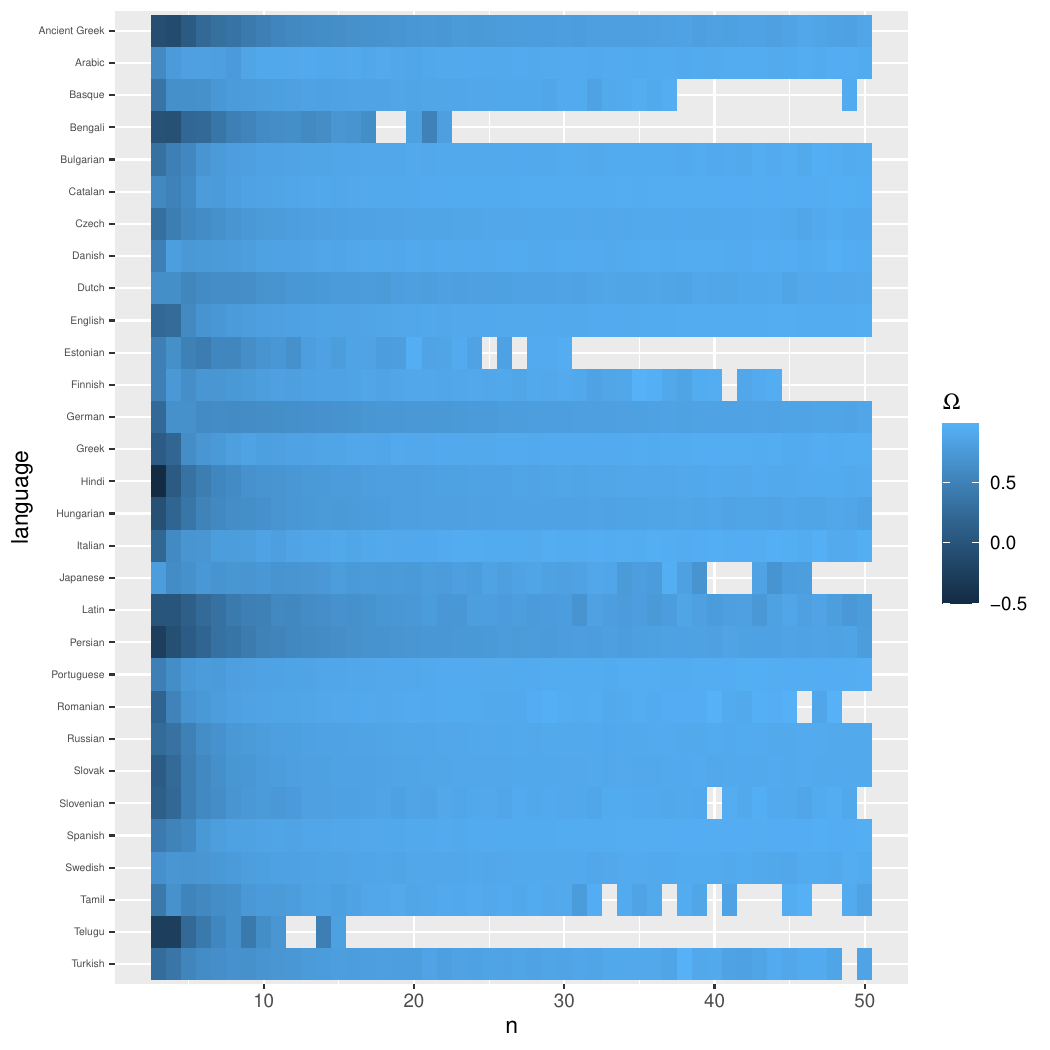}
\caption{\label{heat_map_Prague_figure} The same as in Fig. \ref{heat_map_SUD_figure} for the Prague dataset. }
\end{figure}

\begin{figure}
\centering
\includegraphics[width=\linewidth]{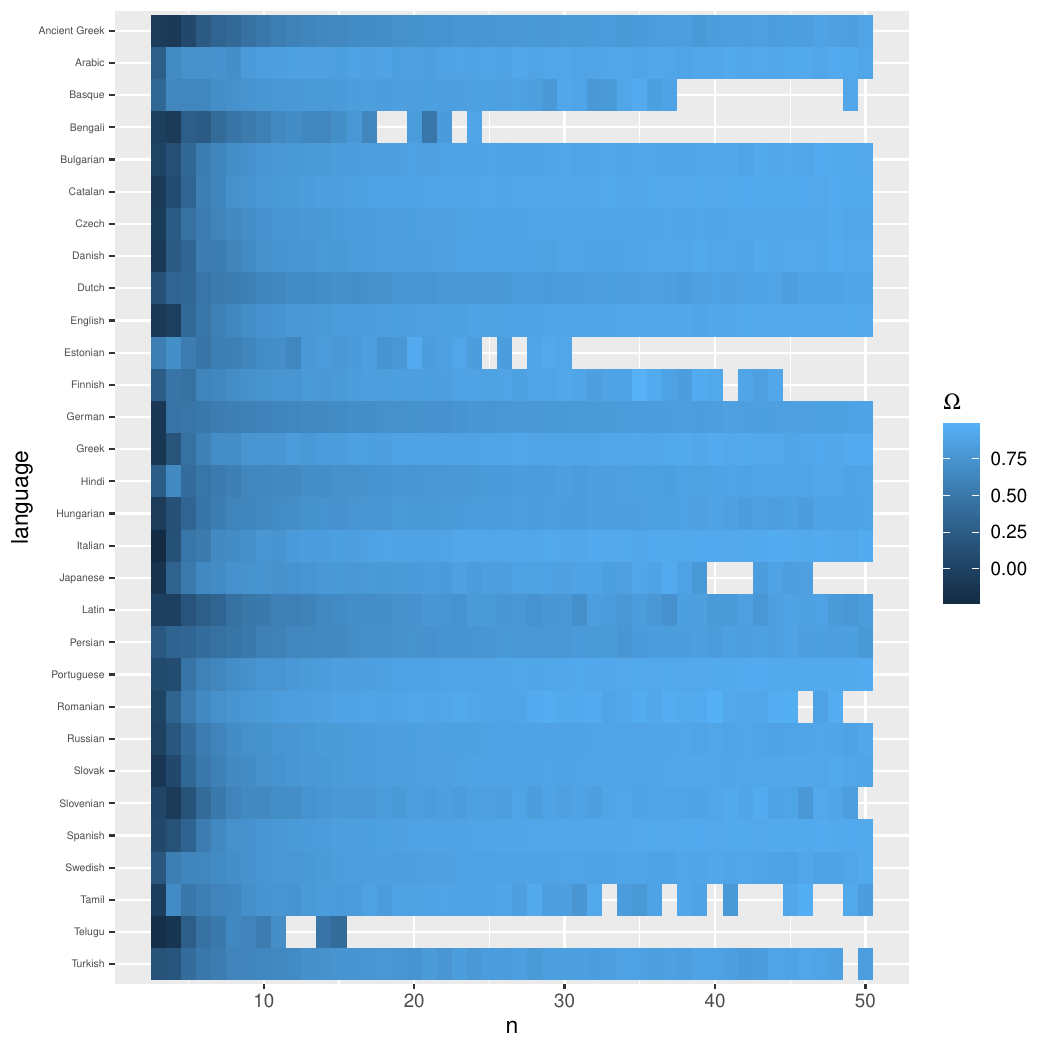}
\caption{\label{heat_map_Stanford_figure} The same as in Fig. \ref{heat_map_Stanford_figure} for the Stanford dataset. }
\end{figure}

\begin{table*}
\centering
\caption{\label{patterns_in_optimality_D_table} The significance of $\optimalityLD$. $l_0$ is the number of languages available from the dataset, 
$f_H$, is the number of languages where $\optimalityLD$ is significantly {\em small} after applying Holm's correction to the languages counted in $l_0$, and Exceptions is the number of the languages counted in $l_0$ where $\optimalityLD$ is not significantly {\em small}. The number attached to the language name indicates the corrected $p$-value.}
\begin{ruledtabular}
\begin{tabular}{lrrrrrrp{4in}}
Dataset & $l_0$ & $f_H$ & Exceptions & Family & Languages \\
\hline
UD & 92 & 91  & 1  & Pama-Nyungan (1) & Warlpiri$^{0.2}$ \\
SUD & 92 & 92  & --- & --- & --- \\
Prague & 30 & 30  & --- & --- & --- \\
Stanford & 30 & 30  & --- & --- & --- \\

\end{tabular}
\end{ruledtabular}
\end{table*}

In Section \ref{results_section}, we report that $\optimalityL$ is significantly large in the overwhelming majority of languages. Table \ref{patterns_in_optimality_D_table} indicates that $\optimalityLD$ is significantly small in all languages except one case: Warlpiri following the UD annotation criteria. The same conclusion is reached for $\optimalityLmeanD$, $\optimalityLDelta$, $\optimalityLGamma$ and $\optimalityLDz$ and thus the corresponding tables are omitted.

In Section \ref{results_section}, we report that the correlation between $\optimalityn$ and $n$ is significantly large (and thus positive) in the overwhelming majority of languages. As in the other (preexisting) scores a lower value indicates higher optimization, one would expect that the correlation between each of the other scores and $n$ was significantly small (and thus negative) for consistency with $\Omega$. However, we find that only $D_z$ meets that expectation. For $D_z$, correlations for all languages are significantly small (even after controlling for multiple comparisons). For the other preexisting scores, the correlation between the score and $n$ is never significantly small after controlling for multiple comparisons. 
Notice that $\optimalitynD$ and $\optimalitynmeanD$ are equivalent for testing the correlation with $n$: the Kendall $\tau$ correlation will be the same for both as $\meanD = D/(n-1)$.

From a complementary perspective, the correlation between $n$ and $\optimalitynD$, $\optimalitynmeanD$, $\optimalitynDelta$ is always significantly large (and thus positive) in all languages after controlling for multiple comparisons. 
In contrast, the positive correlation between $n$ and $\optimalitynGamma$ is weaker (Table \ref{correlation_with_sentence_length_Gamma_greater_table}). 
The correlation between $n$ and $\optimalitynDz$ is never significantly large as expected.

\begin{table*}
\centering
\caption{\label{correlation_with_sentence_length_Gamma_greater_table} When the Kendall $\tau$ correlation between $\optimalitynGamma$ and sentence length $n$ is significantly {\em large}. $l_0$ is the number of languages available from the collection, 
$f_H$, is the number of languages where the correlation is significantly {\em large} after applying Holm's correction to the languages counted in $l_0$, and Exceptions is the number of the languages counted in $l_0$ where the correlation is not significantly {\em large}. The number attached to the language name indicates the corrected $p$-value. }
\begin{ruledtabular}
\begin{tabular}{lrrrrp{4in}}
Collection & $l_0$ & $f_H$ & Exceptions & Family & Languages \\
\hline
UD & 92 & 62  & 30  & Afro-Asiatic (3) & Assyrian$^{0.4}$ Akkadian$^{1}$ Amharic$^{0.8}$ \\
& & & & Austronesian (1) & Tagalog$^{1}$ \\
& & & & Dravidian (1) & Telugu$^{1}$ \\
& & & & Indo-European (12) & Afrikaans$^{0.2}$ Albanian$^{0.8}$ Breton$^{1}$ Welsh$^{1}$ Faroese$^{1}$ Gothic$^{1}$ Ancient Greek$^{1}$ Swiss German$^{1}$ Upper Sorbian$^{0.2}$ Icelandic$^{1}$ Sanskrit$^{0.4}$ Slovak$^{0.8}$ \\
& & & & Mande (1) & Bambara$^{1}$ \\
& & & & Mongolic (1) & Buryat$^{1}$ \\
& & & & Niger-Congo (1) & Yoruba$^{1}$ \\
& & & & Pama-Nyungan (1) & Warlpiri$^{1}$ \\
& & & & Sign Language (1) & Swedish Sign Language$^{0.1}$ \\
& & & & Tai-Kadai (1) & Thai$^{0.3}$ \\
& & & & Uralic (7) & Komi-Permyak$^{0.5}$ Komi-Zyrian$^{1}$ Karelian$^{0.1}$ Moksha$^{0.8}$ Erzya$^{0.1}$ Livvi$^{1}$ Skolt Sami$^{0.1}$ \\
SUD & 92 & 54  & 38  & Afro-Asiatic (2) & Assyrian$^{1}$ Akkadian$^{1}$ \\
& & & & Austronesian (1) & Tagalog$^{1}$ \\
& & & & Dravidian (2) & Tamil$^{0.1}$ Telugu$^{1}$ \\
& & & & Indo-European (18) & Albanian$^{1}$ Belarusian$^{0.2}$ Breton$^{1}$ Czech$^{1}$ Welsh$^{0.2}$ Faroese$^{0.5}$ Galician$^{0.2}$ Gothic$^{1}$ Ancient Greek$^{1}$ Swiss German$^{0.3}$ Upper Sorbian$^{0.7}$ Icelandic$^{0.6}$ Italian$^{1}$ Kurmanji$^{1}$ Persian$^{1}$ Polish$^{0.1}$ Sanskrit$^{1}$ Slovak$^{0.2}$ \\
& & & & Mande (1) & Bambara$^{1}$ \\
& & & & Mongolic (1) & Buryat$^{1}$ \\
& & & & Niger-Congo (2) & Wolof$^{0.1}$ Yoruba$^{1}$ \\
& & & & Pama-Nyungan (1) & Warlpiri$^{0.3}$ \\
& & & & Sign Language (1) & Swedish Sign Language$^{1}$ \\
& & & & Sino-Tibetan (1) & Classical Chinese$^{0.3}$ \\
& & & & Tai-Kadai (1) & Thai$^{1}$ \\
& & & & Uralic (7) & Komi-Permyak$^{1}$ Komi-Zyrian$^{1}$ Karelian$^{0.5}$ Moksha$^{0.7}$ Erzya$^{1}$ Livvi$^{1}$ Skolt Sami$^{1}$ \\
Prague & 30 & 28  & 2  & Dravidian (1) & Telugu$^{0.2}$ \\
& & & & Uralic (1) & Estonian$^{0.2}$ \\
Stanford & 30 & 30  & --- & --- & --- \\

\end{tabular}
\end{ruledtabular}
\end{table*}


In Section \ref{results_section}, we report that $\optimalityn$ is significantly small in certain languages when $n=3$ or $n=4$. Tables \ref{anti_dependency_distance_D_table} and \ref{anti_dependency_distance_D_z_table} indicate that $\optimalitynD$, $\optimalitynmeanD$, $\optimalitynDelta$, $\optimalitynGamma$ and $\optimalitynDz$ are consistently significantly large also in certain languages. {\em A priori}, $\optimalitynD$ and $\optimalitynmeanD$ are equivalent for testing significance (recall $\meanD = D/(n-1)$) and thus only $\optimalitynD$ is shown in the tables.
In addition, the results for $\Delta$ and $\Gamma$ are the same as those of $D$ and thus not shown either. 

Interestingly, the number of languages with evidence of anti-dependency distance effects based on $\Delta$, $\Gamma$ (Table \ref{anti_dependency_distance_D_table}) is smaller than that of $D_z$ (Table \ref{anti_dependency_distance_D_z_table}) which is in turn smaller than that of  $\Omega$ (Table \ref{anti_dependency_distance_table}).
The results indicate that all the other scores have a lower capacity to detect anti-dependency distance minimization effects than $\Omega$.
  
\begin{table*}
\centering
\caption{\label{anti_dependency_distance_D_table} Anti dependency distance minimization in short sentences (significantly large $\optimalitynD$ for small $n$). 
$n$ is the sentence length (in words). $l_0$ is the number of languages available from the dataset, 
and $f_H$, is the number of languages where $\optimalitynD$ is significantly small after applying Holm's correction to the languages counted in $l_0$. 
The number attached to the language name indicates the magnitude of the corrected $p$-value. It is obtained after rounding $-\log_{10}(p\mathrm{-value})$ to leave just one decimal digit. Then the significance level $\alpha = 0.05$ gives 1.3.
}
\begin{ruledtabular}
\begin{tabular}{llrrrp{4in}}
$n$ & Dataset & $l_0$ & $f_H$ & Family & Languages \\
\hline
3  & UD & 87 & 9  & Austronesian (1) & Tagalog$^{1.5}$ \\
 & & & & Dravidian (1) & Telugu$^{1.6}$ \\
 & & & & Indo-European (6) & German$^{3.1}$ English$^{3.1}$ Old French$^{3.1}$ Norwegian$^{2.5}$ Sanskrit$^{2.4}$ Slovak$^{3.1}$ \\
 & & & & Japanese (1) & Japanese$^{3.1}$ \\
   & SUD & 87 & 2  & Austronesian (1) & Tagalog$^{1.5}$ \\
 & & & & Indo-European (1) & Persian$^{2}$ \\
   & Prague & 30 & 2  & Dravidian (1) & Telugu$^{3.5}$ \\
 & & & & Indo-European (1) & Persian$^{3.5}$ \\
   & Stanford & 30 & 5  & Dravidian (1) & Telugu$^{3.5}$ \\
 & & & & Indo-European (3) & Czech$^{3.5}$ German$^{3.5}$ Slovak$^{3.5}$ \\
 & & & & Japanese (1) & Japanese$^{3.5}$ \\
4  & UD & 91 & 1  & Pama-Nyungan (1) & Warlpiri$^{1.9}$ \\
   & SUD & 91 & 0  & --- & --- \\
   & Prague & 30 & 0  & --- & --- \\
   & Stanford & 30 & 0  & --- & --- \\
\end{tabular}
\end{ruledtabular}
\end{table*}

\begin{table*}
\centering
\caption{\label{anti_dependency_distance_D_z_table} Anti dependency distance in short sentences (significantly large $\optimalitynDz$ for small $n$). The format is the same as in Table \ref{anti_dependency_distance_D_table}.
}
\begin{ruledtabular}
\begin{tabular}{llrrrp{4in}}
$n$ & Dataset & $l_0$ & $f_H$ & Family & Languages \\
\hline
3  & UD & 87 & 10  & Austronesian (1) & Tagalog$^{1.5}$ \\
 & & & & Dravidian (1) & Telugu$^{1.7}$ \\
 & & & & Indo-European (7) & Breton$^{1.4}$ German$^{3.1}$ English$^{3.1}$ Old French$^{3.1}$ Norwegian$^{2.6}$ Sanskrit$^{2.5}$ Slovak$^{3.1}$ \\
 & & & & Japanese (1) & Japanese$^{3.1}$ \\
   & SUD & 87 & 2  & Austronesian (1) & Tagalog$^{1.5}$ \\
 & & & & Indo-European (1) & Persian$^{2.4}$ \\
   & Prague & 30 & 2  & Dravidian (1) & Telugu$^{3.5}$ \\
 & & & & Indo-European (1) & Persian$^{3.5}$ \\
   & Stanford & 30 & 5  & Dravidian (1) & Telugu$^{3.5}$ \\
 & & & & Indo-European (3) & Czech$^{3.5}$ German$^{3.5}$ Slovak$^{3.5}$ \\
 & & & & Japanese (1) & Japanese$^{3.5}$ \\
4  & UD & 91 & 1  & Pama-Nyungan (1) & Warlpiri$^{1.9}$ \\
   & SUD & 91 & 0  & --- & --- \\
   & Prague & 30 & 0  & --- & --- \\
   & Stanford & 30 & 0  & --- & --- \\

\end{tabular}
\end{ruledtabular}
\end{table*}

Fig. \ref{ranking_PSUD_figure} shows the ranking of languages that is obtained when universal dependencies are replaced by surface-syntactic universal dependencies \cite{sud} in PUD.
\begin{figure}
\centering
\includegraphics[width=\linewidth]{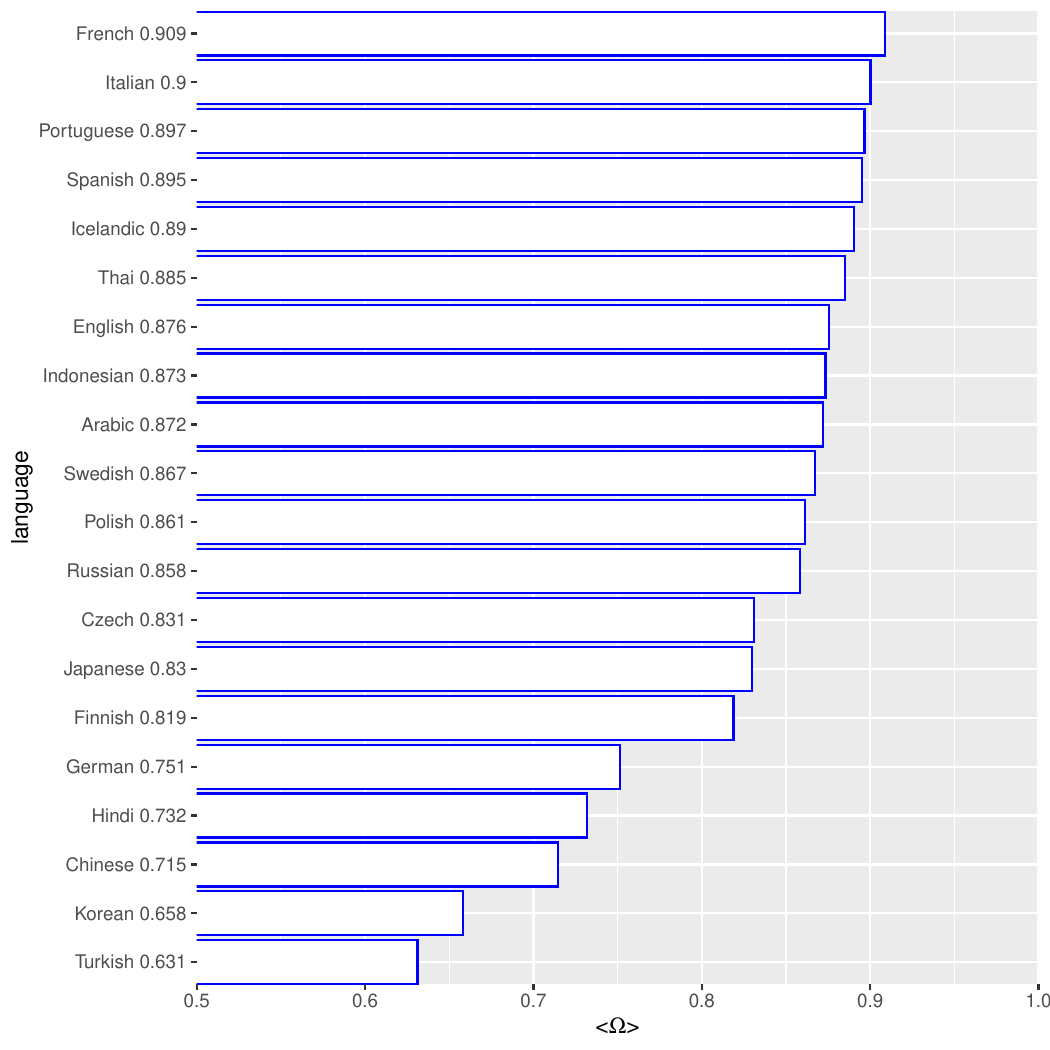}
\caption{\label{ranking_PSUD_figure} The ranking of languages in PSUD according to $\Omega$. The number attached to each language indicates the value of $\optimalityL$. }
\end{figure}

Fig. \ref{rankings_content_words_figure} shows the rankings of languages that result from removing all function words. 
\begin{figure}
\centering
\includegraphics[width=\linewidth]{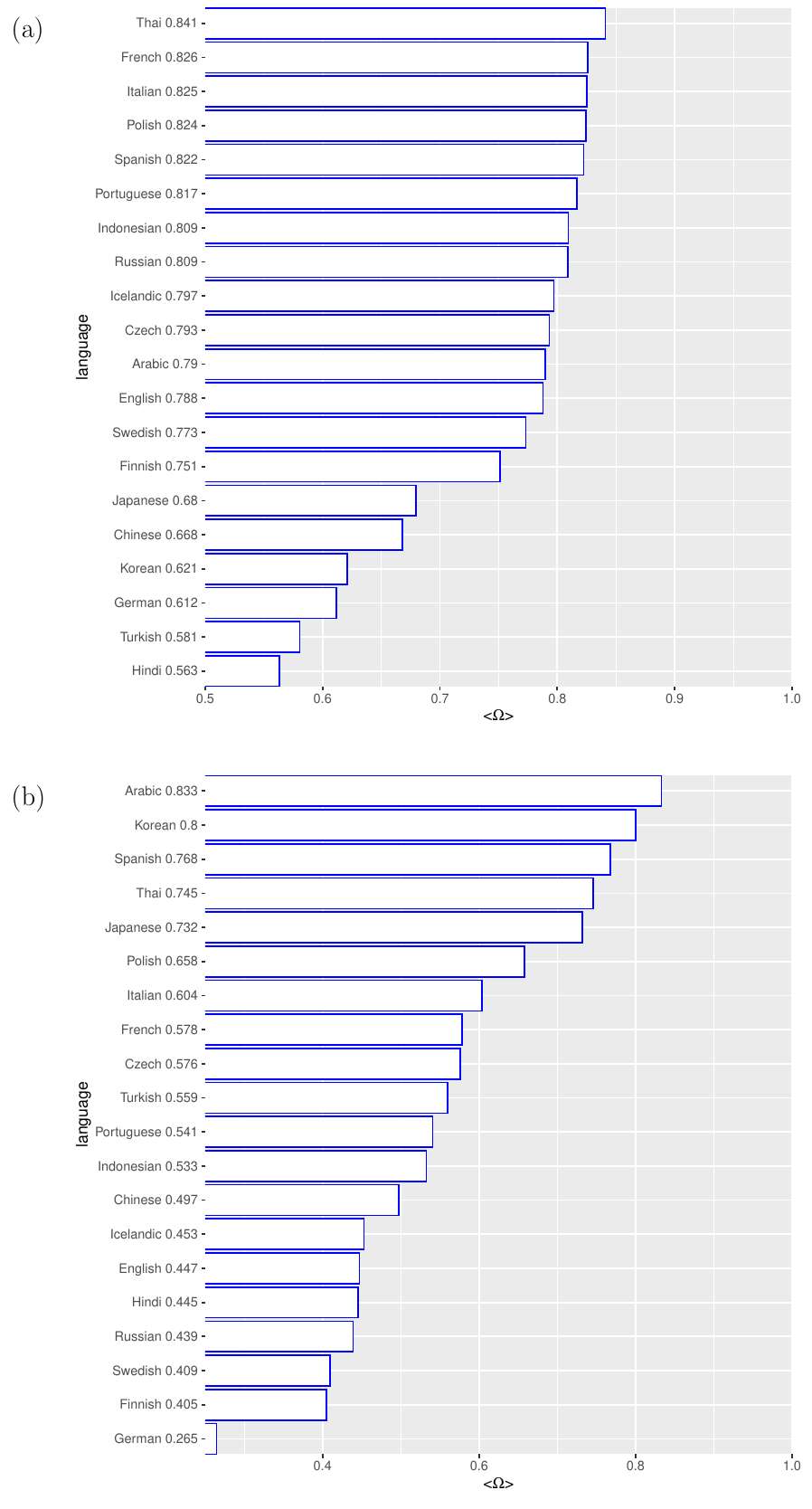}
\caption{\label{rankings_content_words_figure} The ranking of languages according to $\Omega$ when function words are removed. The number attached to each language indicates the value of $\optimalityL$. (a) PUD (b) PSUD. }
\end{figure}
With respect to Figs. \ref{ranking_PUD_figure} and \ref{ranking_PSUD_figure}, notice that $\optimalityL$ drops in all languages except one (Czech for PUD and Korean for PSUD). The removal of function words has a stronger impact on PSUD, where $8$ languages end up with $\optimalityL < 1/2$. 

It is important to bear in mind the impact of removing function words. Because of the preprocessing and the reparallelization method, both PUD and PSUD end up having 995 sentences each, while the removal of function words leaves 835 sentences in PUD and only 13 sentences in PSUD. As explained in Appendix~\ref{materials_appendix}, the latter is mainly because SUD trees often have function words acting as syntactic roots, so that removing them breaks the tree into disconnected fragments that can no longer be analyzed.
This means that the removal of function words in PSUD implies serious statistical problems. On the one hand, a higher variance in the estimation of the average $\Omega$ due to a reduction of the sample size of about two orders of magnitude compared to the other conditions. On the other hand, a potential increase in the bias arising from the complex interaction among annotation criteria, the preprocessing of syntactic structures and the reparallelization method. For example, any sentence headed by an auxiliary is excluded for the reason explained above, so some types of syntactic structures (e.g., passive voice) cannot be represented at all in the reduced sample. For all these reasons, we believe that the analysis without function words on PSUD cannot be considered reliable, even though we include it for completeness.

The statistical problems arising when removing function words in PSUD can be understood further by considering the Hasse diagrams that result from removing function words. 
That of PUD (Fig. \ref{Hasse_diagram_PUD_content_words_figure}) contrasts with the one that is obtained for PSUD, which has no edges, and for that reason it is not shown. The absence of edges is due to the fact that none of the significant differences in $\Omega$ between pairs of languages survives after applying Holm's correction, consistent with the dramatic loss of sentences in PSUD when function words are removed.

\begin{figure}
\centering
\includegraphics[width=\linewidth]{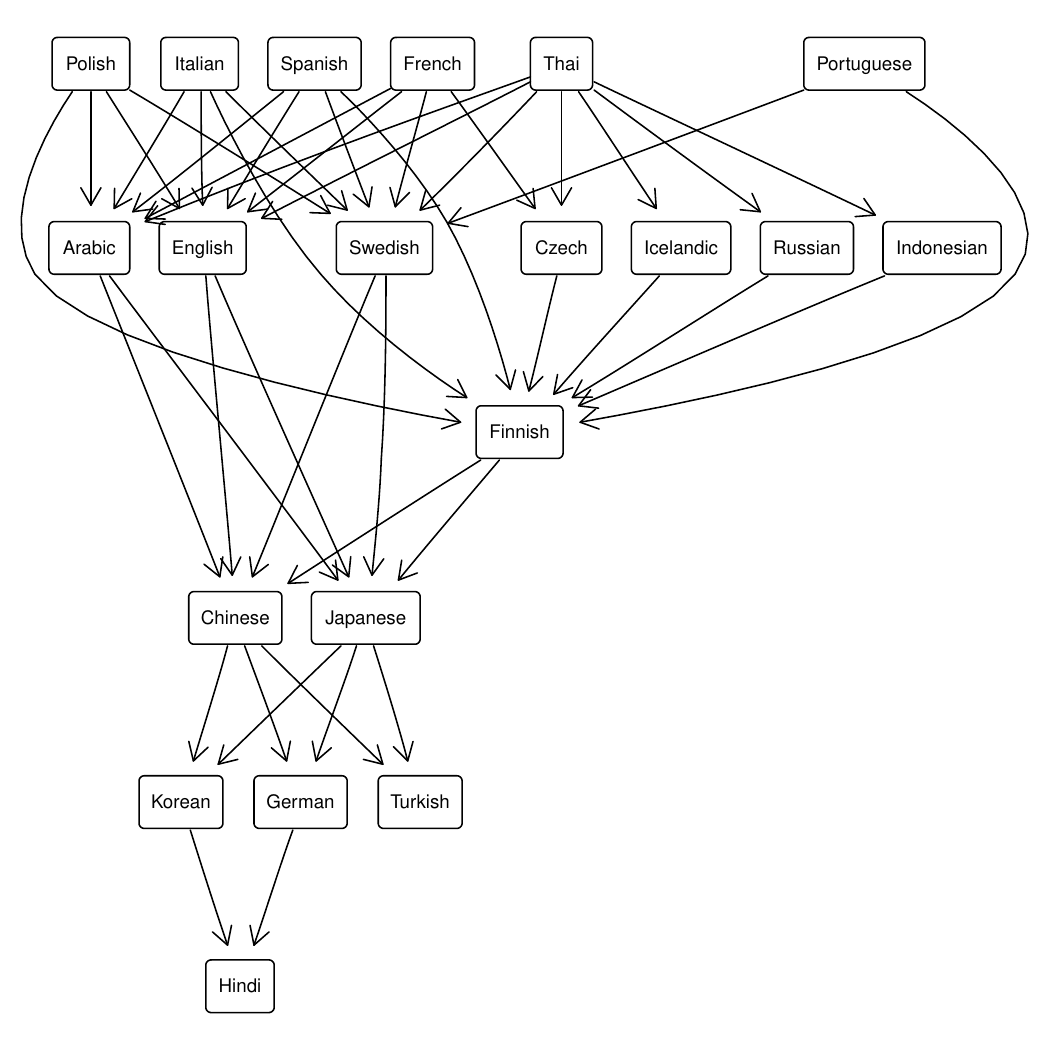}
\caption{\label{Hasse_diagram_PUD_content_words_figure} The Hasse diagram (based on $\Omega$) that is obtained when function words are removed from PUD. }
\end{figure}

\clearpage

\bibliography{../../biblio_dt/main,../../biblio_dt/twoplanaracl,../../biblio_dt/Ramon,../../biblio_dt/optimization_in_biology,../../biblio_dt/twoplanaracl_ours,../../biblio_dt/Ramon_ours}

\end{document}